\newcommand{\xmark}{\ding{55}}%
\newcommand{\cmark}{\ding{51}}%
\renewcommand{\cite}[1]{\citep{#1}}
\newtheorem{theorem}{Theorem}
\newtheorem{proposition}{Proposition}
\newcommand{\var}{\text{Var}}
\newcommand{\prob}{\mathbb{P}}
\newcommand{\reva}[1]{\textcolor{black}{#1}}
\newcommand{\revb}[1]{\textcolor{black}{#1}}
\newcommand{\revc}[1]{\textcolor{black}{#1}}
\newcommand{\revd}[1]{\textcolor{black}{#1}}
\title{VertiBench: Advancing Feature Distribution Diversity in Vertical Federated Learning Benchmarks}
\author{%
  Zhaomin Wu, Junyi Hou, Bingsheng He \\
  National University of Singapore\\
  \texttt{\{zhaomin,junyi.h,hebs\}@comp.nus.edu.sg} \\
}
\begin{document}

\maketitle

\doparttoc 
\faketableofcontents 

\vspace{-5pt}
\begin{abstract}
Vertical Federated Learning (VFL) is a crucial paradigm for training machine learning models on feature-partitioned, distributed data. However, due to privacy restrictions, few public real-world VFL datasets exist for algorithm evaluation, and these represent a limited array of feature distributions. Existing benchmarks often resort to synthetic datasets, derived from arbitrary feature splits from a global set, which only capture a subset of feature distributions, leading to inadequate algorithm performance assessment. This paper addresses these shortcomings by introducing two key factors affecting VFL performance - feature importance and feature correlation - and proposing associated evaluation metrics and dataset splitting methods. Additionally, we introduce a real VFL dataset to address the deficit in image-image VFL scenarios. Our comprehensive evaluation of cutting-edge VFL algorithms provides valuable insights for future research in the field.
\end{abstract}


\section{Introduction}\label{sec:introduction}

\revb{Federated learning~\cite{konevcny2016federated} is acknowledged for enabling model training on distributed data with enhanced privacy. In this study, we delve into the less explored vertical federated learning (VFL), where each party has a feature subset, aligning with a general definition of federated learning~\cite{li2021survey} that includes privacy-preserving collaborative learning like assisted learning~\cite{diao2022gal} and split learning~\cite{vepakomma2018split}. The VFL application, depicted in Figure~\ref{fig:vfl-pipeline}, involves an initial development phase using synthetic or real-world benchmarks, followed by deployment in actual federated environments upon validation.}



\revc{Evaluating VFL algorithms is challenging due to the inherent confidentiality of VFL data~\cite{liu2022vertical}. The scope of party imbalance and correlation in existing real VFL datasets, termed the \textit{real scope}, is limited. Datasets in the OARF benchmark~\cite{hu2022oarf}, \revd{FedAds~\cite{wei2023fedads}}, NUS-WIDE~\cite{nuswide}, and Vehicle~\cite{duarte2004vehicle}, predominantly represent scenarios where parties are balanced and exhibit weak correlations, as depicted in Figure~\ref{fig:real-dist}.}

\revb{To address the constraints inherent in the real scope, many VFL benchmarks \cite{hu2022oarf,he2020fedml,caldas2018leaf} utilize synthetic datasets. \revc{This evaluation scope, termed \textit{uniform scope}, represent the imbalance-correlation scope under an equal distribution of features among parties, either randomly or manually. The uniform scope,} though commonly adopted in VFL experiments \cite{diao2022gal,castiglia2022cvfl}, confines the evaluation to scenarios featuring balanced, strongly correlated parties according to Figure~\ref{fig:real-dist}. Another critical limitation is the misalignment between the uniform scope and real scope, underscoring the imperative for a diverse and realistic VFL benchmark.}

\begin{figure}[htpb]
    \centering
    \begin{subfigure}[b]{0.5\linewidth} 
        \includegraphics[width=\linewidth]{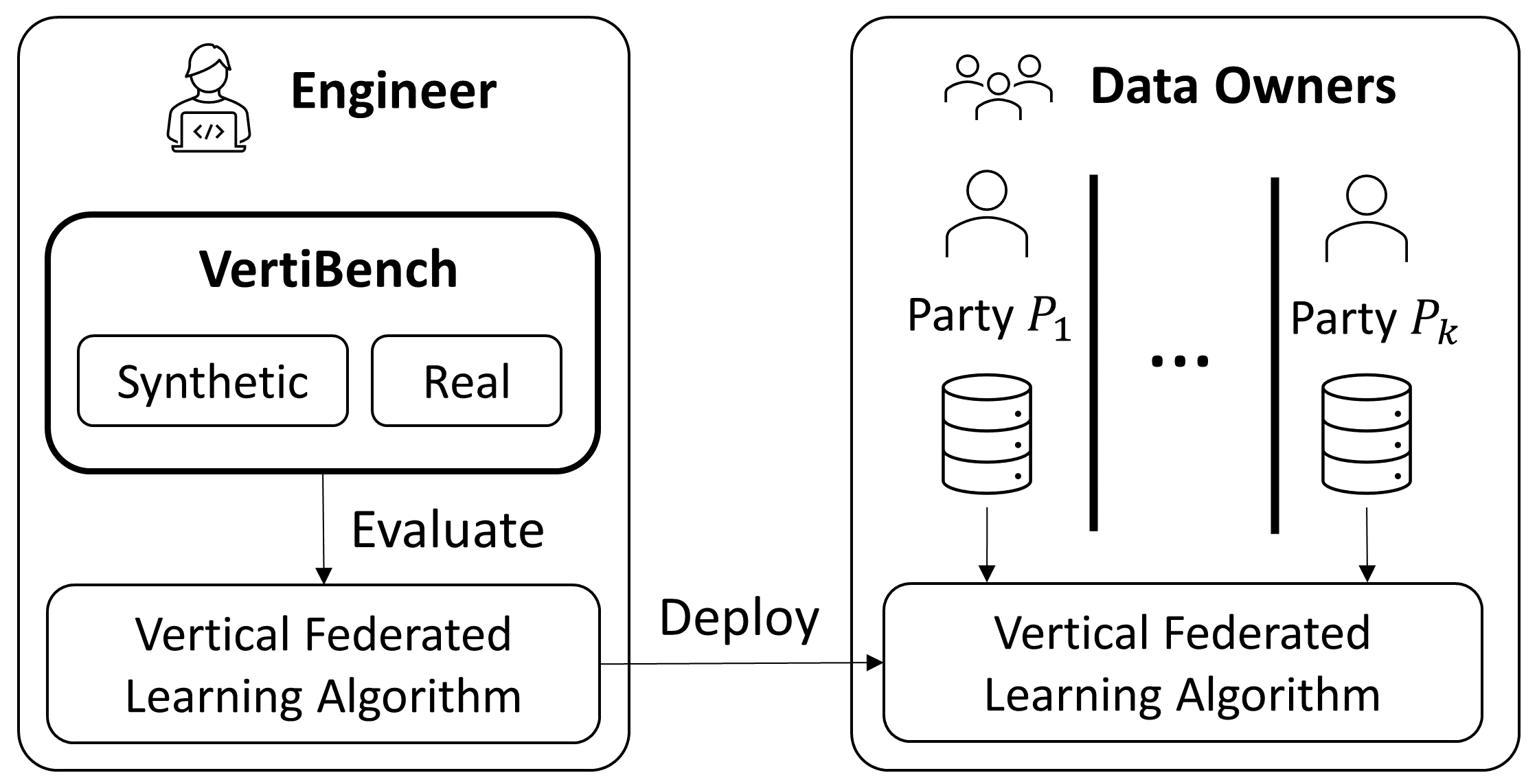}
        \caption{Pipeline of VFL}\label{fig:vfl-pipeline}
    \end{subfigure}
    \hspace{10pt}
    \begin{subfigure}[b]{0.42\linewidth}
        \includegraphics[width=\linewidth]{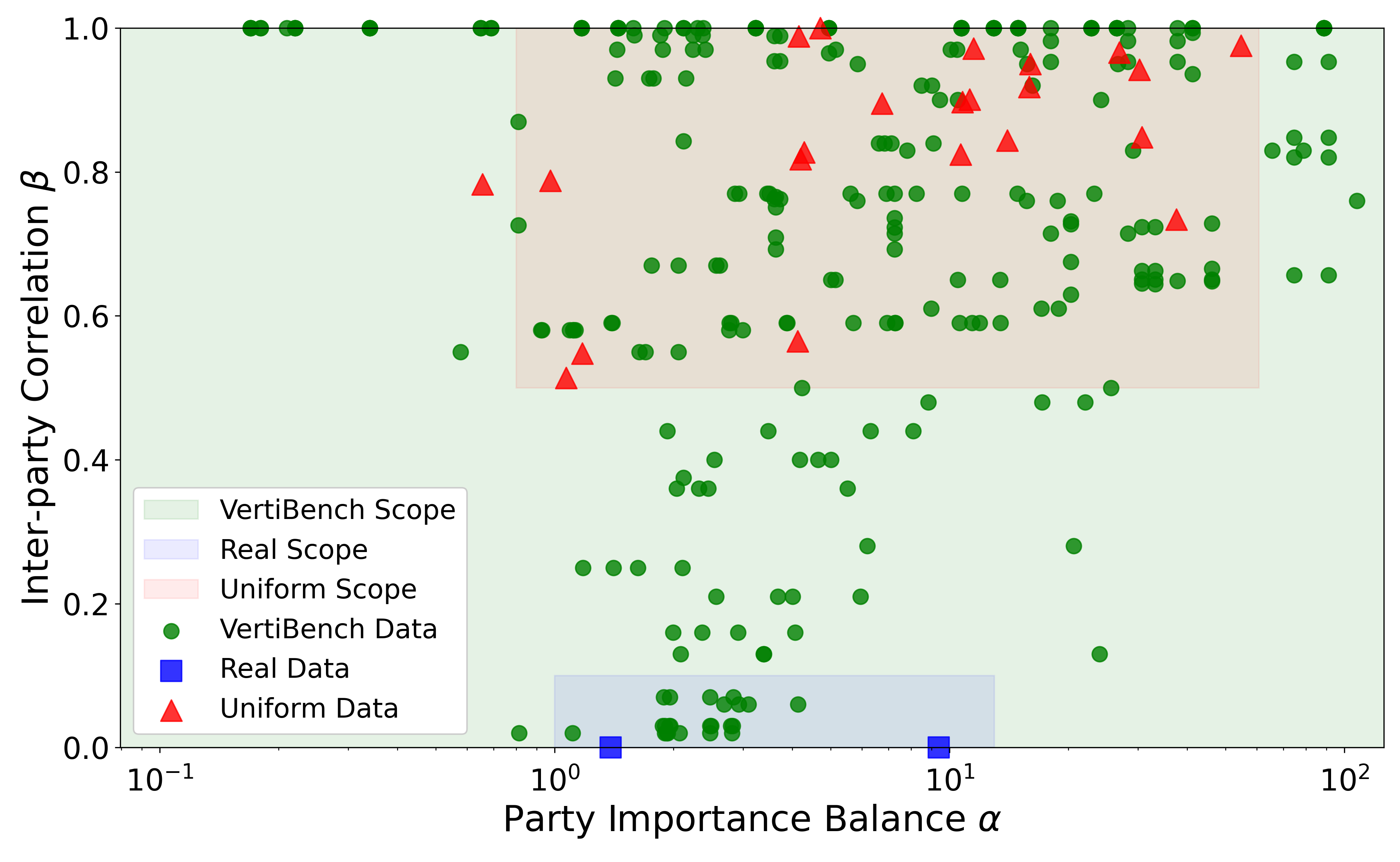}
        \caption{\revc{Estimated scope of VFL datasets}}\label{fig:real-dist}
    \end{subfigure}
    \vspace{-5pt}
    \caption{Overview of existing VFL piplines and datasets and the \revc{estimated }scope of VFL datasets }
    \label{fig:overview-vfl}
\end{figure}

\reva{Constructing a systematic synthetic VFL benchmark necessitates pinpointing the key factors affecting VFL algorithm performance. Existing synthetic benchmarks for non-i.i.d. \textit{horizontal federated learning} (HFL), such as NIID-Bench~\cite{li2022federated}, fall short for VFL due to inherent assumptions about feature space and instance significance. Specifically, while HFL benchmarks typically assume independent and uniformly significant instances, this does not hold in VFL where features exhibit intrinsic correlations and differing importances. Furthermore, HFL benchmarks posit that all parties share the same feature space, a premise misaligned with VFL's distributed feature paradigm. This delineates the unique analytical challenges inherent to synthetic VFL benchmarks.}

Given these limitations, \reva{our statistical analysis of supervised VFL tasks identifies party importance and correlation as two crucial factors influencing target probability distributions in synthetic VFL datasets derived from the same global dataset}. Accordingly, we propose \textit{VertiBench}, a comprehensive VFL benchmark featuring novel feature-splitting methods for synthetic dataset generation. \revb{VertiBench offers three primary benefits: (1) it \revc{generally} encompasses the uniform scope; (2) it effectively emulates the real scope, as evidenced by comparable performance on VertiBench-synthetic datasets; and (3) it introduces the capability to evaluate \revc{other scenarios that have not been explored in the previous studies}, e.g. imbalanced feature split, broadening the scope of VFL evaluation.}

Our primary contributions include: (1) Synthetic dataset generation methods with varied party importance and correlation, capturing a broad scope of VFL scenarios. (2) Novel real-world image-to-image VFL dataset \texttt{Satellite}. (3) Techniques to evaluate the party importance and correlation of real-world VFL datasets, enabling feature split comparison with synthetic VFL datasets. (4) Comprehensive benchmarks of \revd{mainstream} cutting-edge VFL algorithms, providing key insights. For example, we demonstrate the scalability of VFL algorithms, challenging prior assumptions about VFL scaling difficulties~\cite{hu2022oarf}, and emphasize the challenges of communication efficiency in VFL datasets across varying imbalance levels. The VertiBench source code is available on GitHub~\cite{vertibench-code}, with data splitting tools installable from PyPI~\cite{vertibench-pypi}. The pre-split dataset is accessible in \cite{vertibench-dataset}.





\section{Evaluate VFL Datasets}\label{sec:synthetic-vfl-metric}
\reva{In this section, our objective is to investigate the primary factors influencing VFL performance when generating synthetic VFL datasets from a fixed global dataset. Additionally, we explore methods to efficiently estimate these factors, guiding the subsequent feature split.}

\subsection{Factors that affect VFL performance}

Suppose there are $K$ parties. Denote the data on party $P_k$ as a random vector $\mathbf{X}_k$ ($1 \leq k \leq K$). Denote the label as a random variable $y$. A supervised learning algorithm maximizes the likelihood function where hypothesis $h$ represents models and parameters, i.e., $L(y|\mathbf{X}_K,...,\mathbf{X}_1;h)$. These supervised learning algorithms estimate the probability mass function in Eq.~\ref{eq:vfl-pmf}. The proof of Proposition~\ref{prop:pmf} is provided in Appendix~\ref{sec:proof}.
\begin{proposition}\label{prop:pmf}
    The probability mass function can be written as
    \begin{equation}\label{eq:vfl-pmf}
    \textstyle
        \log\prob(y|\mathbf{X}_K,...,\mathbf{X}_1) = \sum_{k=1}^{K}\log{\frac{\prob(y|\mathbf{X}_{k},...,\mathbf{X}_1)}{\prob(y|\mathbf{X}_{k-1},...,\mathbf{X}_1)} } + \log\prob(y)
    \end{equation}
\end{proposition}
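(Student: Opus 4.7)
The plan is to recognize the right-hand side as a telescoping sum and unpack it accordingly. First I would fix the convention for the $k=1$ term in the product/ratio, interpreting $\prob(y\mid \mathbf{X}_{0},\ldots,\mathbf{X}_1)$ as the unconditioned marginal $\prob(y)$ (i.e.\ an empty conditioning set). With this convention, each summand on the right is a difference of log-probabilities, $\log\prob(y\mid \mathbf{X}_k,\ldots,\mathbf{X}_1) - \log\prob(y\mid \mathbf{X}_{k-1},\ldots,\mathbf{X}_1)$.

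Next I would apply the telescoping identity directly: summing this difference from $k=1$ to $K$ leaves only the two endpoint terms, yielding
\[
\sum_{k=1}^{K}\log\frac{\prob(y\mid \mathbf{X}_k,\ldots,\mathbf{X}_1)}{\prob(y\mid \mathbf{X}_{k-1},\ldots,\mathbf{X}_1)} = \log\prob(y\mid \mathbf{X}_K,\ldots,\mathbf{X}_1) - \log\prob(y).
\]
Adding the $\log\prob(y)$ term in the statement cancels the second endpoint, producing exactly the left-hand side of Eq.~\ref{eq:vfl-pmf}.

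The only potentially nontrivial step is justifying the convention $\prob(y\mid\mathbf{X}_0,\ldots,\mathbf{X}_1)=\prob(y)$ at $k=1$. This is a notational issue rather than a mathematical obstacle: I would state it explicitly at the start so that the telescoping argument is well-defined, and then the remainder of the proof is a single line. No measure-theoretic subtleties or independence assumptions are needed, since all quantities are defined by conditioning on progressively larger subsets of the joint distribution of $(y,\mathbf{X}_1,\ldots,\mathbf{X}_K)$. The main observation worth emphasizing in the write-up is the interpretation of each ratio as the \emph{incremental} predictive contribution of party $P_k$'s features given those of parties $P_1,\ldots,P_{k-1}$, since this framing motivates the subsequent analysis of party importance and correlation.
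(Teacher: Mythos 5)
Your proof is correct, but it takes a genuinely different (and shorter) route than the paper. You observe that the right-hand side is a telescoping sum: once the $k=1$ denominator $\prob(y\mid\mathbf{X}_0,\ldots,\mathbf{X}_1)$ is fixed by convention to be the unconditioned $\prob(y)$, the sum collapses to $\log\prob(y\mid\mathbf{X}_K,\ldots,\mathbf{X}_1)-\log\prob(y)$ and the identity is immediate. The paper instead starts from $\prob(y\mid\mathbf{X}_K,\ldots,\mathbf{X}_1)=\prob(y,\mathbf{X}_K,\ldots,\mathbf{X}_1)/\prob(\mathbf{X}_K,\ldots,\mathbf{X}_1)$, applies the chain rule to both numerator and denominator to obtain intermediate factors $c_k=\prob(\mathbf{X}_k\mid y,\mathbf{X}_{k-1},\ldots,\mathbf{X}_1)/\prob(\mathbf{X}_k\mid\mathbf{X}_{k-1},\ldots,\mathbf{X}_1)$, and then uses Bayes' rule to convert each $c_k$ into the posterior ratio appearing in the statement. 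The two arguments are equivalent in content; yours is more economical and makes transparent that the decomposition holds for any ordering of the parties (a point the paper later invokes), while the paper's detour through the feature-likelihood ratios $c_k$ buys an additional interpretation of each summand as a class-conditional density ratio for party $k$'s features. Your explicit handling of the $k=1$ boundary convention is the one step the paper glosses over, and stating it is the right call.
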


In VFL, $\prob(y)$ is the same for all the parties. The skewness among $K$ parties is determined by $K$ ratios of distributions. Interestingly, this ratio quantifies the divergence between two marginal probability distributions of $y$ - one inclusive of $\mathbf{X}_k$ and the other exclusive of $\mathbf{X}_k$. Essentially, the ratio estimates the impact on the global distribution when the features of a single party are excluded. This can be interpreted as the \textbf{importance} of a given party. \revb{Proposition~\ref{prop:pmf} applies regardless of the order of $\mathbf{X}_1,\dots,\mathbf{X}_k$.  Shapley value, emphasizing feature independence, aids in precisely evaluating party importance in vertical federated learning, as demonstrated in~\cite{wang2019measure,han2021data}.}

In another aspect, the ratio $\frac{\prob(y|\mathbf{X}_{k},...,\mathbf{X}_1)}{\prob(y|\mathbf{X}_{k-1},...,\mathbf{X}_1)}$ is determined by the \textbf{correlation} between $\mathbf{X}_k$ and $\mathbf{X}_1,\dots,\mathbf{X}_{k-1}$. \revb{In cases where the independence assumption underlying the Shapley value is invalidated, assessing each party's impact on the global distribution becomes more accurate when based on feature correlation.}

\revd{We identify feature importance and correlation as pivotal factors influencing VFL algorithm performance. For datasets with nearly independent features, the low inter-party correlation makes correlation-based splits less meaningful, suggesting the superiority of importance-based feature splits. Conversely, in datasets with highly correlated features, assessing individual feature importance becomes impractical, making correlation-based splits more suitable due to varying inter-party correlations.}

\revd{Importance and correlation are treated as orthogonal evaluation factors applicable in distinct scenarios. While there may be an intrinsic link between them, our experiments indicate that focusing on one factor at a time yields explainable results reflective of real-world performance. As discussed in Appendix~\ref{sec:discussions}, the interplay between importance and correlation can be complex. A joint optimization for both factors might be computationally intensive and less explainable, while providing limited additional insights. The subsequent sections will introduce our approach to evaluate these two factors and generating synthetic datasets based on each factor accordingly.}

\subsection{Evaluate Party Importance}



\reva{To assess the importance for each party, we sum the importance of its features. While numerous methods to evaluate feature importance can be adopted in VertiBench, this study primarily focuses on two approaches:
1) Shapley Value: Feature importance is determined using Shapley values, efficiently estimated by evaluating the performance of a trained XGBoost~\cite{chen2016xgboost} on random subsets.
2) \revd{Shapley-CMI~\cite{han2021data}}: This approach, which does not rely on specific models, estimates the importance of each feature based on the Shapley-CMI applied to the global dataset.
Both methods yield consistent and reasonable estimates of party importance.}

\subsection{Evaluate Party Correlation}

The task of efficiently evaluating correlation among two groups of features is challenging despite well-studied individual feature correlation~\cite{myers2004spearman,de2016comparing}. The Shapley-Taylor index, proposed for evaluating correlation between feature sets~\cite{sundararajan2020shapley}, is computationally intensive (NP-hard), and unsuitable for high-dimensional datasets. The determinant of the correlation matrix~\cite{wang2014measures} efficiently estimates inter-party correlation but is over-sensitive to linearly correlated features, impeding its use in feature partitioning. A more refined metric - the multi-way correlation coefficient (mcor)~\cite{taylor2020multi}, addresses this, but like the determinant, it struggles with unequal feature numbers across parties, a typical VFL scenario, due to the assumption of a square correlation matrix.

\reva{Given the limitations of existing metrics~\cite{taylor2020multi,wang2014measures}}, we propose a novel metric to examine the correlation when the parties involved possess unequal numbers of features. Our approach hinges on the use of the standard variance of the singular values of the correlation matrix. This serves as an efficient measure of the overall correlation between two parties. Since the feature-wise correlation is an orthogonal research area, we selected Spearman rank correlation~\cite{zar2005spearman} due to its capability to handle non-linear correlation.

To elaborate further, we denote the column-wise correlation matrix between two matrices, $\mathbf{X}_i$ and $\mathbf{X}_j$, as $\text{cor}(\mathbf{X}_i,\mathbf{X}_j)$. As a result, we formally define the correlation between two entities, $\mathbf{X}_i\in\mathbb{R}^{n\times m_i}$ and $\mathbf{X}_j\in\mathbb{R}^{n\times m_j}$, in terms of their respective parties as Eq.~\ref{eq:pcor}.
\revd{\begin{equation}\label{eq:pcor}
 \textstyle
    \text{Pcor}(\mathbf{X}_i,\mathbf{X}_j) := \frac{1}{\sqrt{d}}\sqrt{\frac{1}{{d-1}}\sum_{t=1}^{d}{\left(\sigma_t(\text{cor}(\mathbf{X}_i,\mathbf{X}_j)) - \overline{\sigma}\right)^2}}, \quad d = \min(m_i,m_j)
\end{equation}}
In this equation, $\sigma_i(\cdot)$ means the $i$-th singular value of a matrix, while $\overline{\sigma}$ stands for their mean value. \revd{Proposition~\ref{prop:pcor-mcor} states that Pcor is equivalent to mcor for inner-party correlation (see Appendix~\ref{sec:proof} for proof). Experiments detailed in Appendix~\ref{subsec:validate_eval} reveal that Pcor exhibits trends analogous to mcor~\cite{taylor2020multi} when assessing inter-party correlation between equal number of features.}

\revd{
\begin{proposition}\label{prop:pcor-mcor}
    For any real matrix $\mathbf{X}$,
    \(
    \text{Pcor}(\mathbf{X},\mathbf{X}) = \text{mcor}(\mathbf{X},\mathbf{X})
    \)
\end{proposition}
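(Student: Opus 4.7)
The plan is to reduce Pcor in the symmetric case $\mathbf{X}_i=\mathbf{X}_j=\mathbf{X}$ to the spectral quantity that defines mcor, using the structural properties of the standard correlation matrix. Write $\mathbf{X}\in\mathbb{R}^{n\times m}$; then $d = \min(m,m) = m$, and $\mathbf{C} := \text{cor}(\mathbf{X},\mathbf{X})$ is the usual $m\times m$ Pearson/Spearman correlation matrix of the columns of $\mathbf{X}$. Two facts about $\mathbf{C}$ are central: (i) $\mathbf{C}$ is real symmetric and positive semidefinite, and (ii) every diagonal entry of $\mathbf{C}$ equals $1$, so $\operatorname{tr}(\mathbf{C}) = m$.

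First I would use (i) to identify the singular values of $\mathbf{C}$ with its eigenvalues. Writing $\lambda_1,\dots,\lambda_m$ for the eigenvalues of $\mathbf{C}$, we have $\sigma_t(\mathbf{C}) = \lambda_t \geq 0$ for $1\leq t\leq m$. Next, from (ii) the mean singular value satisfies
\[
\overline{\sigma} \;=\; \frac{1}{m}\sum_{t=1}^{m}\sigma_t(\mathbf{C}) \;=\; \frac{1}{m}\operatorname{tr}(\mathbf{C}) \;=\; 1.
\]
Substituting $d=m$ and $\overline{\sigma}=1$ into the definition of Pcor from Eq.~\ref{eq:pcor} gives
\[
\text{Pcor}(\mathbf{X},\mathbf{X}) \;=\; \frac{1}{\sqrt{m}}\sqrt{\frac{1}{m-1}\sum_{t=1}^{m}(\lambda_t-1)^2} \;=\; \sqrt{\frac{\sum_{t=1}^{m}(\lambda_t-1)^2}{m(m-1)}}.
\]

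Finally I would match this expression against the definition of mcor from~\cite{taylor2020multi}. The mcor of a single data matrix is defined as the (appropriately normalized) standard deviation of the eigenvalues of its correlation matrix around their mean of $1$, yielding exactly $\sqrt{\sum_{t=1}^{m}(\lambda_t-1)^2/(m(m-1))}$, so $\text{Pcor}(\mathbf{X},\mathbf{X}) = \text{mcor}(\mathbf{X},\mathbf{X})$.

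The main obstacle is not the algebra, which is essentially trace and spectral bookkeeping, but rather verifying the exact normalization convention in~\cite{taylor2020multi}; different presentations scale by $1/\sqrt{m}$, $1/\sqrt{m-1}$, or $1/\sqrt{m(m-1)}$. I would pin down this constant by evaluating mcor on an identity correlation matrix (which must give $0$) and on a rank-one correlation matrix with $\lambda_1 = m$, $\lambda_{2}=\dots=\lambda_m=0$ (which must give $1$); both boundary cases are satisfied by the formula derived above, confirming that the Pcor normalization in Eq.~\ref{eq:pcor} matches mcor in the inner-party case.
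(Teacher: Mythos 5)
Your proposal is correct and follows essentially the same route as the paper: the correlation matrix $\text{cor}(\mathbf{X},\mathbf{X})$ is symmetric positive semidefinite, so its singular values coincide with its eigenvalues, and the two normalized standard deviations defining Pcor and mcor are therefore identical. Your additional bookkeeping (using $\operatorname{tr}(\mathbf{C})=m$ to get $\overline{\sigma}=1$ and the boundary-case check of the normalization constant) is a harmless elaboration of the same argument, and in fact makes the identification with the mcor convention of the cited reference more explicit than the paper's own two-line proof.
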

}

\revd{The singular values of a correlation matrix, Pcor, represent the magnitudes of its ellipsoid's semi-axes, indicating the degree of dependence among features. The standard deviation of these singular values reflects the distribution of dependence across different axes. A notably large singular value in a specific axis (Figure~\ref{fig:pcor-example-correlated}) suggests a high concentration of dependence. For instance, if there's only one nonzero singular value, it implies that all features are perfectly correlated with a single feature. Conversely, if the singular values are uniformly distributed such as Figure~\ref{fig:pcor-example-independent} (indicated by a small standard deviation), it denotes less concentrated feature correlations. Therefore, the standard deviation of singular values serves as a measure of the dataset's proximity to perfect correlation.}

\revd{Proposition~\ref{prop:pcor-range} states that Pcor, like mcor, spans a range from 0 to 1, even when assessing inter-party correlation. A Pcor value of 1 signifies perfect correlation between $\mathbf{X}_1$ and $\mathbf{X}_2$, while a value of 0 indicates their independence. }
\revd{
\begin{proposition}\label{prop:pcor-range}
    For any two real matrices $\mathbf{X}_1$ and $ \mathbf{X}_2$,
    \(
    \text{Pcor}(\mathbf{X}_1,\mathbf{X}_2) \in [0,1]
    \)
\end{proposition}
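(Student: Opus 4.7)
The lower bound $\text{Pcor}(\mathbf{X}_1,\mathbf{X}_2) \geq 0$ is immediate, since Pcor is defined as the square root of a sum of squared real terms divided by positive constants. For the upper bound, my plan is to first rewrite the variance term using the standard identity
\[
\sum_{t=1}^{d} (\sigma_t - \overline{\sigma})^2 = \sum_{t=1}^d \sigma_t^2 - d\,\overline{\sigma}^2 = \|R\|_F^2 - d\,\overline{\sigma}^2,
\]
where $R = \text{cor}(\mathbf{X}_1, \mathbf{X}_2)$ and the Frobenius equality $\sum_t \sigma_t^2 = \|R\|_F^2$ follows from the SVD. The target $\text{Pcor}^2 \leq 1$ then reduces to showing $\|R\|_F^2 - d\,\overline{\sigma}^2 \leq d(d-1)$. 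Since every entry of $R$ is a Spearman correlation coefficient lying in $[-1,1]$, the crude bound $\|R\|_F^2 \leq m_1 m_2$ is available as a starting point.

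To strengthen this, I would exploit the positive semidefinite structure of the joint correlation matrix
\[
C = \begin{pmatrix} C_{11} & R \\ R^T & C_{22} \end{pmatrix},
\]
which has unit diagonal and nonnegative eigenvalues. A Cauchy--Schwarz-type inequality arising from this PSD block form controls the singular values of $R$ in terms of those of $C_{11}$ and $C_{22}$; combined with the trace constraints $\text{tr}(C_{ii}) = m_i$ and the observation that the variance of $d$ nonnegative singular values with fixed sum of squares is maximized when all the mass concentrates in a single singular value, I expect the target inequality to follow from direct computation in that extremal rank-one configuration, together with the inner-party identity $\text{Pcor}(\mathbf{X},\mathbf{X}) = \text{mcor}(\mathbf{X},\mathbf{X})$ established in Proposition~\ref{prop:pcor-mcor} (from which the bound $\text{mcor} \leq 1$ is inherited for the square case).

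The main obstacle is the asymmetric case $m_1 \neq m_2$: when $d = \min(m_1,m_2) < \max(m_1,m_2)$, the bound $\|R\|_F^2 \leq m_1 m_2$ exceeds $d^2$, so the naive Frobenius argument alone does not suffice and the PSD constraint on $C$ must be used to recover the missing slack. A clean route is to zero-pad $R$ into a square block and reduce to the inner-party setting via Proposition~\ref{prop:pcor-mcor}, or alternatively to perform a dedicated case analysis of the extremal (rank-one, entry-saturated) cross-correlation configurations, verifying directly that they simultaneously saturate the Frobenius and the variance maxima without exceeding the Pcor bound of $1$.
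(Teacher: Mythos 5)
Your argument for the square case $m_1=m_2=d$ is sound and is essentially the paper's own proof: the paper likewise drops the mean term, identifies $\sum_t\sigma_t^2$ with $\mathrm{tr}(C^TC)=\|C\|_F^2$, and bounds the Frobenius norm by the number of entries since each correlation lies in $[-1,1]$. Your additional observation that the variance of $d$ nonnegative values with fixed sum of squares is maximized in the rank-one configuration is actually needed to make the $\tfrac{1}{d-1}$ normalization come out to exactly $1$ rather than $\sqrt{d/(d-1)}$; the paper elides this by silently switching from the sample variance in Eq.~\ref{eq:pcor} to the population variance in the proof. So for $m_1=m_2$ your plan is correct and, if anything, more careful than the paper's.

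The gap is the asymmetric case, which you correctly flag but do not close --- and which in fact cannot be closed, because the proposition as stated fails for $m_1\neq m_2$. Take $m_1=2$, $m_2=4$ with all six columns strictly increasing transforms of one another, so every pairwise Spearman correlation is $1$ and $R=\mathbb{1}^{2\times 4}$. The singular values of $R$ are $\sqrt{8}$ and $0$, $d=2$, $\overline{\sigma}=\sqrt{2}$, and
\[
\text{Pcor}(\mathbf{X}_1,\mathbf{X}_2)=\frac{1}{\sqrt{2}}\sqrt{\frac{1}{1}\left((\sqrt{8}-\sqrt{2})^2+(0-\sqrt{2})^2\right)}=\sqrt{2}>1.
\]
This is precisely the ``rank-one, entry-saturated'' extremal configuration you propose to verify directly; carrying out that verification gives $\text{Pcor}=\sqrt{m_1m_2}/d$, which exceeds $1$ whenever $m_1\neq m_2$, so the dedicated case analysis you sketch would refute rather than confirm the bound. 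Your alternative route of zero-padding $R$ to a square block does not rescue the argument either: padding replaces $d=\min(m_1,m_2)$ by $\max(m_1,m_2)$ in both the prefactor and the variance, so it bounds a differently normalized quantity, not the Pcor the paper defines. The PSD structure of the joint correlation matrix also cannot supply the missing slack, since the example above is realizable by genuine data. For what it is worth, the paper's own proof shares this defect --- the step $\mathrm{tr}(C^TC)\le d^2$ uses $\|C\|_F^2\le m_1m_2$, which is only $\le d^2$ when $m_1=m_2$ --- so your instinct about where the trouble lies is exactly right; the honest fix is to restrict the statement to equal feature counts or to renormalize by $\sqrt{m_1m_2}$ in place of $d$.
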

}

\revd{
It is important to note that the absolute value of Pcor alone does not fully capture inter-party correlation. For instance, when $\mathbf{X}_i$ and $\mathbf{X}_j$ are two parties both containing the same set of independent features, $\text{Pcor}(\mathbf{X}_i,\mathbf{X}_j)$ yields a value of 0, the same as the Pcor between two independent parties. Despite the same Pcor value, these scenarios intuitively differ in their levels of inter-party correlation. This discrepancy arises from overlooking the inner-party correlation of $\mathbf{X}_i$ and $\mathbf{X}_j$. Typically, parties with highly correlated features tend to exhibit higher Pcor values with other parties.
}

\revd{To accurately measure the correlation between $\mathbf{X}_i$ and $\mathbf{X}_j$, we evaluate how the shift towards perfect correlation varies when $\mathbf{X}_i$ is replaced by $\mathbf{X}_j$. This is captured by the relative change in Pcor, denoted as $\text{Pcor}(\mathbf{X}_i,\mathbf{X}_j) - \text{Pcor}(\mathbf{X}_i,\mathbf{X}_i)$. In the perspective of variance analysis~\cite{kruskal1952use}, this difference quantifies the degree to which the standard deviation $\text{Pcor}(\mathbf{X}_i,\mathbf{X}_j)$ is explained by inter-party factors, controlling the contribution of inner-party correlations.} The overall inter-party correlation, denoted as Icor, is described as the mean party-wise correlation across all distinct party pairs. Formally, 
\revd{\begin{equation}
\textstyle
    \text{Icor}(\mathbf{X}_1,\dots,\mathbf{X}_K) := \frac{1}{K(K-1)}\sum_{i=1}^{K}\sum_{j=1,j\neq i}^{K}{\left(\text{Pcor}(\mathbf{X}_i,\mathbf{X}_j) - \text{Pcor}(\mathbf{X}_i,\mathbf{X}_i)\right)}.
\end{equation}}

\begin{figure}[htpb]
    \centering
    \begin{subfigure}{.32\linewidth}
        \includegraphics[width=\linewidth]{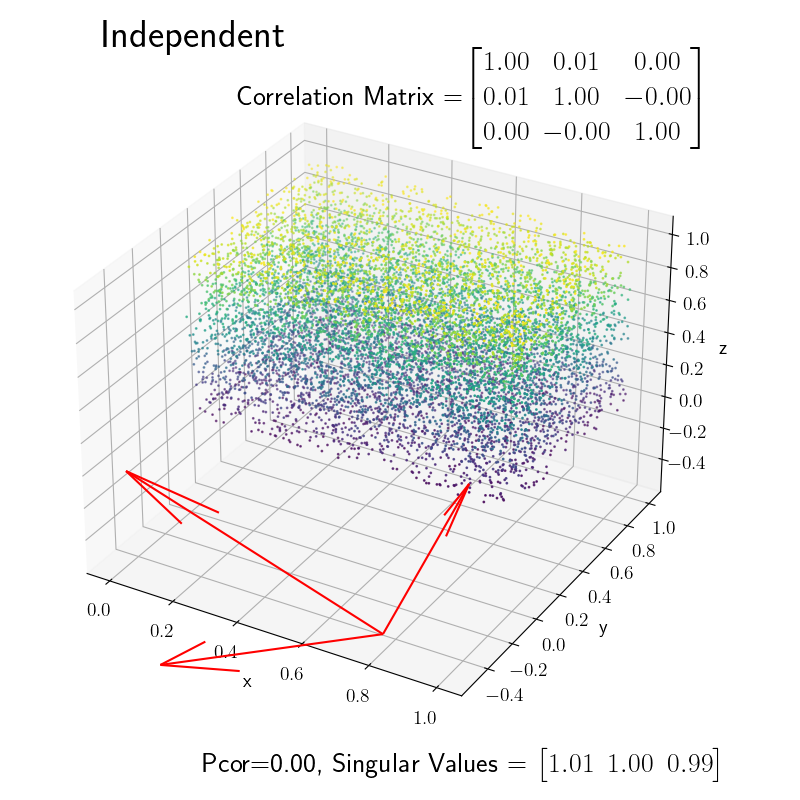}
        \caption{$x,y,z\sim U(0,1)$}\label{fig:pcor-example-independent}
    \end{subfigure}
    \begin{subfigure}{.32\linewidth}
        \includegraphics[width=\linewidth]{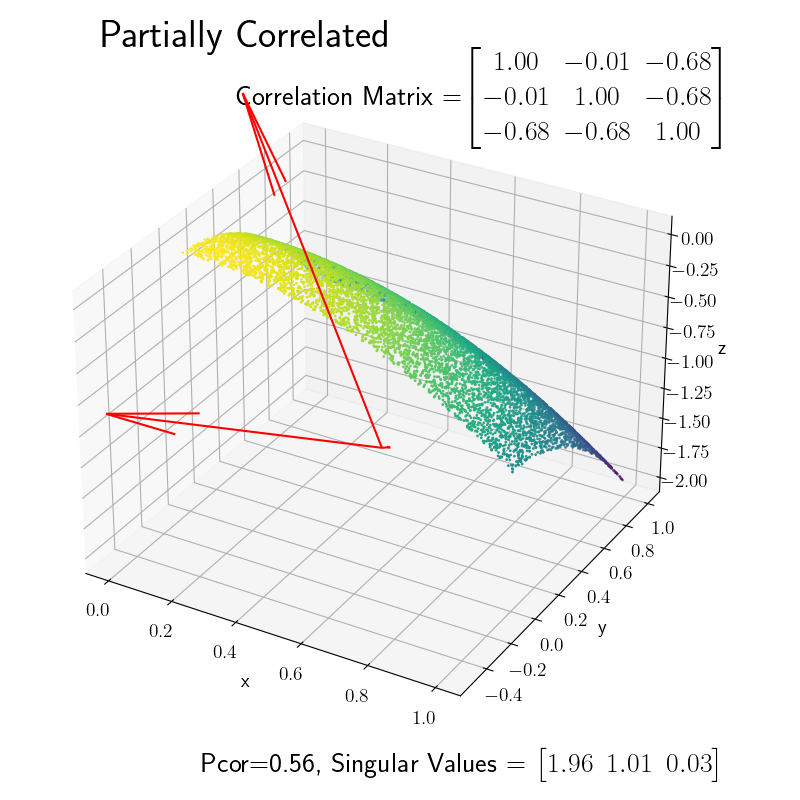}
        \caption{$x,y\sim U(0,1), z=-x^2-y^2$}\label{fig:pcor-example-partial}
    \end{subfigure}
    \begin{subfigure}{.32\linewidth}
        \includegraphics[width=\linewidth]{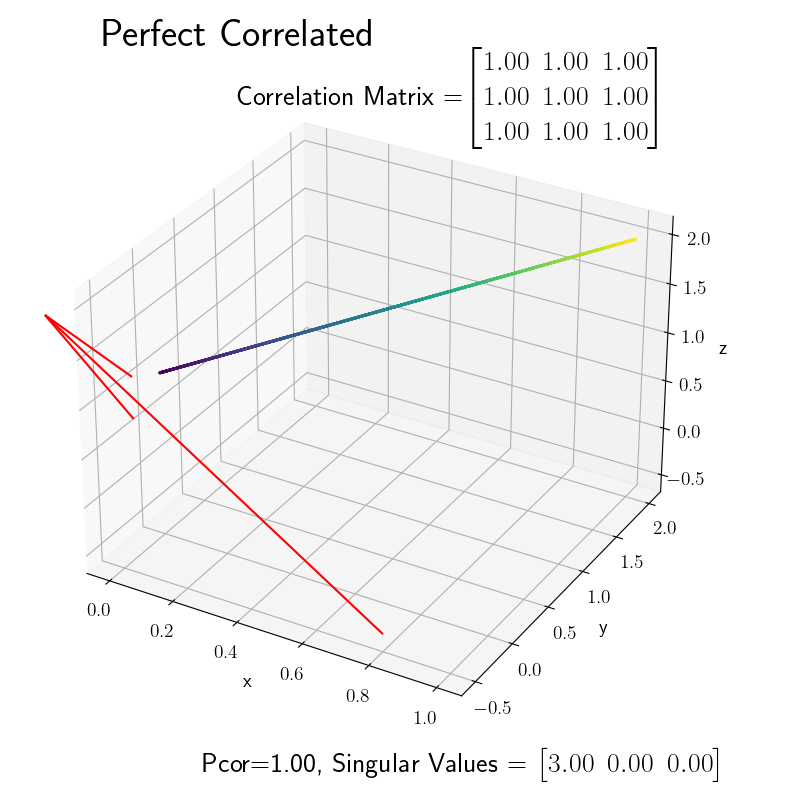}
        \caption{$x\sim U(0,1),y=2x,z=x+1$}\label{fig:pcor-example-correlated}
    \end{subfigure}
    \caption{\revd{Examples of Pcor values on different levels of correlation. $U$ means uniform distribution. Arrow direction indicates right singular vector orientation, arrow scale represents singular values.}}
    \label{fig:pcor-example}
    \vspace{-8pt}
\end{figure}

\revd{Icor exhibits notable properties both theoretically and empirically. Theoretically, as demonstrated in Theorem~\ref{thm:icor_bound} (see Appendix~\ref{sec:proof} for proof), optimizing Icor yields ideal feature splits in optimal scenarios. Specifically, in datasets comprising two independent but internally perfectly correlated feature sets, Icor reaches its minimum when each party exclusively possesses one feature set and attains its maximum when each party equally shares half of the features from both sets. Empirically, we evaluate the link between inter-party correlation and Icor in complex, real-world datasets (Appendix~\ref{sec:test-split}). These empirical observations align with theoretical insights, confirming Icor's capability in analyzing intricate data correlations.}

\revd{
\begin{theorem}\label{thm:icor_bound}
    Consider a global dataset $\mathbf{X}$ comprising two independent datasets $\mathbf{D}_1, \mathbf{D}_2 \in \mathbb{R}^{n \times m}$, each of the same dimension. Independence implies that for any feature $a^{(1)}_i$ from $\mathbf{D}_1$ and any feature $a^{(2)}_j$ from $\mathbf{D}_2$, where $i, j \in [1, m]$, the correlation $\text{Cor}\left(a^{(1)}_i, a^{(2)}_j\right)=0$. Furthermore, assume within $\mathbf{D}_1$ and $\mathbf{D}_2$, all features are perfectly correlated, such that for all pairs of distinct features $a^{(1)}_i, a^{(1)}_j$ in $\mathbf{D}_1$ and $a^{(2)}_i, a^{(2)}_j$ in $\mathbf{D}_2$, with $i, j \in [1, m]$ and $i \neq j$, the correlations satisfy $\text{Cor}\left(a^{(1)}_i, a^{(1)}_j\right) = 1$ and $\text{Cor}\left(a^{(2)}_i, a^{(2)}_j\right) = 1$ respectively. When the features of \(\mathbf{X}\) are divided equally into two subsets, \(\mathbf{X}_1\) and \(\mathbf{X}_2\), such that each subset contains $m/2$ features, the overall inter-party correlation $\text{Icor}(\mathbf{X}_1,\mathbf{X}_2)$ satisfies
    $$\text{Icor}(\mathbf{X}_1,\mathbf{X}_2) \in \left[-\frac{m}{\sqrt{m(m-1)}}, 0\right].$$
    The lower bound occurs if and only if $\mathbf{X}_1$ comprises all features of either $\mathbf{D}_1$ or $\mathbf{D}_2$, with $\mathbf{X}_2$ containing the remaining features. The upper bound occurs if and only if $\mathbf{X}_1$ holds $m$ features from both $\mathbf{D}_1$ and $\mathbf{D}_2$, with $\mathbf{X}_2$ holding the remaining $m$ features from $\mathbf{D}_1$ and $\mathbf{D}_2$.
\end{theorem}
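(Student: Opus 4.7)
The plan is to parametrize every equal split by a single integer $k \in \{0,1,\ldots,m\}$ — the number of $\mathbf{D}_1$-features assigned to $\mathbf{X}_1$ — and reduce the entire claim to a one-variable problem in $u := k(m-k)$. Because Pcor depends on its inputs only through singular values, it is invariant under permutations within $\mathbf{D}_1$ and within $\mathbf{D}_2$, so $k$ fully determines the split up to such relabeling; $\mathbf{X}_2$ then holds the remaining $m-k$ features of $\mathbf{D}_1$ together with $k$ features of $\mathbf{D}_2$. The two extremal configurations in the theorem correspond respectively to $u=0$ (i.e., $k\in\{0,m\}$, the pure $\mathbf{D}_1$-vs-$\mathbf{D}_2$ split) and $u=m^2/4$ (i.e., $k=m/2$, the balanced split).

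The first main step is to write the relevant correlation matrices in block form using the assumptions. After reindexing, $\text{cor}(\mathbf{X}_1,\mathbf{X}_1) = \operatorname{diag}(J_k, J_{m-k})$, where $J_s$ is the $s\times s$ all-ones matrix; its eigenvalues are $k$, $m-k$, and $m-2$ zeros. Meanwhile $\text{cor}(\mathbf{X}_1,\mathbf{X}_2)$, after a row/column reordering, is block-anti-diagonal with nonzero rank-one blocks $\mathbf{1}_{k\times(m-k)}$ and $\mathbf{1}_{(m-k)\times k}$, each contributing a single nonzero singular value $\sqrt{u}$, for a total of two singular values equal to $\sqrt{u}$ and $m-2$ zeros. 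The symmetry $k\leftrightarrow m-k$ preserves all these spectra, so $\text{Pcor}(\mathbf{X}_2,\mathbf{X}_2) = \text{Pcor}(\mathbf{X}_1,\mathbf{X}_1)$ and $\text{Pcor}(\mathbf{X}_2,\mathbf{X}_1)=\text{Pcor}(\mathbf{X}_1,\mathbf{X}_2)$, so both summands of Icor coincide.

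Substituting these spectra into Eq.~\ref{eq:pcor}, both $\text{Pcor}(\mathbf{X}_1,\mathbf{X}_1)^2$ and $\text{Pcor}(\mathbf{X}_1,\mathbf{X}_2)^2$ collapse to affine functions of $u$ — one decreasing, one increasing — so $\text{Icor}(\mathbf{X}_1,\mathbf{X}_2) = f(u)$ for an explicit single-variable function on $u\in[0,m^2/4]$. Direct evaluation at the endpoints gives $f(0) = -\frac{m}{\sqrt{m(m-1)}}$ and $f(m^2/4)=0$, matching the claimed bounds and equality cases exactly (with $u \mapsto \{k,m-k\}$ making the equality cases a bijection onto the admissible splits). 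To upgrade these two point values to the full interval statement, I would show that $f$ is monotone nondecreasing on $[0,m^2/4]$: since $f$ is the difference of two square-roots in $u$ with opposite monotonicity, computing $f'(u)$ and clearing radicals reduces the task to an elementary polynomial inequality in $u$. The main obstacle is exactly this monotonicity check — each term of $f$ is concave in $u$ and a difference of concaves need not be monotone in general, so the argument has to leverage the precise coefficients produced by the block structure rather than a soft convexity argument; once that inequality is in hand, the theorem follows immediately from the endpoint evaluations and the bijection $u\leftrightarrow\{k,m-k\}$.
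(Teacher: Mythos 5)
Your proposal is correct and follows essentially the same route as the paper's proof: the same block all-ones / block anti-diagonal spectral computation yielding $\text{Pcor}(\mathbf{X}_1,\mathbf{X}_1)=\frac{1}{\sqrt{m}}\sqrt{\frac{k^2+(m-k)^2}{m-1}}$ and $\text{Pcor}(\mathbf{X}_1,\mathbf{X}_2)=\frac{1}{\sqrt{m}}\sqrt{\frac{2k(m-k)}{m-1}}$, followed by reduction to a one-variable monotonicity argument in $t=2k(m-k)\in[0,m^2/2]$ with the same endpoint evaluations and equality cases. The only step you flag as the main obstacle is in fact immediate rather than delicate: $f(t)=\sqrt{t}-\sqrt{m^2-t}$ is an increasing function minus a decreasing one, so $f'(t)=\frac{1}{2\sqrt{t}}+\frac{1}{2\sqrt{m^2-t}}\ge 0$ with no radical-clearing or polynomial inequality needed, which is precisely how the paper closes the argument.
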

}


\section{Split Synthetic VFL Datasets}\label{sec:synthetic-vfl-datasets}
\reva{\revb{This section aims to develop algorithms to split features according to two key factors: importance and correlation.} These algorithms should allow users to adjust the party importance and correlation of synthetic VFL datasets by simply modulating two parameters: $\mathbf{\alpha}$ and $\beta$. The intended mapping should meet two criteria: (1) The scope of $\mathbf{\alpha}$ and $\beta$ should encompass a broad spectrum of feature splits, inclusive of both real splits and random splits. (2) When two global datasets bear similarities, synthetic VFL datasets derived from them using identical $\mathbf{\alpha}$ and $\beta$ parameters should yield similar VFL algorithm behaviors. We provide both theoretical and empirical validation for criteria (1) in this section, whereas criteria (2) is substantiated through experiments in Section~\ref{sec:connect-real-syn}. }

\subsection{Split by Party Importance}

In light of the computational expense incurred by the Shapley value method, an alternative and more efficient strategy is necessary to perform feature splits based on importance. With all parties exhibiting symmetry in the context of $\mathbf{X}$, varying the importance among parties essentially translates to varying the variance of the importance among them. Assuming each party $P_i$ possesses an importance factor $\alpha_i>0$, we propose the implementation of the Dirichlet distribution parameterized by $\mathbf{\alpha}=\{\alpha_i\}_{i=1}^K$ for feature splitting. This approach ensures two beneficial properties post-split: (1) a larger $\alpha_i$ guarantees a higher expected importance for $P_i$, and (2) a smaller $\|\{\alpha_i\}_{i=1}^K\|_2$ assures a greater variance in the importance among parties.

More specifically, we propose a feature splitting method based on feature importance. After initializing local datasets for each party, a series of probabilities \reva{$r_1,\dots,r_K$ s.t. $\sum_{i=1}^{K}r_i=1$} is sampled from a Dirichlet distribution $\text{Dir}(\alpha_1,\dots,\alpha_K)$. Each feature is randomly allocated to a party $P_k$, selected based on the probabilities $r_k$. To accommodate algorithms that fail when faced with empty features, we can ensure each party is initially provided with a random feature before the algorithm is set in motion. Detailed formalization of this algorithm can be found in Appendix~\ref{sec:approach-detail}.

\begin{theorem}\label{thm:importance}
    Consider a feature index set \(\mathcal{A}=\{1,2,...,m\}\) and a characteristic function \(v:2^\mathcal{A}\rightarrow\mathbb{R}\) such that \(v(\emptyset)=0\). Let \(\phi_j(v)\) denote the importance of the \(j\)-th feature on \(v\) such that \(\sum_{j=1}^{m}\phi(j)=v(\mathcal{A})\). \revb{Assume that the indices in \(\mathcal{A}\) are randomly distributed to $K$ parties with probabilities \(r_1,...,r_K\sim \text{Dir}(\alpha_1,\dots,\alpha_K)\). Let \(Z_i\) be the sum of feature importance for party \(i\). Then, we have \(\forall i\in[1,K]\) and \(\mathbb{E}[Z_i]\propto \alpha_i\)}.
\end{theorem}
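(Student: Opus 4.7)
The plan is to reduce the statement to a routine linearity-of-expectation computation together with the known mean of the Dirichlet distribution. First I would fix notation: let $A_i \subseteq \mathcal{A}$ denote the (random) set of feature indices assigned to party $i$, and write
\[
Z_i \;=\; \sum_{j=1}^{m} \phi_j(v)\,\mathbb{1}[j \in A_i].
\]
Since the $\phi_j(v)$ are deterministic (they depend only on $v$, which is fixed), applying linearity of expectation reduces the problem to computing $\mathbb{P}[j \in A_i]$ for each feature index $j$.

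Next I would exploit the two-step sampling structure. Condition on the probability vector $r = (r_1,\dots,r_K) \sim \text{Dir}(\alpha_1,\dots,\alpha_K)$; by the construction of the splitting algorithm, each feature is independently assigned to party $k$ with probability $r_k$, so $\mathbb{P}[j \in A_i \mid r] = r_i$. Taking the outer expectation and invoking the standard fact that $\mathbb{E}[r_i] = \alpha_i / \sum_{k=1}^{K}\alpha_k$ for a Dirichlet random variable gives $\mathbb{P}[j \in A_i] = \alpha_i / \sum_{k=1}^{K}\alpha_k$, which is the same for every $j$.

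Substituting back, the expectation telescopes:
\[
\mathbb{E}[Z_i] \;=\; \frac{\alpha_i}{\sum_{k=1}^{K}\alpha_k}\sum_{j=1}^{m}\phi_j(v) \;=\; \frac{\alpha_i}{\sum_{k=1}^{K}\alpha_k}\,v(\mathcal{A}),
\]
where the last equality uses the efficiency property $\sum_j \phi_j(v) = v(\mathcal{A})$ stated in the hypothesis. Since $v(\mathcal{A})$ and the normalising sum do not depend on $i$, this establishes $\mathbb{E}[Z_i] \propto \alpha_i$.

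There is no real obstacle here — the argument is three lines once the correct decomposition is written down. The only minor subtlety worth flagging in the write-up is the ``initial guarantee'' in the algorithm that each party is seeded with one random feature before the Dirichlet allocation begins; strictly speaking this breaks exchangeability for those $K$ seed features, but it contributes an $O(1)$ additive perturbation that does not affect the proportionality statement (and can be sidestepped by symmetry over which seed feature lands where, since each party receives exactly one seed in expectation independent of $\alpha$). I would add a one-sentence remark to this effect so the proportionality claim is not derailed by the seeding step.
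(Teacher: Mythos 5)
Your proposal is correct and follows essentially the same route as the paper's proof: decompose $Z_i$ into per-feature contributions, apply linearity of expectation, and invoke the Dirichlet mean $\mathbb{E}[r_i]=\alpha_i/\sum_k\alpha_k$ to obtain $\mathbb{E}[Z_i]=\mathbb{E}[r_i]\,v(\mathcal{A})\propto\alpha_i$. Your write-up is in fact slightly tighter than the paper's (which only asserts $\alpha_i\approx\mathbb{E}[r_i]$), and the remark about the optional seeding step is a reasonable addition, but there is no substantive difference in approach.
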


\revb{The proof of Theorem~\ref{thm:importance} can be found in Appendix~\ref{sec:proof}, resembling the Dirichlet-multinomial mean proof but focusing on sum importance instead of feature counts.} The metric of importance, $\phi_j(v)$, comprises the Shapley value and the recently proposed Shapley-CMI~\cite{han2021data}. \revb{Theorem~\ref{thm:importance} asserts that the expected cumulative importance \reva{$\mathbb{E}[Z_i]$} of each party is proportional to the importance parameter $\alpha_i$. The Dirichlet-based split method ensures that:} (1) a larger value of $\alpha_i$ leads to a higher expected value of $r_i$, \reva{thus a higher expected value of party importance}, and (2) a smaller value of $\|\{\alpha_i\}_{i=1}^K\|_2$ results in a larger variance in $r_i$, as well as more imbalanced importance among parties. \revd{Both properties are empirically validated in Appendix~\ref{subsec:validate_split}.} Hence, the proposed method naturally aligns with the requirements for feature importance. \revc{With $\alpha=1$, Dirichlet-split mirrors a uniform distribution, incorporating random splits within the uniform scope. Even for manual equal splits lacking consistent criteria, a large $\alpha$ in Dirichlet-split can encapsulate them by yielding nearly equal feature distribution among parties.}

\subsection{Split by Party Correlation}

\revb{This correlation-based feature-split algorithm (Alg.~\ref{alg:correlation}) is designed to allocate features across multiple parties based on a given correlation parameter $\beta$.} The algorithm's operation is premised on a defined number of features for each party, represented as $m_1,\dots,m_K$. Commencing with the initialization of a column permutation matrix $\mathbf{P}$ to an identity matrix (line 1), the algorithm proceeds to define a score function, $f(\mathbf{P};\mathbf{X})$, which represents the overall correlation Icor after the features are permutated by $\mathbf{P}$ (line 2). Subsequently, the algorithm determines the range of the score function (lines 3-4). This forms the basis for calculating the target correlation $f^*(\mathbf{X};\beta)$, which is a linear interpolation between the lower and upper bounds controlled by the correlation index $\beta$ (line 5). Next, the algorithm locates the optimal permutation matrix $\mathbf{P}^*$ by solving an permutation-based optimization problem. Notably, we employ the Biased Random-Key Genetic Algorithm (BRKGA) \cite{gonccalves2011biased} for this purpose. The final step of the algorithm splits the features according to the derived optimal permutation and the pre-set number of features for each party (lines 6-7).

\begin{algorithm} 
    \caption{Feature Splitting by Correlation}
    \label{alg:correlation}
    \KwIn{Global dataset $\mathbf{X}\in\mathbb{R}^{n\times m}$, correlation index $\beta$, number of features $m_1,\dots,m_K$}
    \KwOut{Local dataasets $\mathbf{X}_1,\dots,\mathbf{X}_K$}

    $\mathbf{P} \leftarrow \mathbf{I}$\tcc*{Initiate permutation matrix}
    $f(\mathbf{P};\mathbf{X}) := \text{Icor}(\mathbf{X}^P_1,\dots,\mathbf{X}^P_K)\;s.t.\; \mathbf{X}^P_1,\dots,\mathbf{X}^P_K \leftarrow \text{ split features of } \mathbf{X}\mathbf{P} \text{ by } m_1,\dots,m_K$\; 
    $f_{min}(\mathbf{X}) = \min_{\mathbf{P}}{f(\mathbf{P};\mathbf{X})}$\tcc*{Calculate lower bound}
    $f_{max}(\mathbf{X}) = \max_{\mathbf{P}}{f(\mathbf{P};\mathbf{X})}$\tcc*{Calculate upper bound}
    $f^*(\mathbf{X};\beta) \leftarrow (1-\beta)f_{min}(\mathbf{X}) + \beta f_{max}(\mathbf{X})$\tcc*{Calculate target correlation}
    $\mathbf{P}^* \leftarrow \arg\min_\mathbf{P}{|f(\mathbf{P};\mathbf{X})-f^*(\mathbf{X};\beta)|}$\tcc*{Find the permutation matrix}
    $\mathbf{X}^P_1,\dots,\mathbf{X}^P_K \leftarrow \text{ split features of } \mathbf{X}\mathbf{P}^* \text{ by } m_1,\dots,m_K$\;
    \Return{$\mathbf{X}_1,\dots,\mathbf{X}_K$}
\end{algorithm}

The efficiency of the optimization process, involving numerous Icor invocations, is crucial. For smaller datasets, Singular Value Decomposition (SVD) \cite{baker2005singular} is used for direct singular value computation. However, for high-dimensional datasets, we employ truncated SVD \cite{hansen1990truncated} estimates the largest top-$d_t$ singular values, assuming the remainder as zero for standard variance calculation. The ablation study of $d_t$ is included in Appendix~\ref{apdx:ablation_d_t}. Our experiments, detailed in Appendix~\ref{subsec:validate_split}, confirm the efficacy of both split methods.

\subsection{Compare Feature Split Across Global Datasets}\label{sec:cmp-alpha-beta}

\reva{The metrics presented in Section~\ref{sec:synthetic-vfl-metric} facilitate meaningful comparisons of feature splits within the same global datasets but fall short when comparing across different datasets. To bridge this gap and enable a comparison between real and synthetic VFL datasets, we introduce methods to map these metrics to two values: $\alpha$ and $\beta$, where $\alpha$ indicates party balance and $\beta$ indicates party correlation. Consequently, this mapping enables a direct comparison between feature splits originating from real and synthetic VFL datasets, as demonstrated in Figure~\ref{fig:real-dist}. }

To estimate $\alpha$, the importance of each party is calculated by Shapley values. These importance are then normalized and treated as Dirichlet parameters $\alpha_i$ for each party $P_i$, in line with Theorem~\ref{thm:importance}. To approximate the scale of the Dirichlet parameters and align them with the generation of synthetic datasets, we find a symmetric Dirichlet distribution $\text{Dir}({\alpha})$ that has the same variance as $\text{Dir}(\alpha_1,\dots,\alpha_K)$, as given in Proposition~\ref{prop:mean-alpha}. This value of ${\alpha}$ reflects the variance of party importance. The proof is provided in Appendix~\ref{sec:proof}.
\begin{proposition}\label{prop:mean-alpha}
    Given a Dirichlet distribution $\text{Dir}(\alpha_1,\dots,\alpha_K)$ with mean variance $\sigma$, symmetric Dirichlet distribution $\text{Dir}({\alpha})$ that has the same mean variance $\sigma$ if $\;{\alpha} = \frac{K - 1 - K^2\sigma}{  K^3\sigma}$.
\end{proposition}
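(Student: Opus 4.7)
The plan is a direct calculation. The only non-trivial aspect is identifying what ``mean variance'' means: it should denote the average, over $i \in \{1,\dots,K\}$, of the marginal variances $\text{Var}(X_i)$ of a Dirichlet vector $(X_1,\dots,X_K)\sim\text{Dir}(\alpha_1,\dots,\alpha_K)$. Recall the standard formula
\begin{equation*}
\text{Var}(X_i) \;=\; \frac{\alpha_i(\alpha_0-\alpha_i)}{\alpha_0^{2}(\alpha_0+1)}, \qquad \alpha_0 \;=\; \sum_{j=1}^{K}\alpha_j,
\end{equation*}
which I will simply cite as a well-known property of the Dirichlet distribution.

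Next I would specialize to the symmetric case $\text{Dir}(\alpha,\dots,\alpha)$, where $\alpha_0 = K\alpha$ and every marginal variance is the same, so the mean variance equals the common value
\begin{equation*}
\text{Var}(X_i) \;=\; \frac{\alpha(K\alpha-\alpha)}{(K\alpha)^{2}(K\alpha+1)} \;=\; \frac{K-1}{K^{2}(K\alpha+1)}.
\end{equation*}
Setting this expression equal to the prescribed mean variance $\sigma$ gives $K^{2}(K\alpha+1)\sigma = K-1$, and isolating $\alpha$ yields
\begin{equation*}
\alpha \;=\; \frac{K-1-K^{2}\sigma}{K^{3}\sigma},
\end{equation*}
which is exactly the claimed formula.

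There is really no hard step here; the proposition reduces to a one-line algebraic manipulation after invoking the standard Dirichlet variance formula. The only minor thing worth noting for clarity (and which I would mention briefly in the write-up) is that the formula requires $\sigma < (K-1)/K^{2}$ for $\alpha$ to be positive, which is automatically satisfied because $(K-1)/K^{2}$ is the supremum of marginal variances achievable by any Dirichlet with $K$ components (attained in the limit $\alpha\to 0$). Hence the matching symmetric parameter $\alpha$ always exists whenever $\sigma$ arose from a genuine Dirichlet distribution.
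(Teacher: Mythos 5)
Your proposal is correct and follows essentially the same route as the paper: compute the marginal variance of the symmetric Dirichlet, $\frac{K-1}{K^2(K\alpha+1)}$, equate it to $\sigma$, and solve for $\alpha$. Your added remark that $\sigma < (K-1)/K^2$ always holds for a genuine Dirichlet (so the matching $\alpha$ is positive) is a small but worthwhile clarification that the paper's proof omits.
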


To estimate $\beta$, we start by computing the potential minimum and maximum values of Icor by shuffling the features among parties, denoted as $\text{Icor}_\text{min},\text{Icor}_\text{max}$. Next, we estimate the Icor of the actual dataset, $\text{Icor}_\text{real}$, and derive the $\beta$ value using $\beta = \min\left\{\max\left\{\frac{\text{Icor}_\text{real}-\text{Icor}_\text{min}}{\text{Icor}_\text{max}-\text{Icor}_\text{min}},0\right\},1\right\}$. It is important to note that in real-world scenarios, $\text{Icor}_\text{real}$ might fall slightly outside the range of $\text{Icor}_\text{min},\text{Icor}_\text{max}$ due to the constraints of optimization algorithms. To rectify this, we clip the estimated $\beta$ to ensure $\beta\in[0,1]$.



\section{Experiment}\label{sec:experiment}

This section benchmarks cutting-edge VFL algorithms, \reva{with a detailed review in Section~\ref{sec:vfl-algorithms}.} Experimental settings are outlined in Section~\ref{sec:exp-setting}, and results regarding VFL accuracy and \reva{synthetic-real correlation} are in Sections~\ref{sec:exp-acc} and \reva{\ref{sec:connect-real-syn}, respectively}. Further evaluations, such as real communication cost, scalability, training time, and real dataset performance, are in Appendix~\ref{sec:addexp}. Each experiment elucidates results and provides relevant insights, highlighting (1) the performance-communication tradeoff of NN-based and boosting-based methods, (2) \reva{the performance similarity between synthetic and real VFL datasets under the same $\alpha,\beta$}, and (3) the scalability potential of VFL algorithms.

\subsection{Review of VFL Algorithms}\label{sec:vfl-algorithms}

This section reviews existing VFL algorithms, with a focus on accuracy, efficiency, and communication cost. VertiBench concentrates on common supervised learning tasks such as classification and regression within synchronized parties, summarized in Table \ref{tab:vfl-algorithms}. Notably, this benchmark excludes studies exploring other aspects~\cite{jin2021cafe,qi2022fairvfl,jiang2022vfps} and \revd{other tasks~\cite{Chang2020AsynDGAN,li2021fedmf,chen2022fedmsplit,he2022hybrid,li2022vertical}}. Since most VFL algorithms presume exact inter-party data linking, we adopt this approach in VertiBench, despite recent contrary findings~\cite{wu2022fedsim,nock2021impact} that this assumption may not be true. We refer to parties with and without labels as \textit{primary} and \textit{secondary parties} respectively.

\begin{table}[htbp]
    \centering
    \small
    \caption{Summary of existing VFL algorithms}\label{tab:vfl-algorithms}
    \setlength{\tabcolsep}{4pt}
    \begin{tabular}{c c c c c c c}
        \toprule
        \textbf{Category} & \textbf{Model}\textsuperscript{1} & \textbf{Algorithm} & \textbf{Contribution} & \textbf{Reference} & \textbf{Data}\textsuperscript{2} & \textbf{Feature}\textsuperscript{3} \\
        \midrule
        \multirow{2}{*}{\makecell{\textbf{Ensemble-}\\\textbf{based}}} & \multirow{2}{*}{Any}
        & AL & Accuracy &  \cite{xian2020al} & Syn & Manual\\
        & & GAL & Accuracy &  \cite{diao2022gal} & Syn & Manual\\

        \midrule
        \multirow{9}{*}{\makecell{\textbf{Split-}\\\textbf{based}}} & \multirow{3}{*}{NN}
        & SplitNN  & Accuracy  & \cite{vepakomma2018split} & Syn & N/A\\
        & & C-VFL  & Communication  & \cite{castiglia2022cvfl} & Syn & Manual\\
        & & BlindFL  & Efficiency  & \cite{fu2022blindfl} & Syn & Manual\\
        & & \revd{FedOnce} & \revd{Communication} & \revd{\cite{wu2022practical}} & \revd{Syn} & \revd{Random} \\
        
        \cmidrule{2-7}
        & \multirow{4}{*}{GBDT}
        & SecureBoost  & Accuracy  & \cite{cheng2021secureboost} & Syn & Manual\\
        & & Pivot  & Accuracy  & \cite{wu2020pivot} & Syn & Manual\\
        & & FedTree  & Accuracy, Efficiency  & \cite{li2022fedtree} & Syn & Random\\
        & & VF2Boost  & Efficiency & \cite{fu2021vf2boost} & Syn & Manual\\

        \cmidrule{2-7}
        & \revd{RF} & \revd{Fed-Forest} & \revd{Communication} & \revd{\cite{liu2020federated}} & \revd{Syn} & \revd{Random} \\

        \bottomrule
    \end{tabular}
    \begin{tablenotes}
        \item \textsuperscript{1} Abbreviations: NN - neural network; GBDT - gradient boosting decision trees; \revd{RF - random forest}; Any - model-agnostic.
        \item \textsuperscript{2} Dataset in experiments: Syn - synthetic datasets partitioned from global datasets.
        \item \textsuperscript{3} Datasets used in the experiments: Manual - features manually split without specific reasons; Random - features randomly split without explanation; N/A - no VFL experiments conducted.
    \end{tablenotes}
\end{table}


\revd{Most of the }existing VFL methods can be categorized into \textit{ensemble-based} and \textit{split-based}. Ensemble-based methods have each party maintain a full model for local prediction and use collaborative ensemble techniques during training. Conversely, split-based methods delegate each party with a portion of the model, representing different inference stages. A comprehensive comparison is in Appendix~\ref{sec:vfl-algo-details}. \revd{In this paper, we concentrate on the primary types of VFL, acknowledging that there are various subtypes as identified in \cite{liu2022vertical}. Exploring these subtypes in depth will be an objective of our future research efforts.}

\reva{In our experiments, we evaluate various VFL algorithms, including split-NN-based (e.g., SplitNN, C-VFL, \revd{FedOnce}), split-GBDT-based (FedTree), and ensemble-based (GAL). For fairness, evaluations exclude encryption or noise. Noting minor variances among split-GBDT-based methods such as FedTree and SecureBoost, FedTree is used as a representative in our experiments.}


\subsection{Experimental Settings}\label{sec:exp-setting}
This subsection includes the datasets and training method. Detailed dataset specifications, environments, and hyperparameter settings can be found in Appendix~\ref{sec:exp-details}.

\textbf{Datasets.} Our experiments utilize \revd{11} datasets: \revd{nine} \revd{centralized} ones (\texttt{covtype}~\cite{dataset-covtype}, \texttt{msd}~\cite{dataset-msd}, \texttt{gisette}~\cite{dataset-gisette}, \texttt{realsim}~\cite{dataset-realsim}, \texttt{epsilon}~\cite{dataset-epsilon}, \texttt{letter}~\cite{dataset-letter}, \texttt{radar}~\cite{dataset-radar}, \texttt{MNIST}~\cite{dataset-mnist}, \texttt{CIFAR10}~\cite{dataset-cifar10}), \revd{and two real-world VFL datasets (\texttt{NUS-WIDE}~\cite{nuswide}, \texttt{Vehicle}~\cite{duarte2004vehicle}),} with detailed descriptions available in Appendix~\ref{sec:exp-details}. The \texttt{msd} dataset is used for regression tasks, while the others cater to classification tasks. Each dataset is partitioned into 80\% training and 20\% testing instances \revd{except \texttt{NUS-WIDE}, \texttt{MNIST}, and \texttt{CIFAR10} with pre-defined test set}. The datasets' features are distributed among multiple parties (typically four), split based on party importance ($\alpha$) or correlation ($\beta$). In the correlation-based split, each party is assigned an equal number of features.



\textbf{Training.}
For classification tasks, we use accuracy as the evaluation metric, while regression tasks are evaluated using the Root Mean Square Error (RMSE). To ensure the reliability of our results, we conduct five runs for each algorithm, using seeds ranging from 0 to 4 to randomly split the datasets for each run, and then compute their mean metrics and standard deviation. Detailed hyper-parameter settings for each algorithms are provided in Appendix \ref{sec:exp-details}.

\subsection{VFL Accuracy}\label{sec:exp-acc}

\revb{In this subsection, we assess the impact on the performance of VFL algorithms when varying $\alpha$ and $\beta$. Our analysis includes all the three VFL categories in Table~\ref{tab:vfl-algorithms}}. The performance is summarized in Figure~\ref{fig:accuracy} and detailed in Table~\ref{tab:accuracy} \reva{in Appendix~\ref{sec:addexp}}. \reva{The result on \texttt{msd} dataset provides similar insights to others, thus only included in Table~\ref{tab:accuracy}}. From our exploration, we can draw three key observations.

\textbf{Split parameters $\alpha$ and $\beta$ significantly affect VFL algorithm performance, depending on the algorithm and dataset.} \revd{SplitNN and FedTree show stable performance across various $\alpha$ and $\beta$ settings. In contrast, C-VFL demonstrates notable performance fluctuations: up to 10\% on \texttt{epsilon} and 40\% on \texttt{letter} with varying $\alpha$. GAL performs better on imbalanced datasets (affected by $\alpha$ by 8\% on \texttt{letter} and \texttt{radar}, 2-5\% on others) and is minimally influenced by $\beta$. FedOnce, favoring balanced and highly correlated datasets, is affected by $\alpha$ (5-10\% on \texttt{letter}, \texttt{gisette}, \texttt{epsilon}) and by $\beta$ (1-3\% on \texttt{covtype}, \texttt{epsilon}). These findings highlight the need for comprehensive evaluations across a range of $\alpha$ and $\beta$ to determine VFL algorithms' robustness.}

\textbf{SplitNN often leads in accuracy across most datasets; however, the performance of split-GBDT-based and ensemble-based methods can vary significantly depending on the dataset.} As anticipated, given its iterative transmission of substantial representations and gradients, SplitNN often outperforms other methods across a majority of datasets. Comparatively, the performance of FedTree and GAL is dataset-dependent. FedTree is well-suited to high-dimensional, smaller datasets like \texttt{gisette}, but struggles with larger datasets like \texttt{epsilon} and \texttt{covtype}. GAL, on the other hand, performs admirably with binary classification and regression tasks, though its performance drops significantly as the number of classes increases, as observed on the \texttt{covtype} and \texttt{letter} dataset.

\textbf{The compression of SplitNN renders them particularly affected by party imbalance.} C-VFL, modelled after SplitNN, exhibits the least accuracy among tested baselines due to its compression approach. Moreover, C-VFL exhibits marked sensitivity to the imbalance level, \(\alpha\). Specifically, at \(\alpha=0.1\), its accuracy on datasets like \texttt{letter} and \texttt{epsilon} scarcely surpasses random guessing. However, C-VFL thrives in highly imbalanced split of \texttt{radar} dataset. This data-dependent behavior underscores an urgent need to refine compression techniques for VFL tailored to varying imbalances.

\begin{figure}
    \centering
    \includegraphics[width=.9\linewidth]{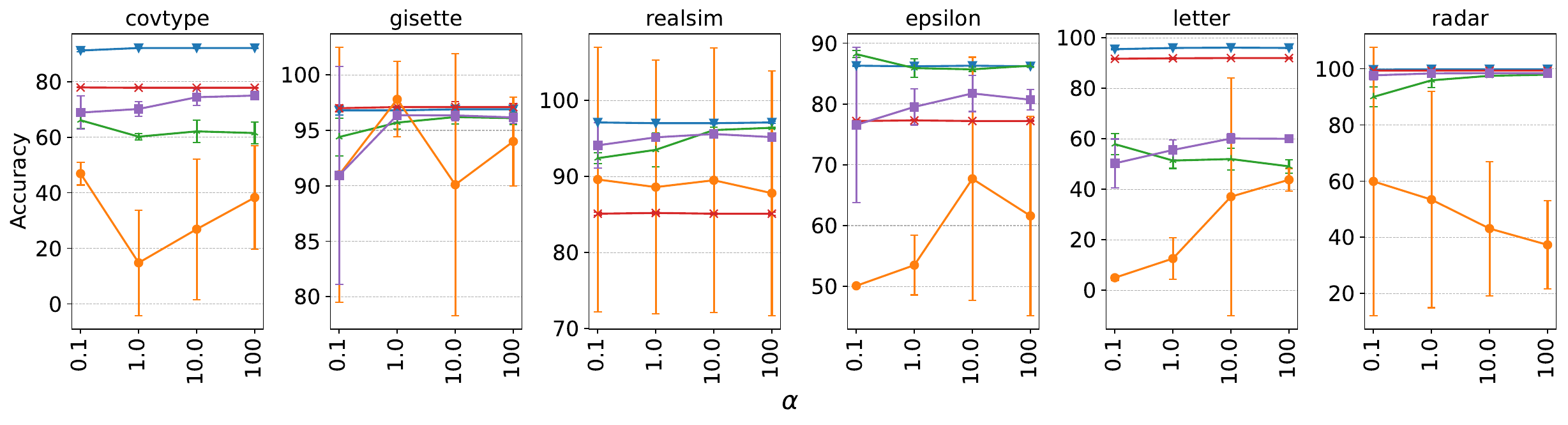}
    \includegraphics[width=.9\linewidth]{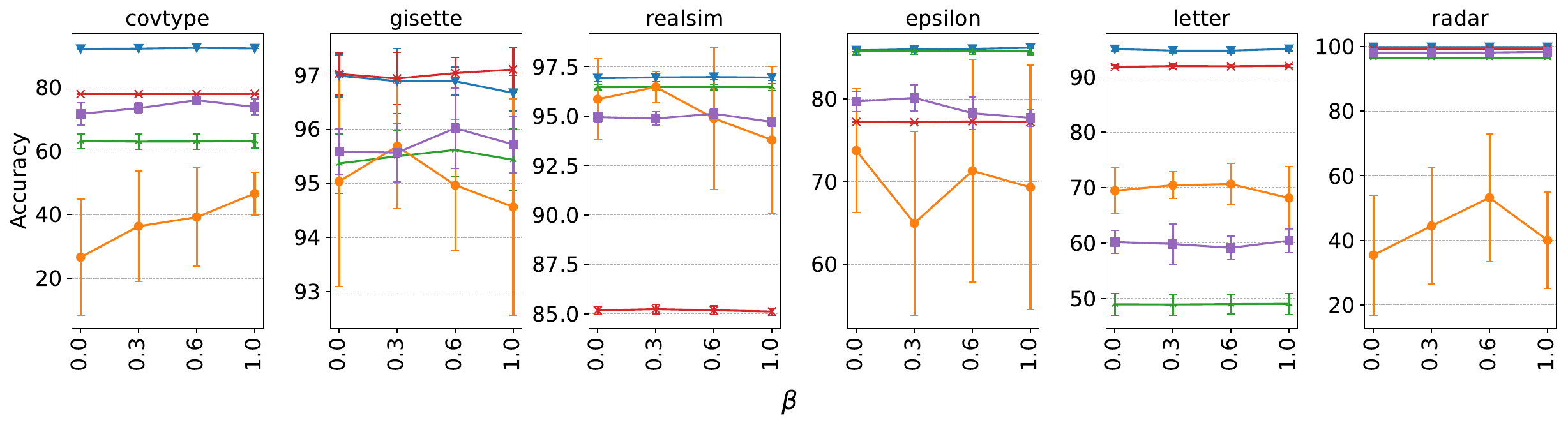}
    \includegraphics[width=.50\linewidth]{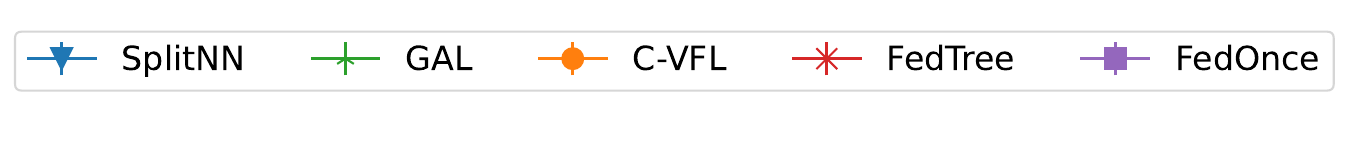}
    \caption{\revd{Accuracy of VFL algorithms on different datasets varying imbalance and correlation}}
    \label{fig:accuracy}
    \vspace{-8pt}
\end{figure}

\subsection{\revc{Performance Correlation: VertiBench Scope vs. Real Scope}}\label{sec:connect-real-syn}

\revd{In assessing the performance correlation between VertiBench-synthetic and real VFL datasets, we use derived $\alpha$ and $\beta$ values of \texttt{NUS-WIDE} and \texttt{Vehicle} (Section~\ref{sec:cmp-alpha-beta}) to generate comparable synthetic datasets. To evaluate the relative performance of each algorithm, we calculate the accuracy differences between \texttt{Vehicle}-synthetic and \texttt{NUS-WIDE}-synthetic datasets for each algorithm and compare with real dataset accuracy differences, with further details in Appendix~\ref{sec:connect-real-syn-apdx}.}

\revd{Our experiment reveals a positive correlation between relative algorithm performance on synthetic datasets with matching $\alpha$ and $\beta$, and their performance on real VFL datasets. This indicates that, under the same $\alpha$ or $\beta$, higher mean accuracy on synthetic datasets typically implies better performance on real VFL datasets, thus affirming the relevance of VertiBench-synthetic datasets in approximating real VFL performance.}

\begin{figure}[htpb]
    \centering
   
        \includegraphics[width=0.3\linewidth]{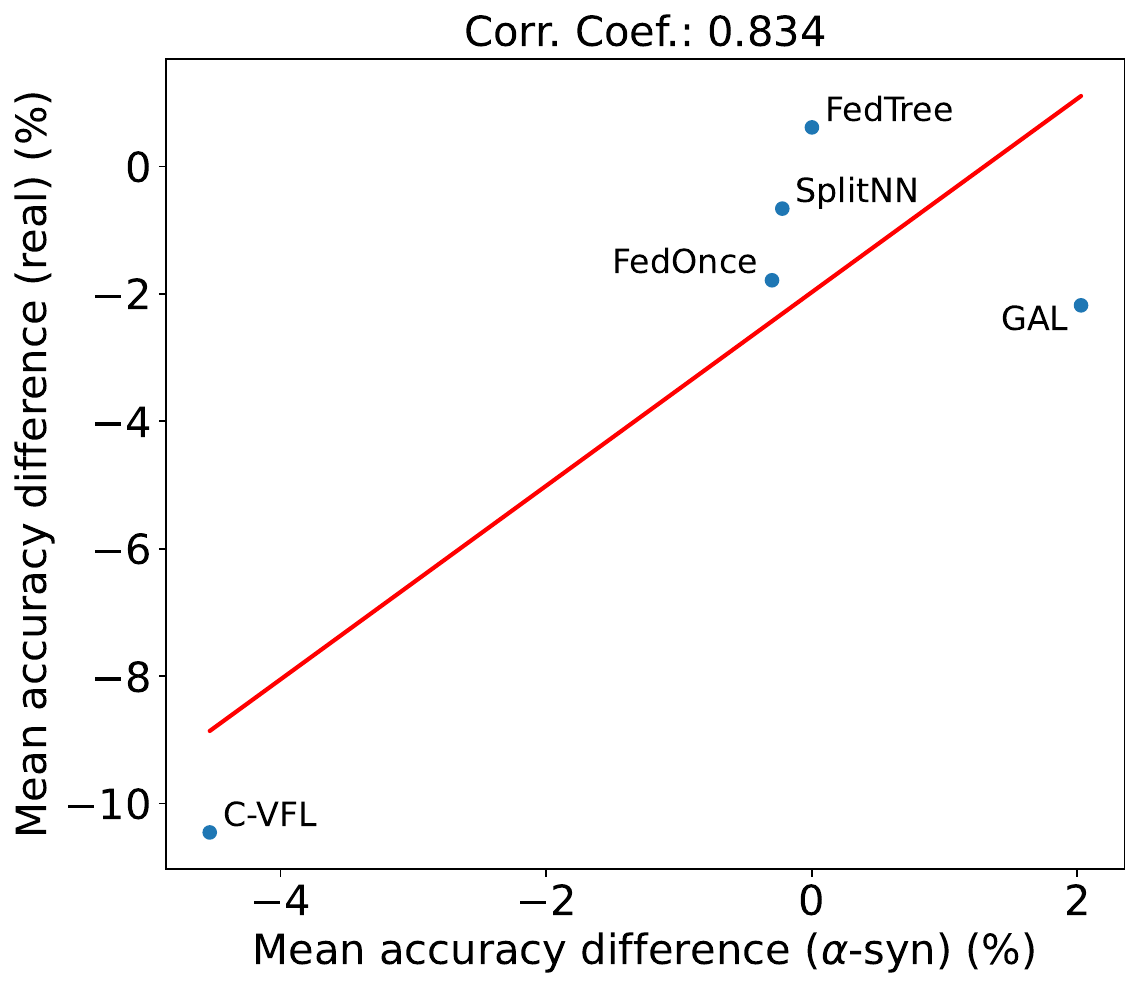}
        \includegraphics[width=0.3\linewidth]{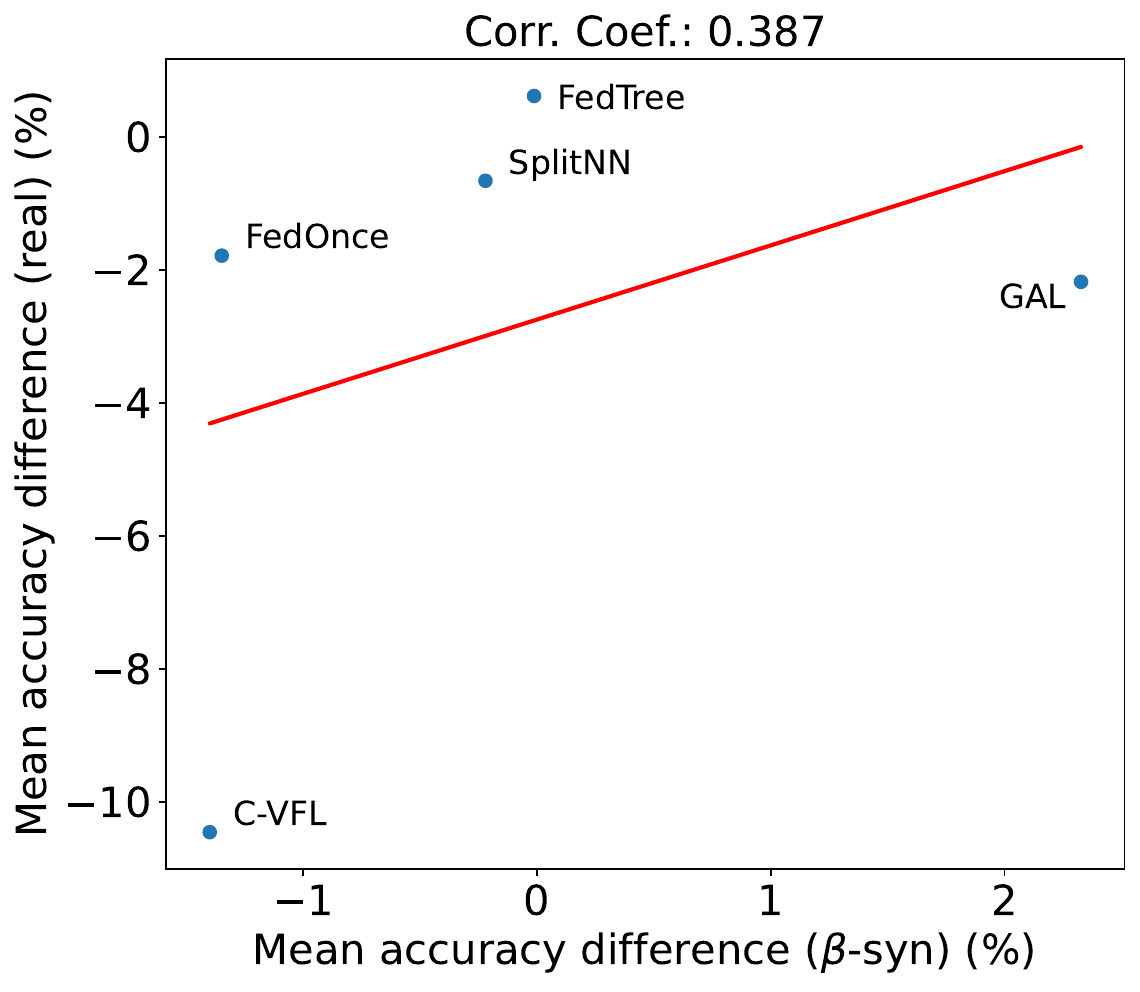}
        \caption{\revd{Mean accuracy differences: synthetic datasets vs. real datasets}}\label{fig:std-exp-syn-real-alpha-beta}
    \label{fig:syn-real-corr}
    \vspace{-8pt}
\end{figure}

\section{Conclusion}\label{sec:conclusion}
\revc{We introduce VertiBench, a refined benchmarking tool for Vertical Federated Learning (VFL), adept at generating a variety of synthetic VFL datasets from a single global dataset. The scope of VertiBench extends beyond the confines of existing uniform and real scopes, shedding light on VFL scenarios previously unexplored. Our findings underscore performance variations under diverse data partitions, emphasizing the need to evaluate VFL algorithms across varied feature splits for enhanced insights into their real-world applicability.}


\section{Reproducibility Statement}
\revb{The code for this study is accessible via a GitHub repository~\cite{vertibench-code}, accompanied by a \texttt{README.md} file that provides guidelines for environment setup and result reproduction. Comprehensive proofs of all theoretical results are meticulously detailed in Appendix~\ref{sec:proof}. Further, Appendix~\ref{sec:exp-details} offers a detailed description of dataset specifications and hyperparameter configurations.}

\section*{Acknowledgement}
This research is supported by the National Research Foundation Singapore and DSO National Laboratories under the AI Singapore Programme (AISG Award No: AISG2-RP-2020-018). Any opinions, findings and conclusions or recommendations expressed in this material are those of the authors and do not reflect the views of National Research Foundation, Singapore. This work is supported in part by AMD under the Heterogeneous Accelerated Compute Clusters (HACC) program.

\bibliographystyle{plainnat}
\bibliography{references}



\newpage


\appendix

\addcontentsline{toc}{section}{Appendix} 
\part{Appendix} 
\parttoc 


\section{Proof}\label{sec:proof}
\setcounter{theorem}{0}
\setcounter{lemma}{0}
\setcounter{corollary}{0}
\setcounter{definition}{0}
\setcounter{proposition}{0}


\begin{proposition}
    The probability mass function can be written as
    \begin{equation}
        \log\prob(y|X_K,...,X_1) = \sum_{i=1}^{K}\log{\frac{\prob(y|X_{k},...,X_1)}{\prob(y|X_{k-1},...,X_1)} } + \log\prob(y)
    \end{equation}
\end{proposition}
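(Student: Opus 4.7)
The plan is to recognize the right-hand side as a telescoping sum of log-ratios and then exponentiate to collapse it. First I would adopt the convention that conditioning on the empty tuple of variables is the same as no conditioning, i.e.\ $\prob(y \mid \mathbf{X}_0) := \prob(y)$, which makes the $k=1$ term of the sum well-defined as $\log\bigl(\prob(y\mid \mathbf{X}_1)/\prob(y)\bigr)$.

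Next I would rewrite the sum as the log of a product,
\begin{equation*}
\sum_{k=1}^{K}\log\frac{\prob(y\mid \mathbf{X}_{k},\ldots,\mathbf{X}_1)}{\prob(y\mid \mathbf{X}_{k-1},\ldots,\mathbf{X}_1)}
\;=\;
\log\prod_{k=1}^{K}\frac{\prob(y\mid \mathbf{X}_{k},\ldots,\mathbf{X}_1)}{\prob(y\mid \mathbf{X}_{k-1},\ldots,\mathbf{X}_1)},
\end{equation*}
and observe that the product telescopes: the numerator at index $k$ cancels the denominator at index $k+1$, leaving only the final numerator $\prob(y\mid \mathbf{X}_K,\ldots,\mathbf{X}_1)$ and the initial denominator $\prob(y\mid \mathbf{X}_0) = \prob(y)$. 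Adding the remaining $\log\prob(y)$ term on the right then produces $\log\prob(y\mid \mathbf{X}_K,\ldots,\mathbf{X}_1)$, which is exactly the left-hand side.

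The only step that requires any care, and therefore the main (mild) obstacle, is justifying the base of the telescope: that $\prob(y\mid \mathbf{X}_{k-1},\ldots,\mathbf{X}_1)$ at $k=1$ should be read as the unconditional $\prob(y)$. I would handle this by stating the convention explicitly at the outset, or equivalently by separating out the $k=1$ term and writing the sum as $\log\prob(y\mid \mathbf{X}_1) - \log\prob(y) + \sum_{k=2}^{K}\bigl[\log\prob(y\mid \mathbf{X}_k,\ldots,\mathbf{X}_1) - \log\prob(y\mid \mathbf{X}_{k-1},\ldots,\mathbf{X}_1)\bigr]$, after which standard telescoping finishes the argument. A brief remark noting that the identity holds for any ordering of $\mathbf{X}_1,\ldots,\mathbf{X}_K$ (since it is a purely algebraic consequence of the chain of ratios, not of any independence assumption) would be a natural closing sentence, consistent with the discussion in the paper immediately following the statement.
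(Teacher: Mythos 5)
Your proof is correct, and it takes a genuinely different and more economical route than the paper's. The paper proves the identity by first expanding $\prob(y\mid \mathbf{X}_K,\ldots,\mathbf{X}_1)$ via the definition of conditional probability and the chain rule applied to both $\prob(y,\mathbf{X}_K,\ldots,\mathbf{X}_1)$ and $\prob(\mathbf{X}_K,\ldots,\mathbf{X}_1)$, obtaining a product of likelihood ratios $c_k = \prob(\mathbf{X}_k\mid y,\mathbf{X}_{k-1},\ldots,\mathbf{X}_1)/\prob(\mathbf{X}_k\mid \mathbf{X}_{k-1},\ldots,\mathbf{X}_1)$, and only then applies Bayes' rule to convert each $c_k$ into the posterior ratio $\prob(y\mid \mathbf{X}_k,\ldots,\mathbf{X}_1)/\prob(y\mid \mathbf{X}_{k-1},\ldots,\mathbf{X}_1)$ appearing in the statement. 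You instead observe that the stated right-hand side is already a telescoping sum once the $k=1$ denominator is read as the unconditional $\prob(y)$, which collapses immediately to the left-hand side. Both arguments rest on the same implicit positivity of the conditioning events, and your explicit handling of the base case $\prob(y\mid \mathbf{X}_0):=\prob(y)$ is exactly the care the statement needs. What the paper's longer route buys is the intermediate likelihood-ratio form of each term, which underpins the subsequent interpretation of each summand as the information a party's features contribute; what your route buys is brevity and the clarity that the identity is a purely algebraic consequence of the chain of ratios, independent of any ordering or independence assumptions --- a point the paper itself makes only in the surrounding prose.
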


\begin{proof}
    According to the definition of conditional probability, this marginal distribution can be written as
\begin{equation}
    \begin{aligned}
         &\; \prob(y|X_K,...,X_1) \\
        =&\; \frac{\prob(y,X_K,...,X_1)}{\prob(X_K,...,X_1)} \\
        =&\; \frac{\prob(y)\prob(X_1|y) \prod_{k=2}^{K}{\prob(X_k|y,X_{k-1},...,X_1)}}
                  {\prob(X_1) \prod_{k=2}^{K}{\prob(X_k|X_{k-1},...,X_1)}} \\
        =&\; \prob(y) \frac{\prob(X_1|y)}{\prob(X_1)} 
             \prod_{k=2}^{K}{\frac{\prob(X_k|y,X_{k-1},...,X_1)}
                                   {\prob(X_k|X_{k-1},...,X_1)}}
    \end{aligned}
\end{equation}
Denoting
\begin{equation}\label{eq:pmf}
    c_k = \log\frac{\prob(X_k|y,X_{k-1},...,X_1)}
    {\prob(X_k|X_{k-1},...,X_1)}, \quad
    c_1 = \log\frac{\prob(X_1|y)}{\prob(X_1)}
\end{equation}
Adding logarithm on both sides, we have
\begin{equation}
    \log\mathcal{P}(y|X_K,...,X_1) = \sum_{i=1}^{K}\log{c_i} + \log\prob(y)
\end{equation} 

Furthermore, we have
\begin{equation}\label{eq:ck}
    \begin{aligned}
        c_k &=\; \frac{\prob(X_k|y,X_{k-1},...,X_1)}
                      {\prob(X_k|X_{k-1},...,X_1)} \\
            &=\; \frac{\prob(X_k,y|X_{k-1},...,X_1)}
                      {\prob(X_k|X_{k-1},...,X_1)\prob(y|X_{k-1},...,X)} \\
            &=\; \frac{\prob(y|X_{k},...,X_1)}
                      {\prob(y|X_{k-1},...,X_1)} \\
    \end{aligned}
\end{equation}
Combining \eqref{eq:pmf} and \eqref{eq:ck}, we have
\begin{equation}
    \log\mathcal{P}(y|X_K,...,X_1) = \sum_{i=1}^{K}\log{\frac{\prob(y|X_{k},...,X_1)}{\prob(y|X_{k-1},...,X_1)} } + \log\prob(y)
\end{equation}

\end{proof}

\revd{
\begin{proposition}
    For any real matrix $\mathbf{X}$,
    \(
    \text{Pcor}(\mathbf{X},\mathbf{X}) = \text{mcor}(\mathbf{X},\mathbf{X})
    \)
\end{proposition}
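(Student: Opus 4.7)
The plan is to show that when the two matrix arguments coincide, the Pcor formula reduces algebraically to the definition of the multi-way correlation coefficient mcor of \citet{taylor2020multi}, making the identity a matter of unwinding definitions.

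First I would specialize the Pcor formula to the diagonal case. Writing $m$ for the number of columns of $\mathbf{X}$, we have $d = \min(m,m) = m$, and $\text{cor}(\mathbf{X},\mathbf{X})$ is just the ordinary (square, symmetric, positive semidefinite) Spearman correlation matrix $\mathbf{R}$ of $\mathbf{X}$. Plugging these into Eq.~\ref{eq:pcor} gives
\begin{equation*}
\text{Pcor}(\mathbf{X},\mathbf{X}) \;=\; \frac{1}{\sqrt{m}}\sqrt{\frac{1}{m-1}\sum_{t=1}^{m}\bigl(\sigma_t(\mathbf{R}) - \overline{\sigma}\bigr)^{2}} \;=\; \sqrt{\frac{1}{m(m-1)}\sum_{t=1}^{m}\bigl(\sigma_t(\mathbf{R}) - \overline{\sigma}\bigr)^{2}}.
\end{equation*}

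Next I would quote the definition of mcor from \citet{taylor2020multi}, which for a $p\times p$ correlation matrix $\mathbf{R}$ reads $\text{mcor}(\mathbf{R}) = \sqrt{\sum_{t=1}^{p}(\sigma_t(\mathbf{R}) - \overline{\sigma})^{2}/\bigl(p(p-1)\bigr)}$, and observe that with $p = m$ this coincides exactly with the expression above for $\text{Pcor}(\mathbf{X},\mathbf{X})$. Hence the two quantities agree, which is the claim.

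The argument is therefore essentially bookkeeping: the only substantive step is confirming that the $\min(m_i, m_j)$ normalization in Pcor collapses to $m$ in the diagonal case, so that the extra $1/\sqrt{d}$ factor introduced to accommodate rectangular cross-correlation matrices lines up with Taylor's normalization $1/\sqrt{p}$ for a square correlation matrix. The main obstacle, if any, is simply stating the mcor definition in a form that matches Eq.~\ref{eq:pcor} verbatim; once both are written with their $1/\sqrt{p(p-1)}$ factor outside the sum of squared deviations of singular values, the identity is immediate. No use of specific properties of $\mathbf{X}$ beyond being a real matrix (so that $\mathbf{R}$ is well-defined) is needed.
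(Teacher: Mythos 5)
Your overall strategy is the same as the paper's: specialize the Pcor formula to the diagonal case and match it term-by-term against the definition of mcor. However, there is one inaccuracy that happens to elide the only substantive step. You quote Taylor's mcor as
\begin{equation*}
\text{mcor}(\mathbf{R}) = \sqrt{\frac{1}{p(p-1)}\sum_{t=1}^{p}\bigl(\sigma_t(\mathbf{R}) - \overline{\sigma}\bigr)^{2}},
\end{equation*}
i.e.\ in terms of the \emph{singular values} of the correlation matrix. The multi-way correlation coefficient of \citet{taylor2020multi} is actually defined via the \emph{eigenvalues} of the correlation matrix (the scaled standard deviation of its eigenvalues). With that definition, the identity is not pure bookkeeping: you still need the observation that the correlation matrix $\mathbf{R} = \text{cor}(\mathbf{X},\mathbf{X})$ is symmetric positive semidefinite, so its eigenvalues are nonnegative and coincide with its singular values (for a general symmetric matrix the singular values are only the absolute values of the eigenvalues). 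That single observation is precisely what the paper's proof consists of, and it is the piece your write-up skips by restating mcor in singular-value form. Once you add it, your argument is complete and matches the paper's; the normalization check $d=\min(m,m)=m$ that you do spell out is correct but is the easy part.
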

\begin{proof}
    In the context of the correlation matrix of $\mathbf{X}$, Pcor measures the standard deviation  of singular values, while mcor~\cite{taylor2020multi} measures the standard deviation of eigenvalues. Given that the correlation matrix is symmetric positive semi-definite, each eigenvalue corresponds to a singular value, resulting in Pcor equating to mcor.
\end{proof}
}

\revd{
\begin{proposition}
    For any two real matrices $\mathbf{X}_1$ and $ \mathbf{X}_2$,
    \(
    \text{Pcor}(\mathbf{X}_1,\mathbf{X}_2) \in [0,1]
    \)
\end{proposition}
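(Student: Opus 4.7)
The plan is to establish the two bounds separately, with the lower bound being essentially a definitional check and the upper bound requiring an algebraic reduction followed by a Frobenius-norm estimate.

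The lower bound, $\text{Pcor}(\mathbf{X}_1, \mathbf{X}_2) \geq 0$, is immediate from the definition: Pcor is written as the square root of a positive scalar multiple of a sum of squared deviations $\sum_{t=1}^d (\sigma_t - \bar{\sigma})^2$, and hence is manifestly non-negative, with equality if and only if all $d$ singular values of $\text{cor}(\mathbf{X}_1,\mathbf{X}_2)$ coincide.

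For the upper bound, my first move is to apply the standard bias-variance identity to rewrite
\[
\text{Pcor}^2 \;=\; \frac{1}{d(d-1)}\Bigl(\sum_{t=1}^d \sigma_t^{\,2} \;-\; d\,\bar{\sigma}^{\,2}\Bigr) \;=\; \frac{1}{d(d-1)}\bigl(\|C\|_F^{\,2} - d\,\bar{\sigma}^{\,2}\bigr),
\]
where $C := \text{cor}(\mathbf{X}_1,\mathbf{X}_2) \in \mathbb{R}^{m_1 \times m_2}$ and I have used that $\sum_t \sigma_t^{\,2} = \|C\|_F^{\,2}$. Next, I would invoke two elementary bounds. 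First, each entry of $C$ is a Pearson correlation, so $|c_{rs}| \leq 1$ and consequently $\|C\|_F^{\,2} \leq m_1 m_2$. Second, since singular values are non-negative, $(\sum_t \sigma_t)^2 \geq \sum_t \sigma_t^{\,2}$, which yields $\bar{\sigma} \geq \|C\|_F/d$ and thus $d\,\bar{\sigma}^{\,2} \geq \|C\|_F^{\,2}/d$. Combining, $\sum_t (\sigma_t - \bar{\sigma})^2 \leq \|C\|_F^{\,2}(d-1)/d$, so that $\text{Pcor}^2 \leq \|C\|_F^{\,2}/d^{\,2}$.

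The main obstacle lies in sharpening the Frobenius bound enough to conclude $\text{Pcor} \leq 1$. When $m_1 = m_2 = d$, the inequality $\text{Pcor}^2 \leq m_1 m_2/d^{\,2} = 1$ falls out directly. For the general case $m_1 \neq m_2$, the naive bound $\|C\|_F^{\,2} \leq m_1 m_2$ is too loose because it ignores the structural constraint that $C = A^\top B$ where the columns of $A \in \mathbb{R}^{n \times m_1}$ and $B \in \mathbb{R}^{n \times m_2}$ are unit-norm (after the centering and scaling implicit in $\text{cor}$). I expect the sharper argument to exploit this factorization together with the fact that only $d = \min(m_1, m_2)$ singular values can be nonzero; concretely, by writing $\sigma_t(C)$ as $\sigma_t(A^\top B)$ and applying Weyl-type inequalities $\sigma_t(C) \leq \|A\|_{\mathrm{op}}\,\sigma_t(B)$ (or its symmetric counterpart), one can trade the factor $m_1 m_2$ for $d \cdot \max(m_1, m_2)$ or, more carefully, the effective rank of whichever side has more columns. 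Carrying out this trade precisely, and then re-running the variance identity with the sharper bound, is the delicate step; the rest is bookkeeping.
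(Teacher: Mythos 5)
Your argument follows essentially the same route as the paper's: both reduce $\text{Pcor}^2$ to a bound of the form $\|C\|_F^2/d^2$ via the variance identity and then try to control the Frobenius norm of the correlation matrix entrywise. Your treatment of the square case $m_1=m_2=d$ is complete and correct, and your extra step $\bigl(\sum_t\sigma_t\bigr)^2\ge\sum_t\sigma_t^2$ keeps the computation consistent with the $\frac{1}{d-1}$ normalization in the stated definition (the paper silently switches to the $\frac{1}{d}$ population-variance convention, which happens to land on the same final bound).

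The step you defer, however --- sharpening the estimate when $m_1\ne m_2$ --- is a genuine gap, and it cannot be closed, because the statement is false for rectangular correlation matrices. Take $\mathbf{X}_1$ with two independent features $x,y$ and $\mathbf{X}_2$ consisting of $2k$ features each perfectly correlated with $x$. Then $C=\text{cor}(\mathbf{X}_1,\mathbf{X}_2)$ is the $2\times 2k$ matrix whose first row is all ones and whose second row is all zeros; its singular values are $\sqrt{2k}$ and $0$, $d=2$, and $\overline{\sigma}=\sqrt{k/2}$, so
\[
\text{Pcor}(\mathbf{X}_1,\mathbf{X}_2)=\frac{1}{\sqrt{2}}\sqrt{\bigl(\sqrt{2k}-\overline{\sigma}\bigr)^2+\overline{\sigma}^2}=\sqrt{k/2},
\]
which exceeds $1$ for every $k\ge 3$. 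No Weyl-type inequality will recover $\|C\|_F^2\le d^2$ here, since $\|C\|_F^2=2k$ can be made arbitrarily large while $d$ stays fixed at $2$. You should also know that the paper's own proof has exactly the same hole: it asserts $\mathbb{E}(\Sigma^2)=\text{tr}(C^TC)/d\le d^2/d$, i.e.\ $\text{tr}(C^TC)\le d^2$, which follows from the entrywise bound only when $C$ is square; for $m_1\ne m_2$ one gets only $\text{tr}(C^TC)\le m_1m_2$. In short, your instinct that the rectangular case is where the difficulty lies was exactly right, but the correct resolution is that the proposition needs an additional hypothesis (e.g.\ $m_1=m_2$) or a different normalization, not a sharper matrix inequality.
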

}
\revd{
\begin{proof}
    Denote the correlation matrix between $\mathbf{X}_1$ and $\mathbf{X}_2$ as $C$, the singular values of $C$ as $\Sigma=[\sigma_1,\dots,\sigma_{d}]$. Then
    \begin{equation}
        \text{Pcor}(\mathbf{X}_1,\mathbf{X}_2)=\sqrt{\frac{1}{d}\var(\Sigma)}=\sqrt{\frac{1}{d}(\mathbb{E}(\Sigma^2)-\mathbb{E}^2(\Sigma))}
    \end{equation}
    Thus,
    \begin{equation}
        0\le \text{Pcor}(\mathbf{X}_1,\mathbf{X}_2) \le \sqrt{\frac{1}{d}\mathbb{E}(\Sigma^2)}
    \end{equation}
    Given that every element in the correlation matrix is in range $[-1,1]$ 
    \begin{equation}
        \mathbb{E}(\Sigma^2) = \text{tr}(C^TC)/d \le d^2/d \le d
    \end{equation}
    Thus, $\text{Pcor}(\mathbf{X}_1,\mathbf{X}_2) \in [0,1]$
\end{proof}
}


\revd{
\begin{theorem}
    Consider a global dataset $\mathbf{X}$ comprising two independent datasets $\mathbf{D}_1, \mathbf{D}_2 \in \mathbb{R}^{n \times m}$, each of the same dimension. Independence implies that for any feature $a^{(1)}_i$ from $\mathbf{D}_1$ and any feature $a^{(2)}_j$ from $\mathbf{D}_2$, where $i, j \in [1, m]$, the correlation $\text{Cor}\left(a^{(1)}_i, a^{(2)}_j\right)=0$. Furthermore, assume within $\mathbf{D}_1$ and $\mathbf{D}_2$, all features are perfectly correlated, such that for all pairs of distinct features $a^{(1)}_i, a^{(1)}_j$ in $\mathbf{D}_1$ and $a^{(2)}_i, a^{(2)}_j$ in $\mathbf{D}_2$, with $i, j \in [1, m]$ and $i \neq j$, the correlations satisfy $\text{Cor}\left(a^{(1)}_i, a^{(1)}_j\right) = 1$ and $\text{Cor}\left(a^{(2)}_i, a^{(2)}_j\right) = 1$ respectively. When the features of \(\mathbf{X}\) are divided equally into two subsets, \(\mathbf{X}_1\) and \(\mathbf{X}_2\), such that each subset contains $m$ features, the overall inter-party correlation $\text{Icor}(\mathbf{X}_1,\mathbf{X}_2)$ satisfies
    $$\text{Icor}(\mathbf{X}_1,\mathbf{X}_2) \in \left[-\frac{m}{\sqrt{m(m-1)}}, 0\right].$$
    The lower bound occurs if and only if $\mathbf{X}_1$ comprises all features of either $\mathbf{D}_1$ or $\mathbf{D}_2$, with $\mathbf{X}_2$ containing the remaining features. The upper bound occurs if and only if $\mathbf{X}_1$ holds $m$ features from both $\mathbf{D}_1$ and $\mathbf{D}_2$, with $\mathbf{X}_2$ holding the remaining $m$ features from $\mathbf{D}_1$ and $\mathbf{D}_2$.
\end{theorem}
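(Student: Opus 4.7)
My plan is to reduce $\text{Icor}(\mathbf{X}_1,\mathbf{X}_2)$ to a one-parameter optimization and then identify the extremes via a monotonicity argument. With $K=2$, the Icor definition collapses, using the fact that $\text{cor}(\mathbf{X}_j,\mathbf{X}_i) = \text{cor}(\mathbf{X}_i,\mathbf{X}_j)^\top$ and transposition preserves singular values, to
\begin{equation*}
\text{Icor}(\mathbf{X}_1,\mathbf{X}_2) = \text{Pcor}(\mathbf{X}_1,\mathbf{X}_2) - \tfrac{1}{2}\bigl(\text{Pcor}(\mathbf{X}_1,\mathbf{X}_1) + \text{Pcor}(\mathbf{X}_2,\mathbf{X}_2)\bigr).
\end{equation*}
Any admissible equal split is then parameterized by a single integer $k \in \{0,1,\dots,m\}$ indicating how many features of $\mathbf{D}_1$ land in $\mathbf{X}_1$; the size constraint forces the remaining columns of $\mathbf{X}_1$ to come from $\mathbf{D}_2$, and $\mathbf{X}_2$ to carry the complements.

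Next, I would exploit the structural assumptions (perfect within-group, zero across-group correlations) to pin down the singular-value spectra of the three correlation matrices. After reordering columns so that $\mathbf{D}_1$-features appear first, $\text{cor}(\mathbf{X}_1,\mathbf{X}_2)$ becomes a direct sum of rank-one all-ones blocks $J_{k\times(m-k)}$ and $J_{(m-k)\times k}$; since $J_{a\times b} = e_a e_b^\top$ has a single nonzero singular value $\sqrt{ab}$, the resulting spectrum consists of two copies of $\sqrt{k(m-k)}$ and $m-2$ zeros. Similarly, each $\text{cor}(\mathbf{X}_i,\mathbf{X}_i)$ is block-diagonal with all-ones blocks of sizes $k$ and $m-k$, yielding eigenvalues $\{k, m-k, 0,\dots,0\}$. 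Substituting these spectra into the Pcor formula produces closed-form expressions depending only on the single quantity $t := k(m-k)$.

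The key observation is that $\text{Pcor}(\mathbf{X}_1,\mathbf{X}_2)$ is a strictly increasing function of $t$ while each self-term $\text{Pcor}(\mathbf{X}_i,\mathbf{X}_i)$ is strictly decreasing in $t$; hence Icor is strictly increasing in $t$ on $[0, m^2/4]$. Its extremes are therefore attained only at the endpoints of the admissible range of $t$: the minimum at $t=0$, which occurs if and only if $k\in\{0,m\}$, i.e., $\mathbf{X}_1$ equals one of $\mathbf{D}_1, \mathbf{D}_2$; and the maximum at $t = m^2/4$, which occurs if and only if $k=m/2$, i.e., each party holds an even split of $\mathbf{D}_1$ and $\mathbf{D}_2$ features. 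The if-and-only-if characterizations then follow because $k \mapsto k(m-k)$ attains the value $0$ only at $k\in\{0,m\}$ and its maximum only at $k=m/2$.

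The main obstacle I anticipate is the arithmetic in the singular-value computation—in particular, verifying that the Pcor normalization delivers exactly the constant $m/\sqrt{m(m-1)}$ at the lower endpoint and confirming that the monotonicity in $t$ is strict across the entire interior as well as at $t=0$, where the derivative of $\sqrt{t}$ blows up. Once the spectra and strict monotonicity are nailed down, the remainder is straightforward algebra driven by the block-diagonal structure induced by the independence between $\mathbf{D}_1$ and $\mathbf{D}_2$.
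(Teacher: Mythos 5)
Your plan is correct and follows essentially the same route as the paper's proof: parameterize the equal split by the number $k$ (the paper's $u$) of $\mathbf{D}_1$-features assigned to $\mathbf{X}_1$, read off the spectra $\{k,\,m-k,\,0,\dots\}$ and $\{\sqrt{k(m-k)},\,\sqrt{k(m-k)},\,0,\dots\}$ from the block structure, arrive at the closed form $\text{Icor}=\frac{1}{\sqrt{m(m-1)}}\left(\sqrt{2uv}-\sqrt{u^2+v^2}\right)$, and locate the extremes by monotonicity in $t=k(m-k)$, exactly as the paper does (your rank-one direct-sum computation of the cross-spectrum is simply a cleaner shortcut than the paper's characteristic-polynomial and rotation manipulations). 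The one obstacle you flagged is real but is resolved the same way the paper resolves it: the stated constant $m/\sqrt{m(m-1)}$ only emerges if Pcor is evaluated with $\overline{\sigma}$ taken to be $0$ (the uncentered second moment of the singular values), which is the convention the paper's own proof silently adopts.
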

}

\begin{proof}
{\color{black}
    Consider two sets of features, $\mathbf{X}_1$ and $\mathbf{X}_2$, derived from datasets $\mathbf{D}_1$ and $\mathbf{D}_2$. Let $\mathbf{X}_1$ contain $u$ features from $\mathbf{D}_1$ and $v$ features from $\mathbf{D}_2$, where $u, v \in [0, m]$ and $u+v = m$. The set $\mathbf{X}_2$ comprises the remaining features, specifically $v$ features from $\mathbf{D}_1$ and $u$ features from $\mathbf{D}_2$. The inner-party correlation matrices $\mathbf{C}_{11}$ for $\mathbf{X}_1$ and $\mathbf{C}_{22}$ for $\mathbf{X}_2$ are given by:
    \begin{equation}
        \mathbf{C}_{11} = 
        \begin{bmatrix}
            \mathbb{1}^{u\times u} & \mathbb{0}^{u\times v} \\
            \mathbb{0}^{v\times u} & \mathbb{1}^{v\times v}
        \end{bmatrix},
        \mathbf{C}_{22} =
        \begin{bmatrix}
            \mathbb{1}^{v\times v} & \mathbb{0}^{v\times u} \\
            \mathbb{0}^{u\times v} & \mathbb{1}^{u\times u}
        \end{bmatrix}
    \end{equation}
    To calculate the $\text{Pcor}(\mathbf{X}_{1},\mathbf{X}_{1})$ and $\text{Pcor}(\mathbf{X}_{2},\mathbf{X}_{2})$, we need to determine the singular values of $\mathbf{C}_{11}$ and $\mathbf{C}_{22}$. As these are symmetric matrices, their singular values are the absolute values of their eigenvalues. We will illustrate the process using $\mathbf{C}_{11}$, with $\mathbf{C}_{22}$ following a similar pattern. 
    
    The characteristic equation of $\mathbf{C}_{11}$ is
    \begin{equation}\label{eq:char_c11}
        \begin{vmatrix}
            \mathbf{U}_{11}^{u\times u} & \mathbb{0}^{u\times v} \\
            \mathbb{0}^{v\times u} & \mathbf{V}_{11}^{v\times v}
        \end{vmatrix}  = 0
    \end{equation}
    where $\mathbf{U}_{11}^{u\times u}$ and $\mathbf{V}_{11}^{v\times v}$ are matrices defined as
    \[
        \mathbf{U}_{11}^{u\times u} = \mathbf{V}_{11}^{v\times v} = 
        \begin{bmatrix}
            1-\lambda & 1 & \cdots & 1 \\
            1 & 1-\lambda & \cdots & 1 \\   
            \vdots & \vdots & \ddots & \vdots \\
            1 & 1 & \cdots & 1-\lambda
        \end{bmatrix}
    \]
    Equation \ref{eq:char_c11} simplifies to $\text{det}(\mathbf{U}_{11}^{u\times u})\text{det}(\mathbf{V}_{11}^{v\times v}) = 0$. The roots of $\text{det}(\mathbf{U}_{11}^{u\times u})=0$ and $\text{det}(\mathbf{V}_{11}^{v\times v})=0$ are the eigenvalues of $\mathbb{1}^{u\times u}$ and $\mathbb{1}^{v\times v}$ respectively. Since $\mathbb{1}^{u\times u}$ and $\mathbb{1}^{v\times v}$ are rank-1 matrices, they each have only one non-zero eigenvalue, $u$ and $v$ respectively. Therefore, $\mathbf{C}_{11}$ has two non-zero eigenvalues: $u$ and $v$. The Pcor for $\mathbf{X}_1$ is
    \begin{equation}\label{eq:pcor11}
        \text{Pcor}(\mathbf{X}_{1},\mathbf{X}_{1}) = \frac{1}{\sqrt{m}}\sqrt{\frac{u^2+v^2}{m-1}}
    \end{equation}
    A similar calculation for $\mathbf{C}_{22}$ yields:
    \begin{equation}\label{eq:pcor22}
        \text{Pcor}(\mathbf{X}_{2},\mathbf{X}_{2}) = \frac{1}{\sqrt{m}}\sqrt{\frac{u^2+v^2}{m-1}}
    \end{equation}

    Next, we consider the inter-party correlation matrices $\mathbf{C}_{12}$ and $\mathbf{C}_{21}$
    \begin{equation}
        \mathbf{C}_{12} = 
        \begin{bmatrix}
            \mathbb{1}^{u\times v} & \mathbb{0}^{u\times u} \\
            \mathbb{0}^{v\times v} & \mathbb{1}^{v\times u}
        \end{bmatrix},
        \mathbf{C}_{21} = \mathbf{C}_{12}^T =
        \begin{bmatrix}
            \mathbb{1}^{v\times u} & \mathbb{0}^{v\times v} \\
            \mathbb{0}^{u\times u} & \mathbb{1}^{u\times v}
        \end{bmatrix}
    \end{equation}
    Since $\mathbf{C}_{21} = \mathbf{C}_{12}^T$, they share the same singular values. A key principle in linear algebra states that the singular values of a matrix remain invariant under rotation. This invariance arises because rotating a matrix is mathematically equivalent to multiplying it by an orthogonal matrix, a process that does not alter its singular values. By rotating $\mathbf{C}_{12}$ and $\mathbf{C}_{21}$ 90 degrees counter-clockwise, we obtain symmetric matrices $\mathbf{C}_{12}^{\prime}$ and $\mathbf{C}_{21}^{\prime}$, whose singular values are the absolute values of their eigenvalues. As $\mathbf{C}_{12}^{\prime}$ is a rank-2 matrix, it has two non-zero eigenvalues.

    The characteristic equation of $\mathbf{C}_{12}^{\prime}$ is
    \begin{equation}\label{eq:char_c12}
        \begin{vmatrix}
            \mathbf{U}_{12}^{u\times u} & \mathbb{1}^{u\times v} \\
            \mathbb{1}^{v\times u} & \mathbf{V}_{12}^{v\times v}
        \end{vmatrix}  = 0
    \end{equation}
    where $\mathbf{U}_{12}^{u\times u}$ and $\mathbf{V}_{12}^{v\times v}$ are defined as
    \begin{equation}
        \mathbf{U}_{12}^{u\times u} = \begin{bmatrix}
            -\lambda & 0 & \cdots & 0 \\
            0 & -\lambda & \cdots & 0 \\
            \vdots & \vdots & \ddots & \vdots \\
            0 & 0 & \cdots & -\lambda
            \end{bmatrix}^{u\times u},
        \mathbf{V}_{12}^{v\times v} = \begin{bmatrix}
            -\lambda & 0 & \cdots & 0 \\
            0 & -\lambda & \cdots & 0 \\
            \vdots & \vdots & \ddots & \vdots \\
            0 & 0 & \cdots & -\lambda
            \end{bmatrix}^{v\times v}
    \end{equation}
    Since we focused on non-zero eigenvalues ($\lambda\neq 0$), $\mathbf{U}_{12}^{u\times u}$ is invertible. According to the property of block matrices, Eq.~\ref{eq:char_c12} is equivalent to
    \begin{equation}\label{eq:char_c12_v2}
        \begin{aligned}
            \det (\mathbf{U}_{12}^{u\times u})\det (\mathbf{V}_{12}^{v\times v}-\mathbb{1}^{v\times u}({\mathbf{U}_{12}^{-1}})^{u\times u}\mathbb{1}^{u\times v}) &= 0
        \end{aligned}
    \end{equation}
    Since $\det (\mathbf{U}_{12}^{u\times u})=(-\lambda)^u$, and 
    \begin{equation}
        \det (\mathbf{U}_{12}^{-1}) = \begin{vmatrix}
            -\frac{1}{\lambda} & 0 & \cdots & 0 \\
            0 & -\frac{1}{\lambda} & \cdots & 0 \\
            \vdots & \vdots & \ddots & \vdots \\
            0 & 0 & \cdots & -\frac{1}{\lambda}
            \end{vmatrix} = \left(-\frac{1}{\lambda}\right)^{u}
    \end{equation}
    Given $\lambda\neq 0$, Eq.~\ref{eq:char_c12_v2} is equivalent to
    \begin{equation}
        \begin{vmatrix}
            \frac{u}{\lambda}-\lambda & \frac{u}{\lambda} & \cdots & \frac{u}{\lambda} \\
            \frac{u}{\lambda} & \frac{u}{\lambda}-\lambda & \cdots & \frac{u}{\lambda} \\
            \vdots & \vdots & \ddots & \vdots \\
            \frac{u}{\lambda} & \frac{u}{\lambda} & \cdots & \frac{u}{\lambda}-\lambda
        \end{vmatrix}^{v\times v} = 0
    \end{equation}
    \begin{equation}\label{eq:char_c12_v3}
        \begin{vmatrix}
            u-\lambda^2 & u & \cdots & u \\
            u & u-\lambda^2 & \cdots & u \\
            \vdots & \vdots & \ddots & \vdots \\
            u & u & \cdots & u-\lambda^2
        \end{vmatrix}^{v\times v} = 0
    \end{equation}
    The roots of Eq.~\ref{eq:char_c12_v3} are the eigenvalues of $u\cdot\mathbb{1}^{v\times v}$, which is a rank-1 matrix. The only non-zero eigenvalue of $u\cdot\mathbb{1}^{v\times v}$ is $uv$, leading to $\lambda^2 = uv$. Thus, the two non-zero eigenvalues of $\mathbf{C}_{12}^{\prime}$ are $\pm\sqrt{uv}$, and the Pcor of $\mathbf{C}_{12}$ is
    \begin{equation}\label{eq:pcor12}
        \text{Pcor}(\mathbf{X}_{1},\mathbf{X}_{2}) = \frac{1}{\sqrt{m}}\sqrt{\frac{2uv}{m-1}}
    \end{equation}
    Combining Eq.~\ref{eq:pcor11}, \ref{eq:pcor22}, and \ref{eq:pcor12}, the overall inter-party correlation (Icor) is derived as
    \begin{equation}
        \begin{aligned}
            \text{Icor}(\mathbf{X}_1,\mathbf{X}_2) &=
            \frac{1}{2}\left({2 * \text{Pcor}(\mathbf{X}_1,\mathbf{X}_2) - \text{Pcor}(\mathbf{X}_1,\mathbf{X}_1) - \text{Pcor}(\mathbf{X}_2,\mathbf{X}_2)}\right) \\
            &= \frac{1}{\sqrt{m}}\sqrt{\frac{2uv}{m-1}} - \frac{1}{\sqrt{m}}\sqrt{\frac{u^2+v^2}{m-1}} \\
            &= \frac{1}{\sqrt{m(m-1)}}\left(\sqrt{2uv} - \sqrt{u^2+v^2}\right)
        \end{aligned}
    \end{equation}
    By the AM-HM inequality, $\text{Icor}(\mathbf{X}_1,\mathbf{X}_2) \leq 0$, with equality if and only if $u=v=\frac{m}{2}$. 

    For the lower bound, given $u=m-v$, expressing Icor in terms of $v$ gives
    \begin{equation}
        \text{Icor}(\mathbf{X}_1,\mathbf{X}_2) = \frac{1}{\sqrt{m(m-1)}}\left(\sqrt{2mv-2v^2} - \sqrt{2v^2-2mv+m^2}\right)
    \end{equation}
    Letting $t=2mv-2v^2(v\in[0,m])$, it holds $t\in[0,m^2/2]$. Consider the function $f(t)=\sqrt{t} - \sqrt{m^2-t}$, the derivative of $f(t)$ ensures
        \begin{equation}
            \begin{aligned}
                f^{\prime}(t) &= \frac{1}{2\sqrt{t}} + \frac{1}{2\sqrt{m^2-t}} \\
                &= \frac{\sqrt{m^2-t} - \sqrt{t}}{2\sqrt{t(m^2-t)}} \ge 0  \\
            \end{aligned}
        \end{equation}
    Therefore, $f(t)$ and $\text{Icor}(\mathbf{X}_1,\mathbf{X}_2)$ are monotonically non-descreasing w.r.t. $t$. The lower bound of Icor is reached when $t=0$, yielding
    \begin{equation}
        \text{Icor}(\mathbf{X}_1,\mathbf{X}_2) \ge -\frac{m}{\sqrt{m(m-1)}}
    \end{equation}
    This condition holds if and only if $v=0$ or $v=m$.
}
\end{proof}


\begin{theorem}
Consider a feature index set \(\mathcal{A}=\{1,2,...,m\}\) and a characteristic function \(v:2^\mathcal{A}\rightarrow\mathbb{R}\) such that \(v(\emptyset)=0\). Let \(\phi_j(v)\) denote the importance of the \(j\)-th feature on \(v\) such that \(\sum_{j=1}^{m}\phi(j)=v(\mathcal{A})\). \revb{Assume that the indices in \(\mathcal{A}\) are randomly distributed to $K$ parties with probabilities \(r_1,...,r_K\sim \text{Dir}(\alpha_1,\dots,\alpha_K)\). Let \(Z_i\) be the sum of feature importance for party \(i\). Then, we have \(\forall i\in[1,K]\) and \(\mathbb{E}[Z_i]\propto \alpha_i\)}.
\end{theorem}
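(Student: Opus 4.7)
The plan is to express $Z_i$ as an explicit linear functional of the random feature-to-party assignment, then peel off the expectation in two layers using the tower property: first over the feature allocations conditional on the Dirichlet draw, then over the Dirichlet draw itself. Concretely, let $A_{j,i}\in\{0,1\}$ indicate the event that feature $j$ is assigned to party $i$, so that $Z_i = \sum_{j=1}^{m} \phi_j(v)\, A_{j,i}$. Per the splitting procedure, given the probability vector $r=(r_1,\dots,r_K)$, each feature is independently categorical with parameters $r$, so $(A_{j,1},\dots,A_{j,K})\mid r \sim \mathrm{Cat}(r)$ and the $A_{j,i}$ are conditionally independent across $j$ given $r$.

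First, I would compute the inner conditional expectation. Since $\mathbb{E}[A_{j,i}\mid r]=r_i$ and the importances $\phi_j(v)$ are deterministic constants, linearity of expectation together with the stated efficiency condition $\sum_{j=1}^{m}\phi_j(v)=v(\mathcal{A})$ gives
\begin{equation}
\mathbb{E}[Z_i\mid r] \;=\; \sum_{j=1}^{m}\phi_j(v)\,\mathbb{E}[A_{j,i}\mid r] \;=\; r_i\sum_{j=1}^{m}\phi_j(v) \;=\; r_i\, v(\mathcal{A}).
\end{equation}

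Second, I would take the outer expectation over $r\sim\mathrm{Dir}(\alpha_1,\dots,\alpha_K)$ using the standard Dirichlet marginal mean $\mathbb{E}[r_i]=\alpha_i/\sum_{k=1}^{K}\alpha_k$. By the law of total expectation,
\begin{equation}
\mathbb{E}[Z_i] \;=\; \mathbb{E}\big[\mathbb{E}[Z_i\mid r]\big] \;=\; v(\mathcal{A})\,\mathbb{E}[r_i] \;=\; \frac{v(\mathcal{A})}{\sum_{k=1}^{K}\alpha_k}\,\alpha_i,
\end{equation}
which, since the prefactor is independent of $i$, establishes $\mathbb{E}[Z_i]\propto \alpha_i$ for every $i\in[1,K]$.

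There is no substantial obstacle here; the argument is essentially the Dirichlet--multinomial mean computation, with weighted indicators $\phi_j(v)A_{j,i}$ replacing unit-weight feature counts, as the authors themselves note. The only mild care points are (i) stating precisely that feature assignments are conditionally independent categoricals given $r$ so that linearity can be applied cleanly, and (ii) invoking (or briefly deriving, via the Beta marginal of a Dirichlet) the mean formula $\mathbb{E}[r_i]=\alpha_i/\sum_k\alpha_k$. The efficiency axiom $\sum_j \phi_j(v)=v(\mathcal{A})$ is exactly what collapses the inner sum into a single scalar and is satisfied by both importance notions the paper considers (Shapley value and Shapley-CMI).
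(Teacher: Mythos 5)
Your proof is correct and follows essentially the same route as the paper's: both write $Z_i$ as a sum of importance-weighted assignment indicators, apply linearity of expectation, and reduce to the Dirichlet mean $\mathbb{E}[r_i]\propto\alpha_i$. Your version is slightly tidier in making the conditioning on $r$ explicit and in using the exact formula $\mathbb{E}[r_i]=\alpha_i/\sum_k\alpha_k$ where the paper only writes $\alpha_i\approx\mathbb{E}[r_i]$, but the underlying argument is identical.
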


\begin{proof}
	For each feature \(j\) assigned to party \(i\) with probability \(r_i\), we define the feature importance \(Y_{ij}\) as:
	\begin{equation}
		Y_{ij}=
		\begin{cases}
			\phi_j(v),&	w.p. \; r_i \\
			0,&	w.p.\; 1-r_i
		\end{cases}
	\end{equation}
	By leveraging the property of linearity of expectation, we find that:
	\begin{equation}
		\mathbb{E}[Z_i]=\sum_{j=1}^{m}\mathbb{E}[Y_{ij}]=\sum_{j=1}^{m}\phi_j(v)\mathbb{E}[r_i]=\mathbb{E}[r_i]\sum_{j=1}^{m}\phi(j)
	\end{equation}
	Given that \(\sum_{j=1}^{m}\phi(j)=v(\mathcal{A})\), we derive:
	\begin{equation}
		\mathbb{E}[Z_i]=\mathbb{E}[r_i] v(\mathcal{A})
	\end{equation}
    \revb{Since the property of Dirichlet distribution asserts that \(\alpha_i \approx \mathbb{E}[r_i]\), it holds
    \begin{equation}
        \mathbb{E}[Z_i]\propto \alpha_i  v(\mathcal{A})
    \end{equation}
    Moreover, since \(v(\mathcal{A})\) is a constant, it follows that:
    \begin{equation}
        \mathbb{E}[Z_i]\propto \alpha_i
    \end{equation}
    }
\end{proof}

\begin{proposition}
    Given a Dirichlet distribution $\text{Dir}(\alpha_1,\dots,\alpha_K)$ with mean variance $\sigma$, symmetric Dirichlet distribution $\text{Dir}({\alpha})$ that has the same mean variance $\sigma$ if $\;{\alpha} = \frac{K - 1 - K^2\sigma}{  K^3\sigma}$.
\end{proposition}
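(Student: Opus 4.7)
The plan is to reduce the claim to a one-variable algebraic equation by exploiting the standard component-wise variance formula for a Dirichlet distribution. Recall that for $\text{Dir}(\alpha_1,\dots,\alpha_K)$ with concentration $\alpha_0=\sum_{i=1}^{K}\alpha_i$, the marginal variance of the $i$-th coordinate is
\[
\text{Var}(X_i)=\frac{\alpha_i(\alpha_0-\alpha_i)}{\alpha_0^2(\alpha_0+1)}.
\]
I would take this identity as a known fact; its derivation is standard via the Beta marginals of the Dirichlet.

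First, I would interpret ``mean variance'' as the average of the $K$ marginal variances, i.e.\ $\sigma=\frac{1}{K}\sum_{i=1}^{K}\text{Var}(X_i)$. For the symmetric case $\text{Dir}(\alpha,\dots,\alpha)$, all marginals are identically distributed, so the average equals any single marginal variance. Substituting $\alpha_i=\alpha$ and $\alpha_0=K\alpha$ into the formula gives
\[
\sigma=\frac{\alpha(K\alpha-\alpha)}{(K\alpha)^2(K\alpha+1)}=\frac{K-1}{K^2(K\alpha+1)}.
\]

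The final step is a routine algebraic inversion: clearing the denominator yields $K^2(K\alpha+1)\sigma=K-1$, hence $K^3\sigma\,\alpha=K-1-K^2\sigma$, and solving for $\alpha$ produces
\[
\alpha=\frac{K-1-K^2\sigma}{K^3\sigma},
\]
as claimed. There is no real obstacle here; the only subtlety worth flagging is that the formula is well-defined (positive $\alpha$) precisely when $\sigma<(K-1)/K^2$, which is automatically the upper bound on the mean variance of any Dirichlet on the $(K-1)$-simplex, so the correspondence is consistent. I would also briefly note that the derivation uses only the equality of the two mean variances and the symmetry assumption, so the matching $\alpha$ is unique.
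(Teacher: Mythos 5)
Your proof is correct and follows essentially the same route as the paper's: both start from the marginal variance formula for the symmetric Dirichlet, $\var(X_i)=\frac{K-1}{K^2(K\alpha+1)}$, equate it to $\sigma$, and solve algebraically for $\alpha$. Your added remarks on the validity range $\sigma<(K-1)/K^2$ and uniqueness are a small bonus the paper omits, but the core argument is the same.
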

    
\begin{proof}
Suppose we have variables $X_1,\dots,X_K$ following the Dirichlet distribution, denoted as $\text{Dir}({\alpha},\dots,{\alpha})$. Leveraging the inherent properties of the Dirichlet distribution, we can formulate the variance $\var(X_i)$ for all $i\in[1,K]$ as 

\begin{equation}
\var(X_i) = \frac{K-1}{K^2(K{\alpha}+1)}
\end{equation}

The mean variance, denoted as $\sigma$, can subsequently be articulated in terms of the expected variance, $\mathbb{E}[\var(X_i)]$, as 

\begin{equation}
\mathbb{E}[\var(X_i)] = \frac{K-1}{K^2(K{\alpha}+1)} = \sigma
\end{equation}

Recognizing that for a Dirichlet distribution $\sigma>0$ holds, we can transform the above equation to express ${\alpha}$ in terms of $\sigma$:

\begin{equation}
{\alpha} = \frac{K - 1 - K^2\sigma}{  K^3\sigma}
\end{equation}
\end{proof}

\section{Details of VFL Algorithms}\label{sec:vfl-algo-details}
In this section, we provide a detailed comparison of the existing VFL algorithms as an extension of Section~\ref{sec:vfl-algorithms}. This section critically reviews current VFL algorithms, with a focus on accuracy, efficiency, and communication size. VertiBench concentrates on standard supervised learning tasks such as classification and regression within synchronized parties, summarized in Table \ref{tab:vfl-algorithms}. Notably, this benchmark excludes studies exploring different VFL aspects such as privacy~\cite{jin2021cafe}, fairness~\cite{qi2022fairvfl}, data pricing~\cite{jiang2022vfps}, asynchronization~\cite{zhang2021secure,hu2019fmdl,zhang2021asysqn}, latency~\cite{fu2022celu-vfl}, and other tasks like unsupervised learning~\cite{Chang2020AsynDGAN}, matrix factorization~\cite{li2021fedmf}, multi-task learning~\cite{chen2022fedmsplit}, and coreset construction~\cite{huang2022corvfl}. While most VFL algorithms presume accurate inter-party data linking, we adopt this approach in VertiBench, despite recent contrary findings~\cite{wu2022fedsim,nock2021impact} that this assumption may not be true. 

The \revd{mainstream} existing methods can be classified into two categories: \textit{ensemble-based} and \textit{split-based}. The distinguishing factor lies in the independent prediction capability of each party. Ensemble-based methods involve parties each maintaining a full model for local feature prediction, with collaborative ensemble methods during training, while split-based methods require each party to hold a partial model forming different inference stages of the full model. Consequently, split-based partial models cannot perform independent inference. For split-based models, our focus is on advanced models such as neural networks (NNs) and gradient boosting decision trees (GBDTs)~\cite{chen2016xgboost}, though VertiBench can accommodate various models~\cite{hardy2017private,gu2020fdskl}. Split-NN-based models are trained by transferring representations and gradients, while split-GBDT-models are trained by transferring gradients and histograms. \revd{While acknowledging the existence of various VFL algorithm subtypes \cite{li2022vertical}, our current study primarily addresses the major types of VFL algorithms. The exploration of other subtypes remains a subject for future research.}

\subsection{Ensemble-based VFL Algorithms} 
In the ensemble-based VFL algorithms detailed in Algorithm~\ref{alg:ensemble-based}, each iteration $t$ commences with the primary party $P_1$ calculating the residual of the prior global model $F^{t-1}(\cdot)$ (line 3). This is followed by the communication of residuals $\mathbf{r}_1^t$ to the secondary parties ($P_2,\dots,P_K$). In the ensuing step (line 5), each secondary party trains a local model $f(\theta_i^t;\cdot)$ on its local data $\mathbf{X}_i$ to predict the residuals, subsequently sending the model parameters $\theta_i^t$ back to $P_1$. $P_1$ then aggregates these local models and updates the global model $F^t(\cdot)$ (line 6). This process iterates until a convergence criterion is achieved.

Specifics of residual sharing and model aggregation depend on algorithm design. In AL, residuals are shared among parties, and models are aggregated through summation. Conversely, in GAL, pseudo residuals (i.e., gradients) are shared, and models are aggregated through a weighted summation. Furthermore, the aggregation weight in GAL can updated during the training process.

\begin{algorithm}[htb]   
    \caption{Outline of ensemble-based VFL algorithms}
    \label{alg:ensemble-based}
    
    \SetKwInOut{Input}{Input}
    \SetKwInOut{Output}{Output}
    \SetKwInOut{Algorithms}{Algorithms}
    \let\oldnl\nl
    \newcommand{\nonl}{\renewcommand{\nl}{\let\nl\oldnl}}

    \Input{Number of iterations $T$; number of parties $K$; data and labels of primary party $\mathbf{X}_1,\textbf{y}$, learning rate $\eta^t$; data of secondary parties $\mathbf{X}_2,\dots,\mathbf{X}_K$}
    \Output{Models on $K$ parties $f(\theta_1;\cdot),\dots,f(\theta_K;\cdot)$}
    \Algorithms{AL, GAL}
    \BlankLine
    
    \nonl
    $\texttt{Residual}(F^{t-1}(\mathbf{X}),\mathbf{y}) = \begin{cases}
        \mathcal{L}(F^{t-1}(\mathbf{X}),\mathbf{y}) &, \text{AL} \\
        \frac{\partial}{\partial{F^{t-1}(\mathbf{X})}} \mathcal{L}(F^{t-1}(\mathbf{X}),\mathbf{y}) &, \text{GAL}
    \end{cases}
    $

    \nonl
    $\texttt{Merge}_{i=1}^k f(\theta_1^t;\mathbf{X}_1) = \begin{cases}
        \eta^t \sum_{i=1}^k f(\theta_i^t;\mathbf{X}_i) &, \text{AL} \\
        \eta^t \sum_{i=1}^k w_i^t f(\theta_i^t;\mathbf{X}_i) &, \text{GAL}
    \end{cases}
    $

    Initialize $\theta_1^0,\dots,\theta_K^0$; initialize $F^0 \leftarrow 0$\;
    \For{$t=1,\dots,T$}{
        $\mathbf{r}_1^t \leftarrow \texttt{Residual}(F^{t-1}(\mathbf{X}),\mathbf{y})$\tcc*{$P_1$: Compute residual}
        \For{$i=1,\dots,k$}{
            $\theta_i^t \leftarrow \arg\min_{\theta_i^t} \mathcal{L}(f(\theta_i^t;\mathbf{X}_i),\mathbf{r}_1^{t})$\tcc*{$P_K$: Optimize local model}
        }

        $F^t(\mathbf{X}) \leftarrow F^{t-1}(\mathbf{X}) + \texttt{Merge}_{i=1}^k f(\theta_1^t;\mathbf{X}_1)$\tcc*{$P_1$ Update ensemble model}
    }
    
\end{algorithm}

\subsection{Split-NN-based VFL Algorithms} 
As described in Algorithm~\ref{alg:split-nn-based}, each iteration $t$ starts with the parties $P_i$ conducting forward-propagation on their local data $\mathbf{X}_i$ to derive local representation $\mathbf{Z}_i^t$ (line 4). These representations are subsequently forwarded to the primary party $P_1$. Depending on the iteration, $P_1$ then merges the local representations (line 6), further derives a global prediction $\mathbf{\hat{y}}^t$ with an aggregated model (line 7), updates the aggregated model parameters $\theta_1^t$ (line 8), and broadcast the encoded aggregation model $\theta_i^t$ to all parties (line 9). The parties $P_i$ then employ these encoded aggregation model to update their local models (line 11). This process is repeated until a specified criterion for stopping is met.

The specific methods of encoding, determining aggregation frequency, and merging are dependent on the algorithm design. In the process of forward-pass encoding, SplitNN sends local representations directly to Party $P_1$ for merging, while C-VFL compresses these representations before transmission. In contrast, BlindFL utilizes a source layer to encode local representations, ensuring privacy preservation. During backward-pass encoding, C-VFL transmits the top-k-compressed aggregation model. Both SplitNN and BlindFL initially compute the gradients with respect to $\mathbf{Z}$ and subsequently broadcast either the raw or source-layer encoded gradients to all the parties. Regarding aggregation frequency, C-VFL aggregates every $Q$ iterations to reduce communication cost, while both SplitNN and BlindFL aggregate at every iteration. For the merging process, SplitNN and C-VFL use concatenation of local representations, while BlindFL applies a secret-sharing summation with source-layer-encoded representations.

\revd{FedOnce~\cite{wu2022practical}, a split-based VFL approach, distinguishes itself by its single-round communication protocol. Secondary parties initially engage in unsupervised learning, aiming to predict noise and subsequently develop local representations. These representations are then transferred to the primary party, which utilizes them in training a SplitNN model on the primary dataset. This method notably reduces both the communication size and the synchronization overhead.}

\begin{algorithm}[htb]
    \caption{Outline of split-NN-based VFL algorithms}
    \label{alg:split-nn-based}

    \SetKwInOut{Input}{Input}
    \SetKwInOut{Output}{Output}
    \SetKwInOut{Algorithms}{Algorithms}
    \let\oldnl\nl
    \newcommand{\nonl}{\renewcommand{\nl}{\let\nl\oldnl}}

    \Input{Number of iterations $T$; number of parties $K$; data and labels of primary party $\mathbf{X}_1,\textbf{y}$, learning rate $\eta^t$; data of secondary parties $\mathbf{X}_2,\dots,\mathbf{X}_K$; local epochs $Q$}
    \Output{Models on $K$ parties $f(\theta_1;\cdot),\dots,f(\theta_K;\cdot)$}
    \Algorithms{SplitNN, C-VFL, BlindFL}
    \BlankLine

    \nonl
        $\texttt{Encode}(\mathbf{Z}) = \begin{cases}
        \mathbf{Z} &, \text{SplitNN} \\
        \mathtt{EmbeddingCompress}(\mathbf{Z}) &, \text{C-VFL} \\
        \mathtt{SourceLayer}(\mathbf{Z}) &, \text{BlindFL}
    \end{cases}$

    \nonl
    $\texttt{Encode}(\theta^t_1) = \begin{cases}
        \frac{\partial}{\partial{\mathbf{Z}_i^t}} \mathcal{L}(\mathbf{\hat{y}}^t,\mathbf{y};\theta^t_1) &, \text{SplitNN} \\
        \mathtt{TopK}(\theta^t_1) &, \text{C-VFL} \\
        \mathtt{SourceLayer}(\frac{\partial}{\partial{\mathbf{Z}_i^t}} \mathcal{L}(\mathbf{\hat{y}}^t,\mathbf{y};\theta^t_1)) &, \text{BlindFL}
    \end{cases}$

    \nonl
    $\texttt{condition}(t) = \begin{cases}
        \text{True} &, \text{SplitNN, BlindFL} \\
        t \mod Q = 0 &, \text{C-VFL} \\
    \end{cases}$

    \nonl
    $\texttt{Merge}_{i=1}^k \mathbf{Z}_i^t = \begin{cases}
        \mathtt{Concatenate}(\mathbf{Z}^1,\dots,\mathbf{Z}^t) &, \text{SplitNN}, \text{C-VFL} \\
        \mathtt{SecretSharingSum}(\mathbf{Z}^1,\dots,\mathbf{Z}^t) &, \text{BlindFL}
    \end{cases}$
    
    Initialize $\theta_1^0,\dots,\theta_K^0$\;
    \For{$t=1,\dots,T$}{
        \For{$i=1,\dots,k$}{
            $\mathbf{Z}_i^t \leftarrow \mathtt{Encode}(f(\theta_i^t;\mathbf{X}_i))$\tcc*{$P_i$: Get local representations}
        }
        
        \If{$\mathtt{condition}(t)$} {
            $\mathbf{Z}^t \leftarrow \texttt{Merge}_{i=1}^k \mathbf{Z}_i^t$\tcc*{$P_1$: Merge local representations}
            $\mathbf{\hat{y}}^t \leftarrow f(\theta_1^t;\mathbf{Z}^t)$\tcc*{$P_1$: Get global prediction}
            $\theta_1^t \leftarrow \theta_1^{t-1}-\eta^t \frac{\partial}{\partial{\mathbf{Z}_1^t}} \mathcal{L}(\mathbf{\hat{y}}^t,\mathbf{y})$\tcc*{$P_1$: Update aggregated model}
            $\mathbf{g}_i^t \leftarrow \mathtt{Encode}(\theta_1^t)$\tcc*{$P_1$: Encode aggregated model}
        }
        
        \For{$i=1,\dots,k$}{
            $\theta_i^t \leftarrow \theta_i^{t-1}-\eta^t \mathbf{g}_i^t$\tcc*{$P_i$: Update local model}
        }
    }

\end{algorithm}

\subsection{Split-GBDT-based VFL Algorithms} 
Outlined in Algorithm~\ref{alg:split-gbdt-based}, each iteration $t$ initiates with the primary party $P_1$ encoding the gradient of residuals, yielding $\mathbf{r}^t$ (line 3). Following this, all parties $P_i$ calculate local histograms $\mathbf{H}_i^t$ utilizing their individual local data $\mathbf{X}_i$ and the encoded residuals $\mathbf{r}^t$ (line 5). These local histograms are then transmitted to $P_1$ for merging (line 6). In the next step, $P_1$ trains a decision tree using the merged histogram $\mathbf{H}^t$ and the encoded residuals $\mathbf{r}^t$ (line 7). The selected split points from this tree are communicated to the secondary party that possesses the split feature, which stores these split points for potential future requests during inference (line 9). Finally, $P_1$ updates the ensemble model $F^t$ with the newly trained tree (line 10). This sequence of operations continues until a set stopping condition is fulfilled.

Depending on the algorithm, the specific techniques for encoding, computing histograms, merging, and updating modes differ. Pivot views the instance set in each node as confidential information, implementing homomorphic encryption on it. Conversely, SecureBoost, FedTree, and VF2Boost treat the instance set as public information and apply homomorphic encryption only on a specific set of instances. In terms of merging, SecureBoost and FedTree perform homomorphic decryption directly to acquire actual sum values after aggregating encrypted histograms. VF2Boost further introduces efficiency measures such as Polynomial-based histogram packing and reordered histogram accumulation, while Pivot utilizes multi-party computation supporting comparison to maintain the secrecy of values at all times. In terms of the update mode, SecureBoost, Pivot, and FedTree adopt a sequential approach, whereas VF2Boost utilizes a pipeline processing for speedup. Additionally, it is worth mentioning that SecureBoost employs a threshold for binary classification to enhance accuracy in the context of datasets with label imbalance.

\begin{algorithm}[htb]
    \caption{Outline of split-GBDT-based VFL algorithms}
    \label{alg:split-gbdt-based}

    \SetKwInOut{Input}{Input}
    \SetKwInOut{Output}{Output}
    \SetKwInOut{Algorithms}{Algorithms}
    \let\oldnl\nl
    \newcommand{\nonl}{\renewcommand{\nl}{\let\nl\oldnl}}

    \Input{Number of iterations $T$; number of parties $K$; data and labels of primary party $\mathbf{X}_1,\textbf{y}$, learning rate $\eta^t$; data of secondary parties $\mathbf{X}_2,\dots,\mathbf{X}_K$}
    \Output{Models on $K$ parties $f(\theta_1;\cdot),\dots,f(\theta_K;\cdot)$}
    \Algorithms{SecureBoost, Pivot, FedTree, VF2Boost}
    \BlankLine
    \nonl
    $\mathtt{HE} = \mathtt{HomomorphicEncrypt}, \mathtt{HD} = \mathtt{HomomorphicDecrypt}, \mathtt{MPC} = \mathtt{MultiPartyComputationEncode}, \otimes\text{ is homomorphic multiplication}, \alpha\text{ is instance mask}$
 
    \nonl
    $\texttt{Encode}(\mathbf{Z}) = \begin{cases}
        \mathtt{HE}(\mathbf{r}\cdot\alpha) &, \text{SecureBoost, FedTree,VF2Boost} \\
        \mathbf{r} \otimes \mathtt{HE}(\alpha) &, \text{Pivot }\\
    \end{cases}$

    \nonl
    $\texttt{Hist}(\mathbf{X},\mathbf{r}) = \begin{cases}
        \mathtt{HE}(\mathtt{Histogram}(\mathbf{X}\cdot\alpha,\mathbf{r})) &, \text{SecureBoost}, \text{FedTree}, \text{VF2Boost} \\
        \mathtt{Histogram}(\mathbf{X},\mathbf{r}) \otimes \mathtt{HE}(\alpha) &, \text{Pivot} \\
    \end{cases}$

    \nonl
    $\texttt{Merge}_{i=1}^k \mathbf{H}_i^t = \begin{cases}
        \mathtt{HD}(\mathtt{HESum}(\mathbf{H}_1^t,\dots,\mathbf{H}_K^t)) &, \text{SecureBoost}, \text{FedTree}\\
        \mathtt{HD}(\mathtt{Pack}(\mathtt{HESum}(\texttt{Re-order}(\mathbf{H}_1^t,\dots,\mathbf{H}_K^t)))) &, \text{VF2Boost} \\
        \mathtt{MPC}(\mathtt{HESum}(\mathbf{H}_1^t,\dots,\mathbf{H}_K^t)) &, \text{Pivot}

    \end{cases}$

    \nonl
    $\texttt{mode} = \begin{cases}
        \texttt{sequence} &, \text{SecureBoost}, \text{Pivot}, \text{FedTree} \\
        \texttt{pipeline} &, \text{VF2Boost}
    \end{cases}$

    Initialize $\theta_1^0,\dots,\theta_K^0$; let federated model $F^0(\cdot) \leftarrow 0$\;
    \For{$t=1,\dots,T$ \textbf{in} $\mathtt{mode}$}{
        $\mathbf{r}^t \leftarrow \texttt{Encode}(\frac{\partial}{\partial{F^{t-1}}} \mathcal{L}(F^{t-1}(\mathbf{H}^{t-1}),\mathbf{y}))$\tcc*{$P_1$: Encode gradients}
        \For{$i=1,\dots,k$}{
            $\mathbf{H}_i^t \leftarrow \texttt{Hist}(\mathbf{X}_i,\mathbf{r}^t)$\tcc*{$P_i$: Compute local histogram}
        }
        $\mathbf{H}^t \leftarrow \texttt{Merge}_{i=1}^k \mathbf{H}_i^t$\tcc*{$P_1$: Merge local histograms}
        $\theta_1^t \leftarrow \arg\min_{\theta_1^t} \mathcal{L}(f(\theta_1^t;\mathbf{H}^t),\mathbf{r}^t)$\tcc*{$P_1$: Construct tree}
        \For{$i=2,\dots,k$}{
            $\theta_i^t \leftarrow \text{Selected split points in }\mathbf{H}_i^t$\tcc*{$P_i$: Update secondary parties}
        }
        $F^t(\mathbf{H}^t) \leftarrow F^{t-1}(\mathbf{H}^{t-1}) + f(\theta_1^t;\mathbf{H}^t)$\tcc*{$P_1$: Update ensemble model}
    }

\end{algorithm}

\section{Split Method Details}\label{sec:approach-detail}
In this section, we formally state out proposed importance-based feature-split algorithm in Algorithm~\ref{alg:importance}. After initializing local datasets for each party (line 1), a series of probabilities $p_1,\dots,p_K$ s.t. $\sum_{i=1}^{K}p_i=1$ is sampled from a Dirichlet distribution, parameterized by $\alpha_1,\dots,\alpha_K$ (line 2). For each feature, it then proceeds to randomly select a party $P_k$, according to the probabilities $p_k$, and assigns the respective feature to $P_k$ (lines 3-5). In order to address potential failures in algorithms when confronted with empty features, we can optionally initialize each party with a random feature prior to the commencement of the algorithm.

\begin{algorithm}[htpb] 
    \caption{Feature Splitting by Importance}
    \label{alg:importance}
    \KwIn{Global dataset $\mathbf{X}\in\mathbb{R}^{n\times m}$, importance factors $\alpha_1,\dots,\alpha_K$}
    \KwOut{Local dataasets $\mathbf{X}_1,\dots,\mathbf{X}_K$}

    $\mathbf{X}_1,\dots,\mathbf{X}_K \leftarrow \emptyset$\tcc*{Initialize local dataset for $P_i$}

    $p_1,\dots,p_K \leftarrow \text{Dirichlet}(\alpha_1,\dots,\alpha_K)$\tcc*{Sample probabilities for $K$ parties}
    
    \For{$j=1,\dots,m$}{
        $k \leftarrow \text{Random choice from }[1,K]\cap\mathbb{N}\;\; w.p.\;\; (p_1,\dots,p_K)$\tcc*{Choose a party}
        $D_k \leftarrow D_k \cup \{\mathbf{X}[:,j]\}$\tcc*{Add feature $\mathbf{X}[:,j]$ to $P_k$}
    }
    \Return{$\mathbf{X}_1,\dots,\mathbf{X}_K$}
\end{algorithm}

\section{Empirical Validation of Split Methods}\label{sec:test-split}
To rigorously evaluate the practical performance of our proposed correlation evaluation metric and the correlation-based feature-split algorithm, we conduct a series of systematic experiments.

\subsection{Correlation Evaluation Metric}  \label{subsec:validate_eval}
In order to validate the efficacy of the correlation evaluation metric, Pcor, we create two synthetic datasets, $\mathbf{X}_1$ ($m_1$ features) and $\mathbf{X}_2$ ($m_2$ features), using the \texttt{sklearn} library. Initially, party $P_1$ holds $\mathbf{X}_1$, and $P_2$ holds $\mathbf{X}_2$. Over the course of the experiment, we gradually transfer features from $\mathbf{X}_1$ to $P_2$, each time exchanging for a feature of $\mathbf{X}_2$. This process continues until all features of $\mathbf{X}_1$ end up on $P_2$, while the total number of features remains constant during the whole process.

Our observations, as presented in Figure~\ref{fig:pcor-eigen-10-10} and \ref{fig:pcor-singular-10-10}, reveal the following: (1) Pcor behaves similarly to mcor when evaluating inner-party correlation and shows a similar trend to mcor~\cite{taylor2020multi} when assessing inter-party correlation. (2) Both Pcor and mcor exhibit the lowest inter-party correlation and the highest inner-party correlation at the extremities of the x-axis. This suggests that the datasets $\mathbf{X}_1$ and $\mathbf{X}_2$ are managed by distinct parties and exhibit complete independence. This pattern is also reflected in Figure~\ref{fig:pcor-singular-10-20} when Pcor is applied to parties of different dimensions. These observations validate the appropriateness of Pcor as a measure for evaluating inter-party correlation, even when the number of features between the two parties varies.

\begin{figure}[htpb]
    \centering
    \begin{subfigure}[b]{0.32\linewidth}
        \includegraphics[width=\linewidth]{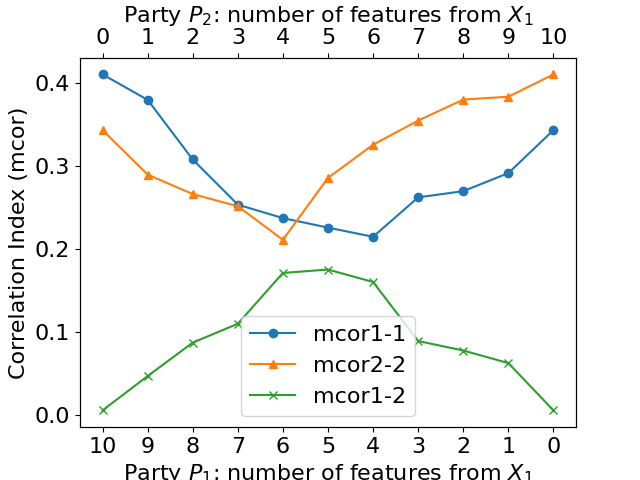}
        \caption{mcor: $m_1=m_2=10$}\label{fig:pcor-eigen-10-10}
    \end{subfigure}
    \begin{subfigure}[b]{0.32\linewidth}
        \includegraphics[width=\linewidth]{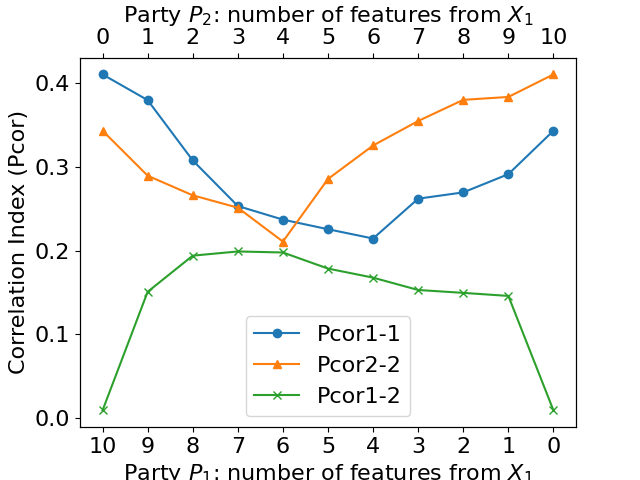}
        \caption{Pcor: $m_1=m_2=10$}\label{fig:pcor-singular-10-10}
    \end{subfigure}
    \begin{subfigure}[b]{0.32\linewidth}
        \includegraphics[width=\linewidth]{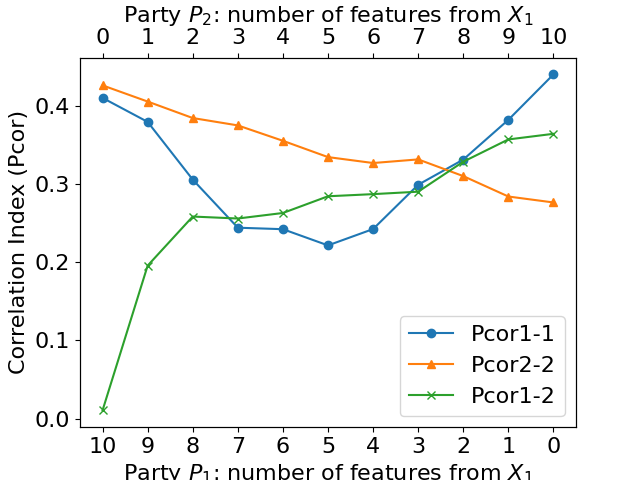}
        \caption{Pcor: $m_1=10,m_2=20$}\label{fig:pcor-singular-10-20}
    \end{subfigure}
    \caption{The trend of correlation and metrics when exchanging features between two parties. mcor: multi-way correlation~\cite{taylor2020multi}; Pcor(i)-(j): Pcor($\mathbf{X}_i,\mathbf{X}_j$); mcor(i)-(j): mcor($\mathbf{X}_i,\mathbf{X}_j$).}
    \label{fig:corrmetric}
\end{figure}

\subsection{Feature-split Algorithm} \label{subsec:validate_split}

\paragraph{Importance-based Feature Split.} \reva{We rigorously investigate the correlation between \( \{\alpha_i\} \) and party importance, using both Shapley value and Shapley-CMI for feature importance assessment. For property (1), in a two-party VFL experiment, we fix \( \alpha_2=1 \) and vary \( \alpha_1 \) from \( 10^{-2} \) to \( 10^{3} \). For each \( \alpha_2 \), we compute the scaled \( \alpha_i \) as \( \alpha_i/\sum_i{\alpha_i} \) and run 1,000 trials to estimate the party importance's expected value. The subsequent relationship between scaled \( \alpha_i \) and the expected party importance is illustrated in Figure~\ref{fig:alpha-imp-shapley} and \ref{fig:alpha-imp-cmi}, demonstrating clear proportionality.}

\reva{For property (2), four parties are divided under a symmetric Dirichlet distribution (where \( \forall i, \alpha_i=\alpha \)) with \( \alpha \) values ranging from \( 10^{-2} \) to \( 10^{3} \). For each \( \alpha \), the standard variance of party importance is estimated over 1,000 trials, as visualized in Figure~\ref{fig:alpha-std-shapley} and \ref{fig:alpha-std-cmi}. These plots signify a negative relationship between \( \alpha=\|\{\alpha_i\}_{i=1}^K\|_2 \) and the standard variance of party importance, extending the scenario of a random split leading to consistent party imbalance levels.}

\begin{figure}[htpb]
    \centering
    \begin{subfigure}[b]{0.24\linewidth}
        \includegraphics[width=\linewidth]{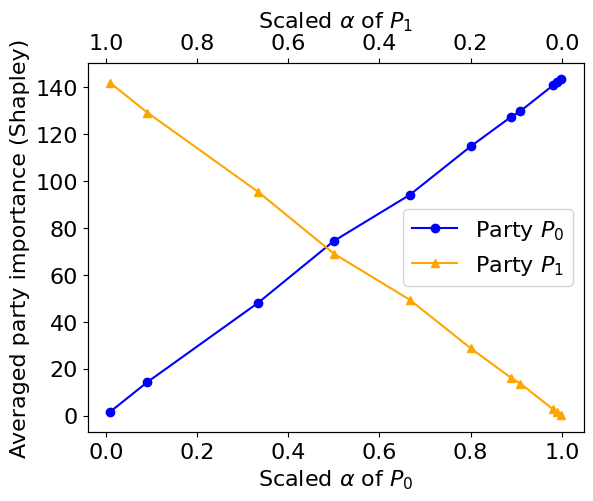}
        \vspace{-8pt}
        \caption{$\alpha_i$ vs. Shapley}\label{fig:alpha-imp-shapley}
    \end{subfigure}
    \begin{subfigure}[b]{0.24\linewidth}
        \includegraphics[width=\linewidth]{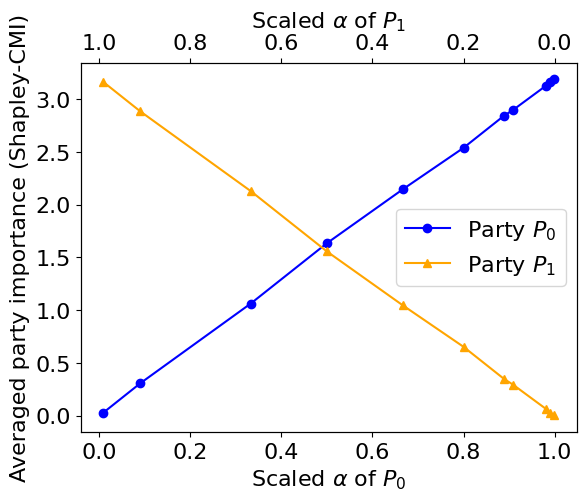}
        \vspace{-8pt}
        \caption{$\alpha_i$ vs. Shapley-CMI}\label{fig:alpha-imp-cmi}
    \end{subfigure}
    \begin{subfigure}[b]{0.24\linewidth}
        \includegraphics[width=\linewidth]{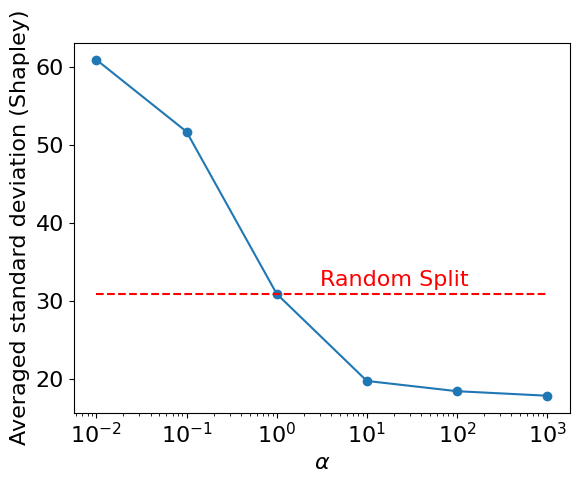}
        \vspace{-8pt}
        \caption{$\alpha$ vs. Shapley std.}\label{fig:alpha-std-shapley}
    \end{subfigure}
    \begin{subfigure}[b]{0.24\linewidth}
        \includegraphics[width=\linewidth]{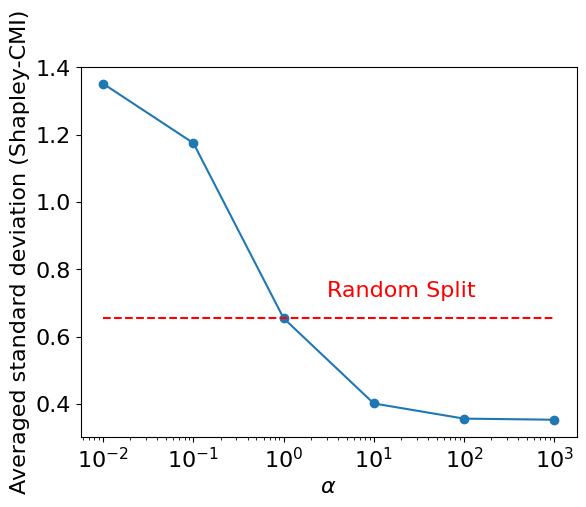}
        \vspace{-8pt}
        \caption{$\alpha$ vs. Shapley-CMI std.}\label{fig:alpha-std-cmi}
    \end{subfigure}
    \caption{\reva{Relationship between \( \alpha \) and party importance: (a)(b) Proportionality of scaled \( \alpha_i \) to party $P_i$'s importance; (c)(d) Negative correlation between \( \alpha \) and standard variance (std.) of party importance. The red dotted line indicates the importance variance of random split.}}\label{fig:alpha-imp}
    \vspace{-8pt}
\end{figure}

\paragraph{Correlation-based Feature Split.} We turn our attention towards validating the efficacy of our proposed correlation-based feature-split algorithm. Three synthetic datasets, each encompassing 10 features, are independently generated using the \texttt{sklearn} library. These datasets are subsequently concatenated along the feature axis, yielding a global dataset with 30 features. This dataset, with features shuffled, is then split into three local datasets, each containing 10 features, deploying our proposed algorithm with $\beta$ values set at 0, 0.5, and 1.0.

\reva{Specifically, we split synthetic datasets with various $\beta$ values using Algorithm~\ref{alg:correlation} (as shown in Figure~\ref{fig:pcor-split-beta0.0}, \ref{fig:pcor-split-beta0.5}, and \ref{fig:pcor-split-beta1.0}) and compare this to a random split (Figure~\ref{fig:pcor-shuffle}). Our findings reveal that as $\beta$ increases, inter-party correlation correspondingly escalates. Additionally, as depicted in Figure~\ref{fig:vehicle-shuffle-split}, when we perform splits on a concatenated feature-shuffled real VFL dataset Vehicle ($\beta=0$), \textbf{VertiBench, when parameterized with the same $\beta$, exactly reconstruct the real feature split of Vehicle dataset.} This indicates that \revc{VertiBench scope} aligns well with \revc{the real scope} which corresponds to a scenario with small inter-party correlation as further depicted in Figure~\ref{fig:pcor-real-apdx}. In contrast, a uniformly random split (\revc{uniform scope}) produces a distinct correlation matrix. } 

\begin{figure}[htpb]
    \centering
    \begin{subfigure}[b]{0.24\linewidth}
        \includegraphics[width=\linewidth]{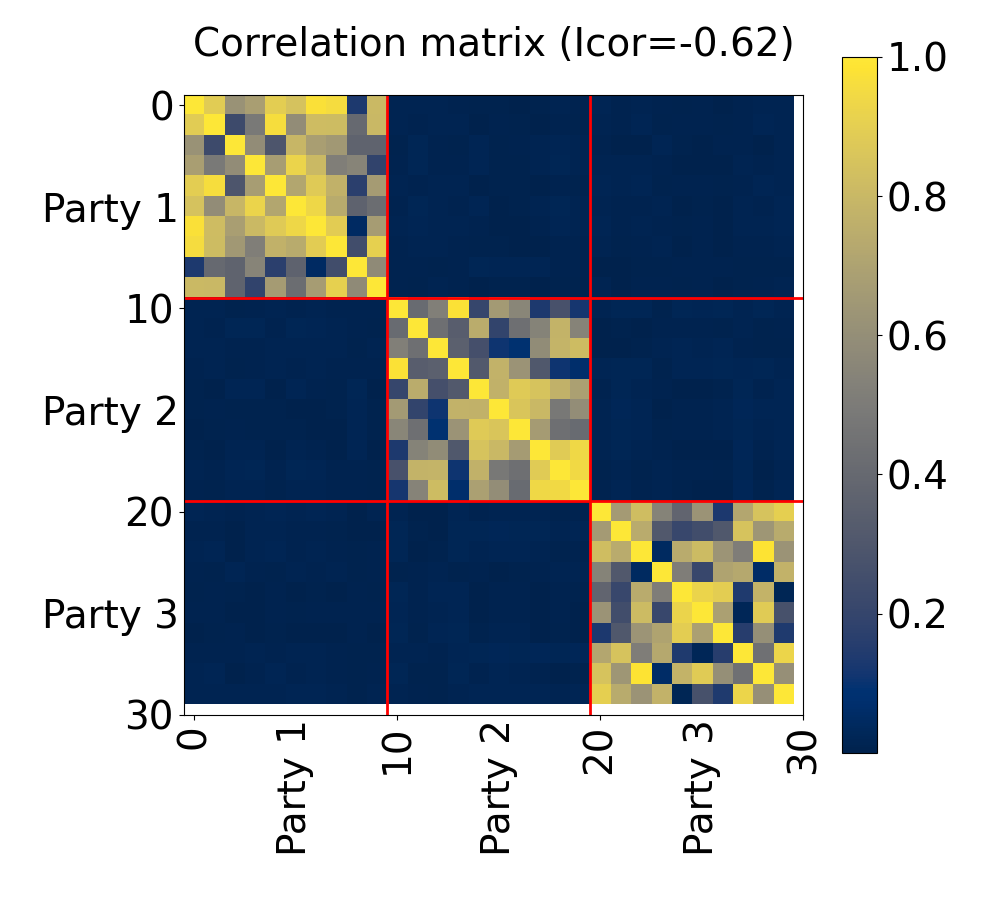}

        \caption{corr-split: $\beta=0$}\label{fig:pcor-split-beta0.0}
    \end{subfigure}
    \begin{subfigure}[b]{0.24\linewidth}
        \includegraphics[width=\linewidth]{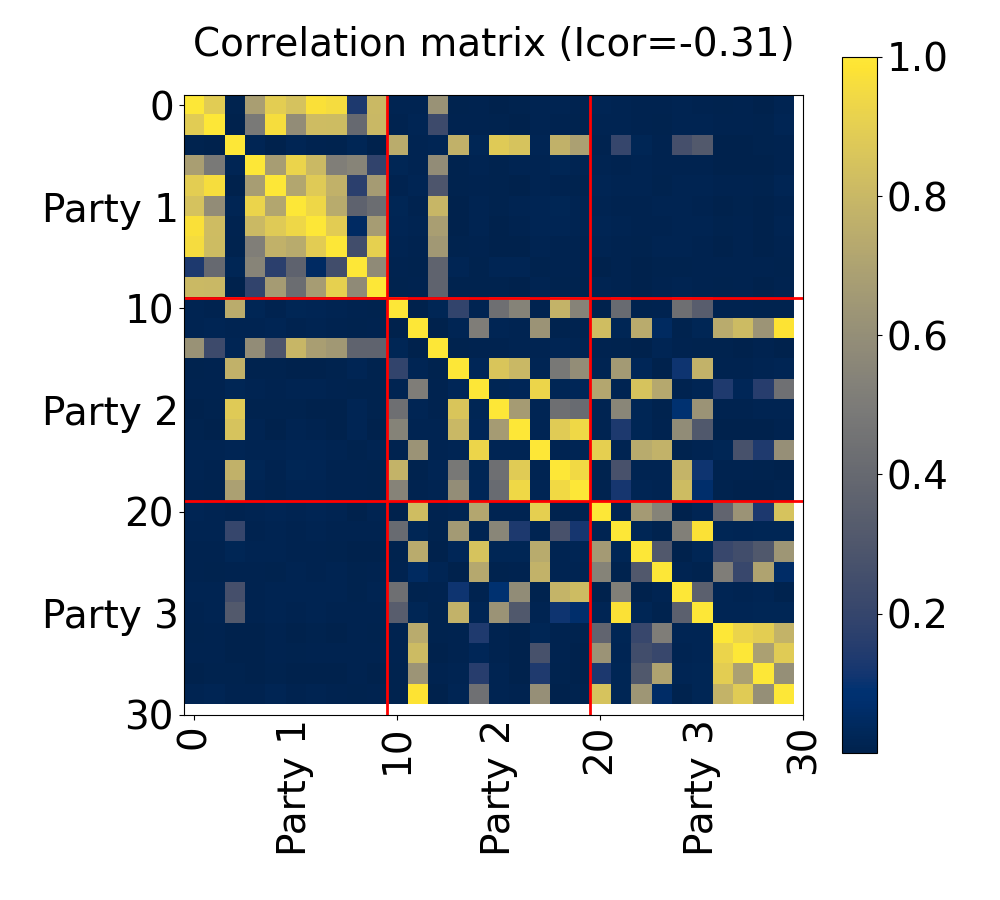}

        \caption{corr-split: $\beta=0.5$}\label{fig:pcor-split-beta0.5}
    \end{subfigure}
    \begin{subfigure}[b]{0.24\linewidth}
        \includegraphics[width=\linewidth]{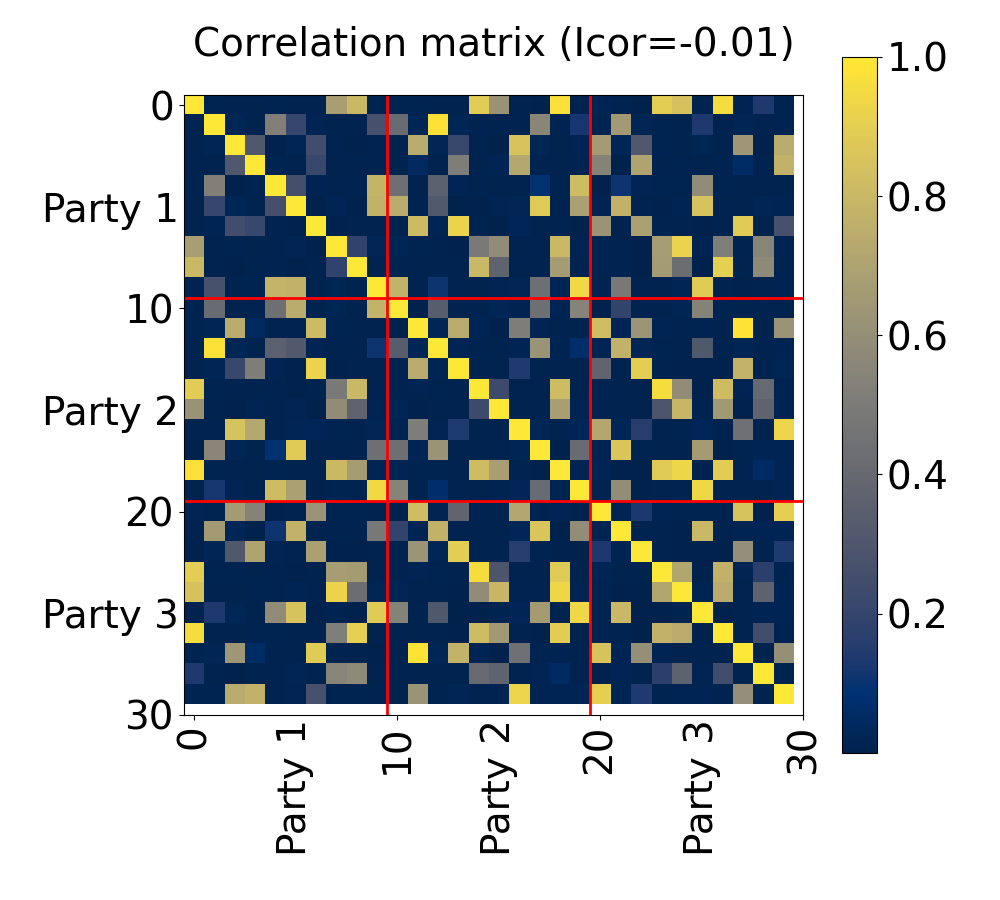}

        \caption{corr-split: $\beta=1.0$}\label{fig:pcor-split-beta1.0}
    \end{subfigure}
    \begin{subfigure}[b]{0.24\linewidth}
        \includegraphics[width=\linewidth]{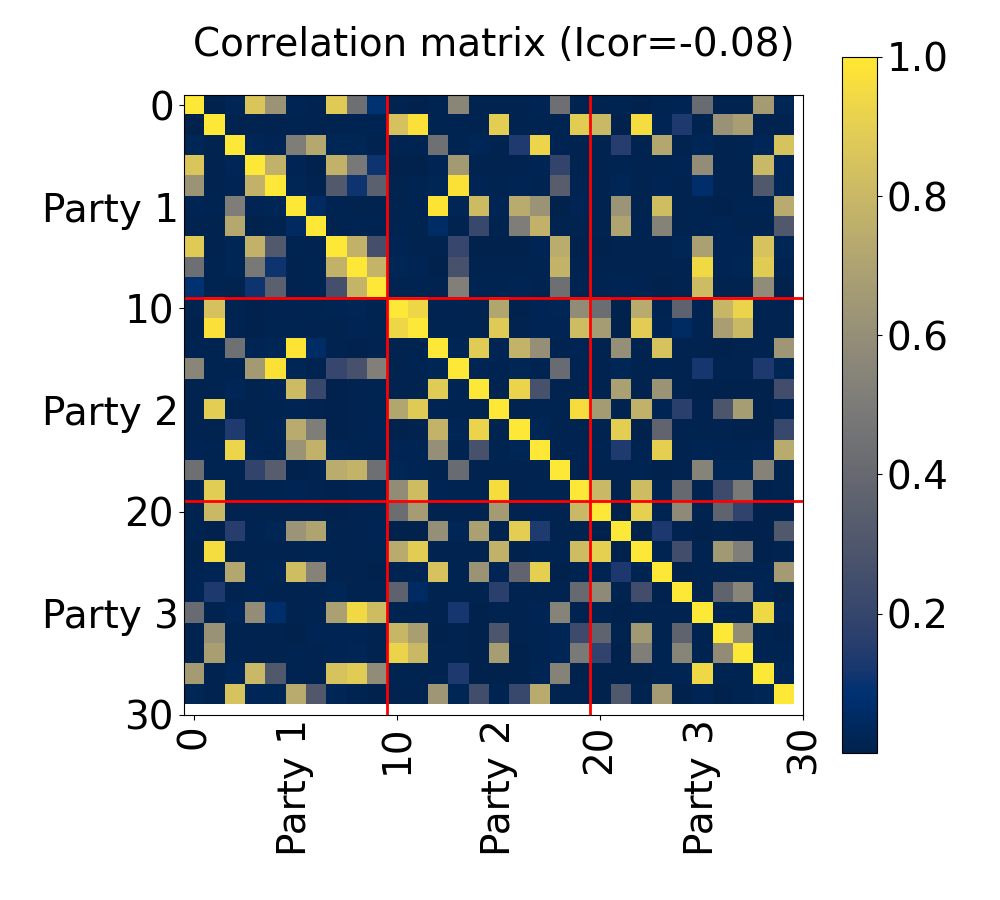}

        \caption{\reva{uniform-split}}\label{fig:pcor-shuffle}
    \end{subfigure}
    \caption{Absolute correlation matrix of the global dataset with party boundaries indicated by red lines. Icor means inter-party correlation. (a),(b),(c) - correlation-based split; (d) uniform split.}\label{fig:corrsplit}
    \vspace{-5pt}
\end{figure}

\begin{figure}[htpb]
    \centering
    \begin{subfigure}[b]{0.32\linewidth}
        \includegraphics[width=\textwidth]{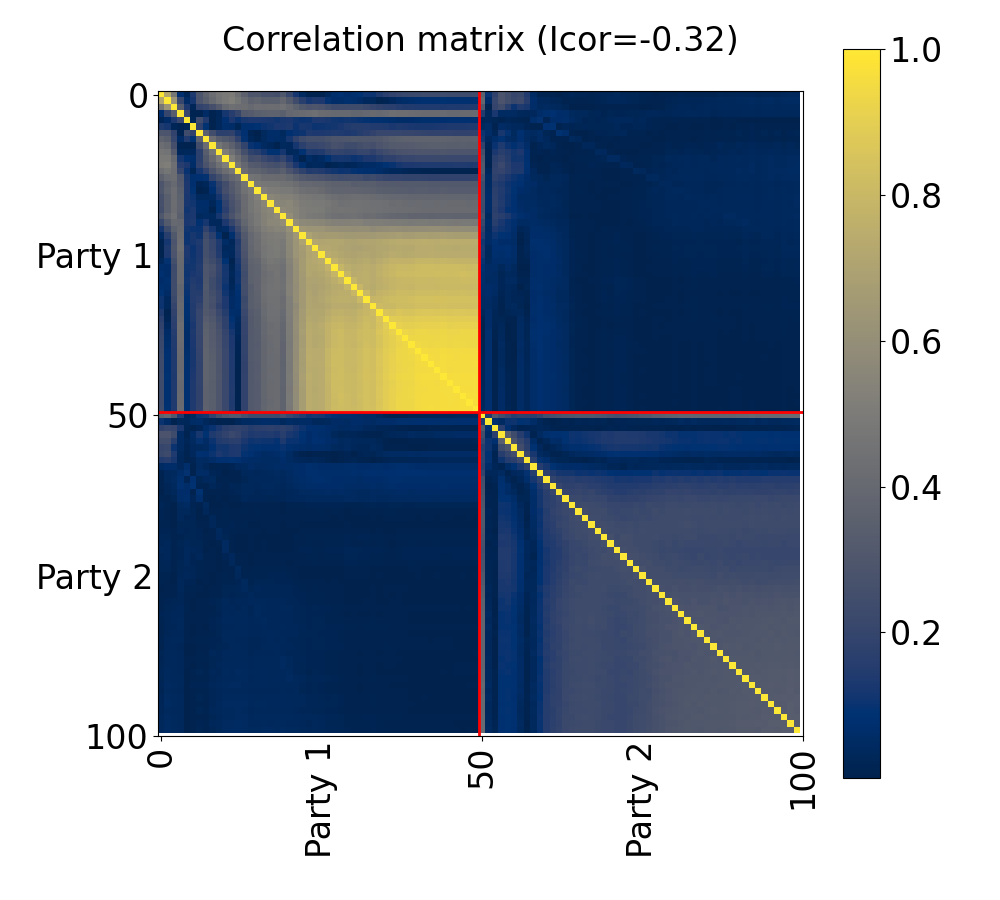}

        \caption{\reva{Real: Vehicle dataset}}\label{fig:vehicle-ori}
    \end{subfigure}
    \begin{subfigure}[b]{0.32\linewidth}
        \includegraphics[width=\textwidth]{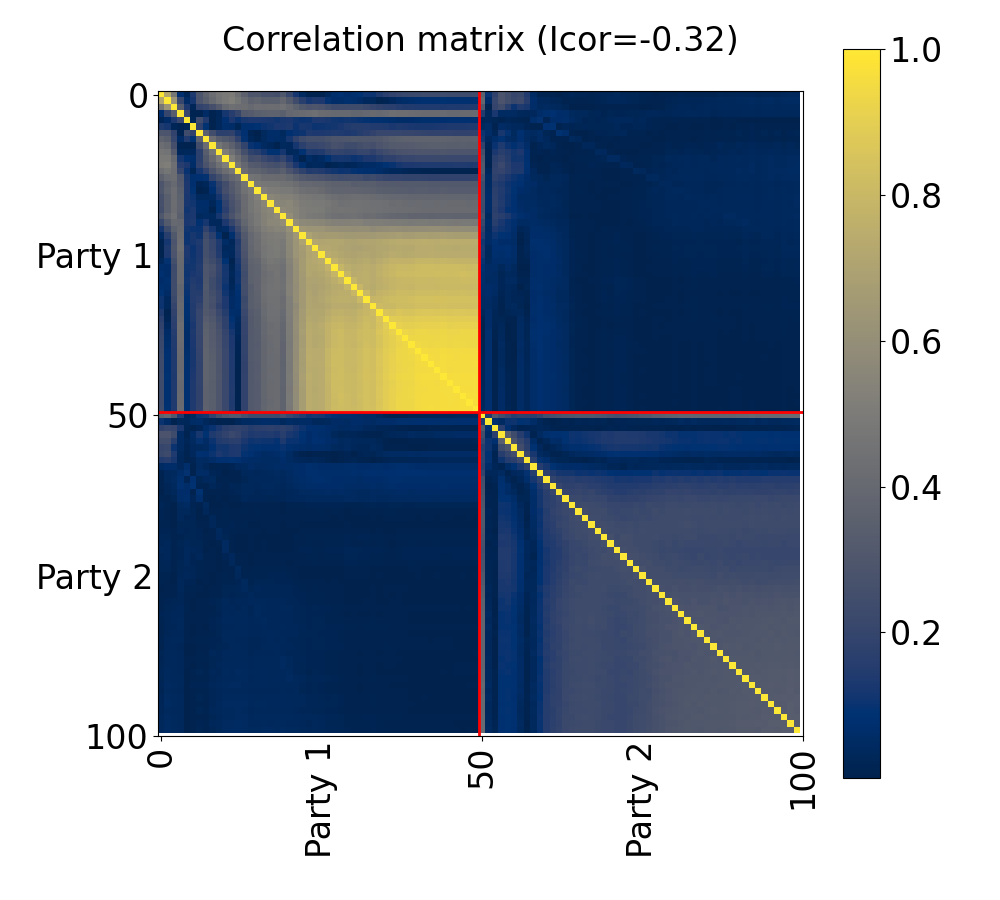}

        \caption{\reva{Synthetic: spilt with same $\beta$}}\label{fig:vehicle-split}
    \end{subfigure}
    \begin{subfigure}[b]{0.32\linewidth}
        \includegraphics[width=\textwidth]{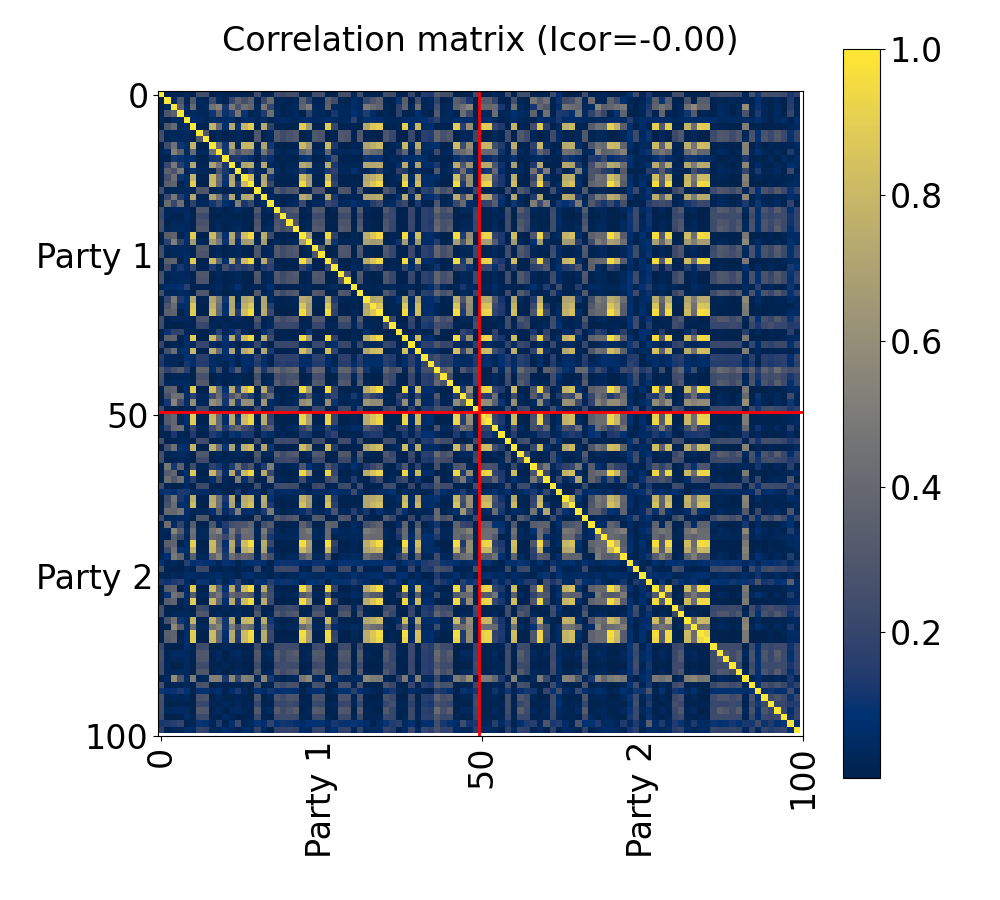}

        \caption{\reva{Synthetic: uniform split}}\label{fig:vehicle-shuffle}
    \end{subfigure}
    \caption{\reva{Absolute correlation matrix of real Vehicle dataset and synthetic datasets generated from feature-shuffled Vehicle - VertiBench with the same $\beta$ vs. uniform split}}
    \vspace{-5pt}
    \label{fig:vehicle-shuffle-split}
\end{figure}

\begin{figure}[htpb]
    \centering  
    \begin{subfigure}[b]{0.32\textwidth}
        \includegraphics[width=\textwidth]{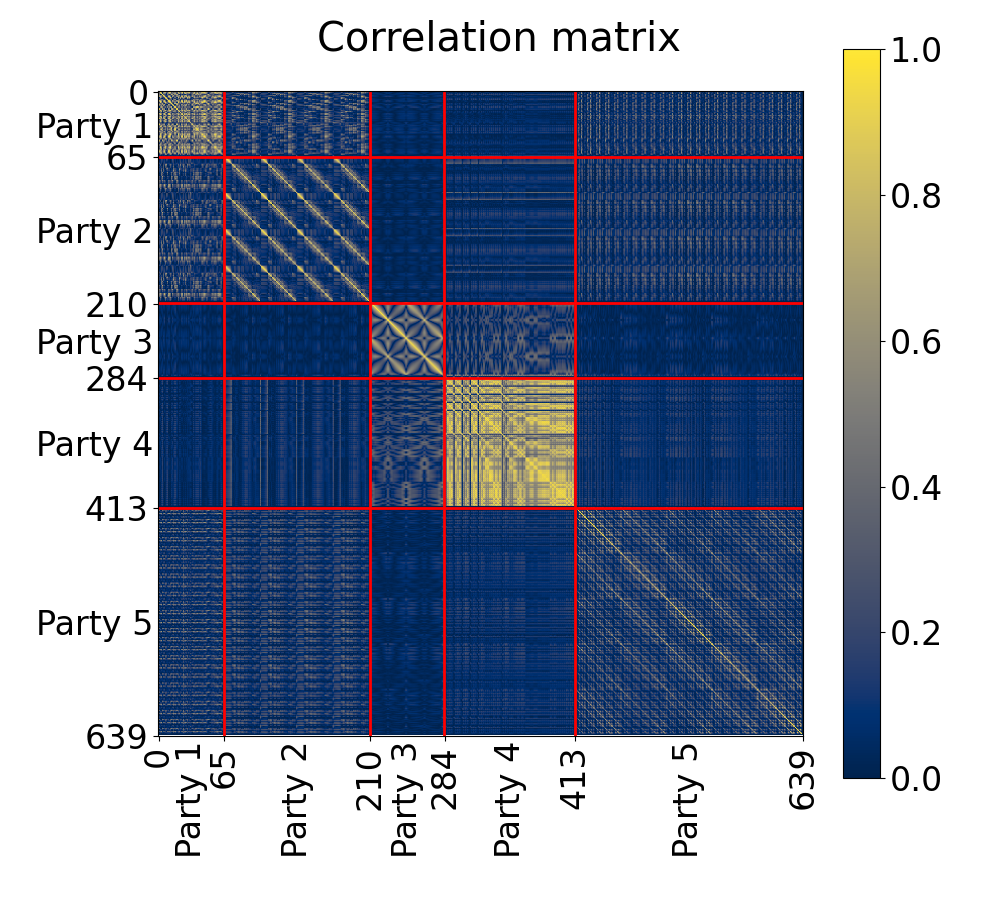}
        \caption{NUS-WIDE correlation}
        \label{fig:corr-wide}
    \end{subfigure}
    \begin{subfigure}[b]{0.32\textwidth}
        \includegraphics[width=\textwidth]{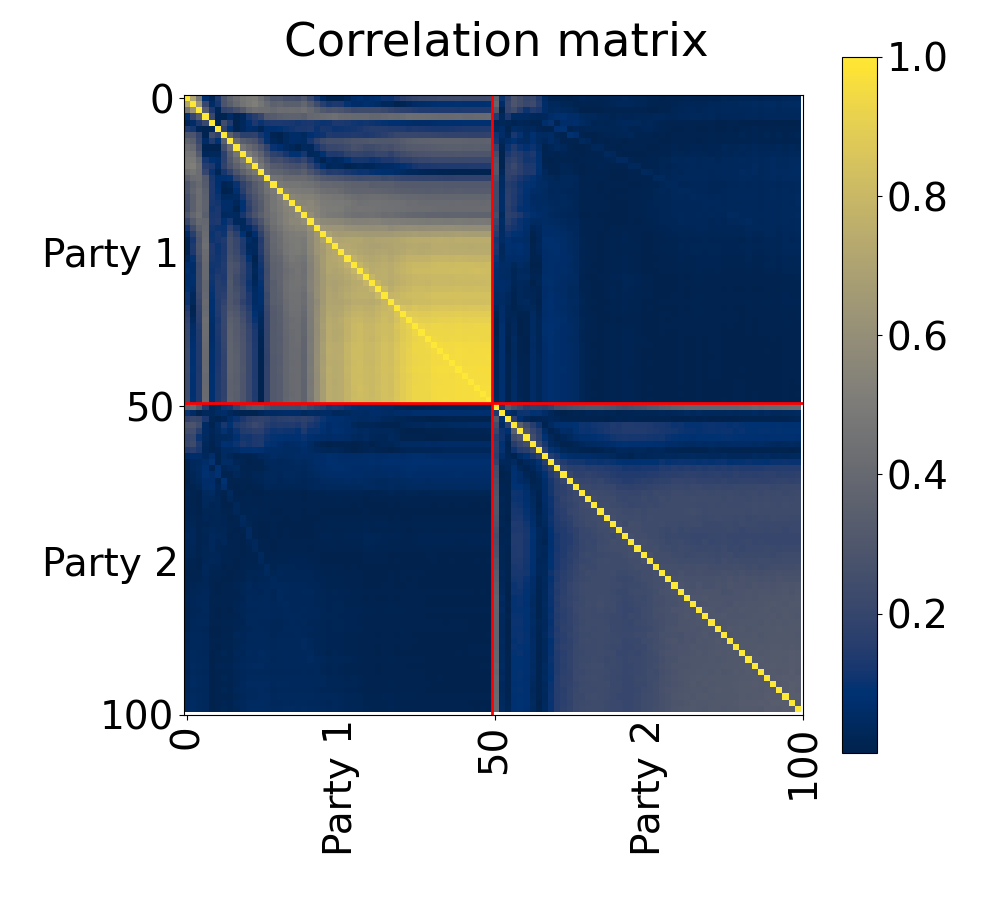}
        \caption{\reva{Vehicle correlation}}
        \label{fig:corr-vehicle}
    \end{subfigure}
    \begin{subfigure}[b]{0.32\textwidth}
        \includegraphics[width=\textwidth]{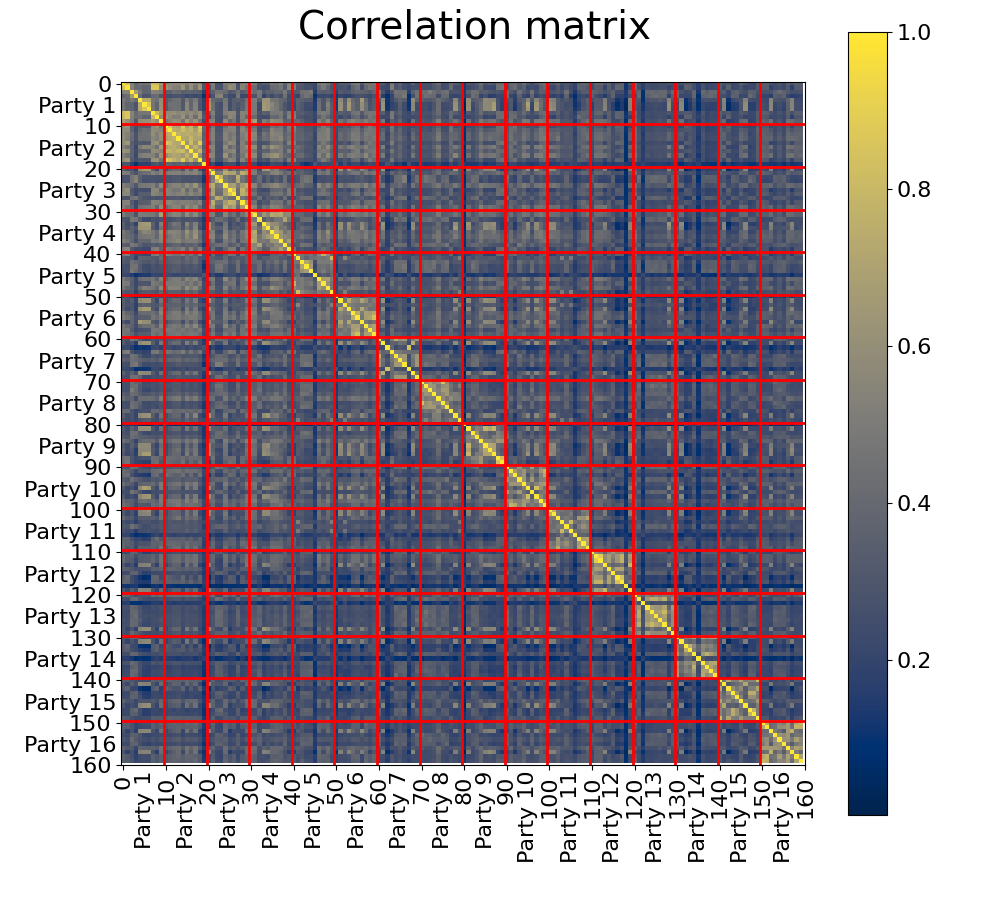}
        \caption{\revd{Satellite correlation}}
        \label{fig:corr-satellite}
    \end{subfigure}
    \caption{\reva{Absolute correlation matrix of real VFL datasets with party boundaries marked in red lines}}
    \label{fig:pcor-real-apdx}
    \vspace{-8pt}
\end{figure}

\subsection{Time Efficiency of Split Methods}
Table~\ref{tab:split-time} provides a summary of the estimated time requirements for our proposed split methods, with the I/O time for loading and saving datasets excluded. Notably, the importance-based split method demonstrates significant efficiency, typically completing within a minute. In contrast, the correlation-based split method requires a longer processing time, due to the need to resolve three optimization problems. This time cost is especially pronounced on high-dimensional datasets, such as \texttt{realsim}, because the singular value decomposition (SVD) used in the correlation-based split algorithm is dependent on the number of features. Despite these differences in time consumption, both split methods prove capable of handling large datasets, accommodating instances up to 581k and features up to 20k, within a reasonable time frame.

\begin{table}[htpb]
    \centering
    \small
    \caption{Estimated time cost of splitting methods (in seconds)}
    \label{tab:split-time}
    \begin{tabular}{ccccc cccc}
         \toprule
    \multirow{2}{*}{\textbf{Dataset}}  & \multicolumn{4}{c}{\textbf{Performance of importance-based split}}            & \multicolumn{4}{c}{\textbf{Performance of correlation-based split}}  \\ \cmidrule(l){2-9} 
         & $\alpha$ = 0.1 & $\alpha$ = 1 & $\alpha$ = 10 & $\alpha$ = 100 & $\beta$ = 0 & $\beta$ = 0.3 & $\beta$ = 0.6 & $\beta$ = 1 \\ \midrule

covtype & 0.21 & 0.24 & 0.26 & 0.27 & 64.15 & 55.77 & 53.19 & 69.34 \\
msd & 0.29 & 0.27 & 0.29 & 0.30 & 85.39 & 68.88 & 64.76 & 62.01 \\
gisette & 0.23 & 0.29 & 0.25 & 0.27 & 6743.26 & 6388.56 & 5640.81 & 6733.69 \\
realsim & 30.11 & 29.80 & 30.60 & 26.68 & 13681.97 & 13381.56 & 11341.31 & 10057.42 \\
letter & 0.00 & 0.00 & 0.00 & 0.00 & 43.15 & 46.01 & 41.50 & 38.72 \\
epsilon & 12.67 & 12.30 & 10.36 & 12.28 & 4853.63 & 4395.76 & 4105.53 & 3621.99 \\
radar & 0.14 & 0.15 & 0.16 & 0.15 & 308.61 & 274.06 & 236.73 & 299.12 \\

    \bottomrule
    \end{tabular}
    
\end{table}

\revd{To further evaluate the performance, we specifically arranged experiments to evaluate the computational intensity of the correlation-based splitting method. We conducted tests on synthetic datasets of varying dimensions or sizes generated by \texttt{sklearn}. Using a Pearson-based splitting method on a single RTX 3090 GPU, the results are detailed in Figure~\ref{fig:scalability-corrsplit}.}

\revd{Scalability experiments indicate the efficiency of the proposed split method. For the importance-based split method, it avoids the computational demands of Shapley value calculations while retaining related properties. This method completed within 30 seconds across all tested datasets, ensuring scalability for large datasets. The correlation-based method effectively handles datasets with up to 10k features, aligning with the dimensional needs of common VFL datasets such as \texttt{realsim} and \texttt{CIFAR10}. Meanwhile, the number of parties hardly affect the efficiency under a fixed global dataset.
}

\begin{figure}[htpb]
    \centering
    \begin{subfigure}{0.32\linewidth}
    \centering
        \includegraphics[width=\linewidth]{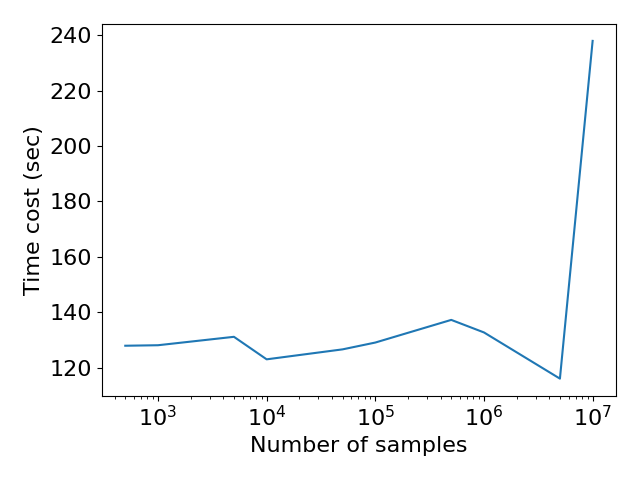}
        \caption{\revd{Sample size impact \\(\#features=100)}}
    \end{subfigure}
    \begin{subfigure}{0.32\linewidth}
    \centering
        \includegraphics[width=\linewidth]{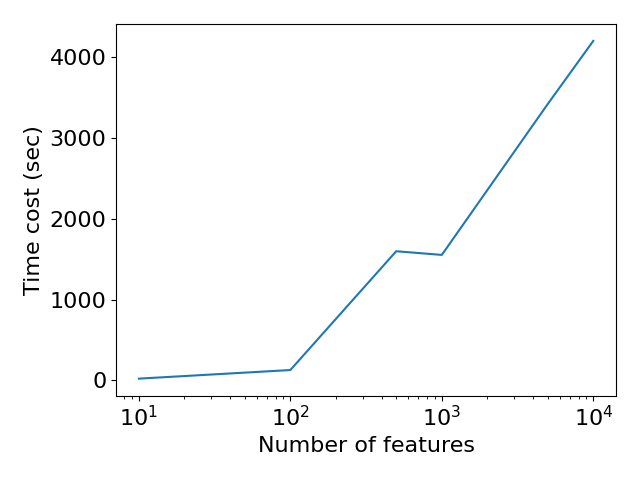}
        \caption{\revd{Feature size impact \\(\#samples=1000)}}
    \end{subfigure}
    \begin{subfigure}{0.32\linewidth}
    \centering
        \includegraphics[width=\linewidth]{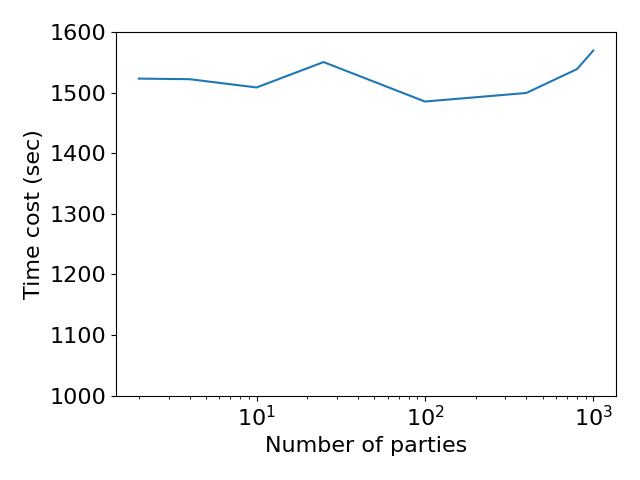}
        \caption{\revd{Party size impact \\(\#samples=\#parties=1000)}}
    \end{subfigure}
    \caption{\revd{Scalability of correlation-based split algorithm on \#samples and \#features}}
    \label{fig:scalability-corrsplit}
\end{figure}

\section{Real Dataset Construction}\label{sec:satellite}
\revd{

In this section, we outline the construction process of \texttt{Satellite}, which was adapted from the WorldStrat dataset~\cite{satellite}, originally intended for high-resolution imagery analysis. The \texttt{Satellite} encompasses Point of Interest (POI) data, each associated with one or more Areas of Interest (AOI). Every AOI incorporates a unique location identifier, a land type, and 16 low-resolution, 13-channel images, each taken during a satellite visit to the location.

During the data cleaning phase, we scrutinize the dataset thoroughly, identifying and removing 67 incomplete data records that have an insufficient number of low-resolution images. Furthermore, given the inconsistent widths and heights of images across different locations, we standardize the size of all images to a 158x158 square via bicubic interpolation. Additionally, the pixel values of each image are scaled to integer values within the range of $[0,255]$.

\texttt{Satellite} forms a practical VFL scenario for location identification based on satellite imagery. Each AOI, with its unique location identifier, is captured by 16 satellite visits. Assuming each visit is carried out by a distinct satellite organization, these organizations aim to collectively train a model to classify the land type of the location without sharing original images. \texttt{Satellite} encompasses four land types as labels, namely \texttt{Amnesty POI} (4.8\%), \texttt{ASMSpotter} (8.9\%), \texttt{Landcover} (61.3\%), and \texttt{UNHCR} (25.0\%), making the task a 4-class classification problem of 3,927 locations.

As depicted in Figure~\ref{fig:satellite-preview}, the consistency in capturing the same location 16 times presents variations in image quality. These discrepancies arise due to the changing weather and lighting conditions experienced during each satellite visit. Hence, each of these 16 satellite visits can equivalently be considered as 16 separate parties. All 13-channel images in the satellite dataset that correspond to the description of Sentinel-2 Bands as shown in Table~\ref{tab:satellite-channel}.

\revd{
\begin{figure}[htpb]
    \centering
    \includegraphics[width=\linewidth]{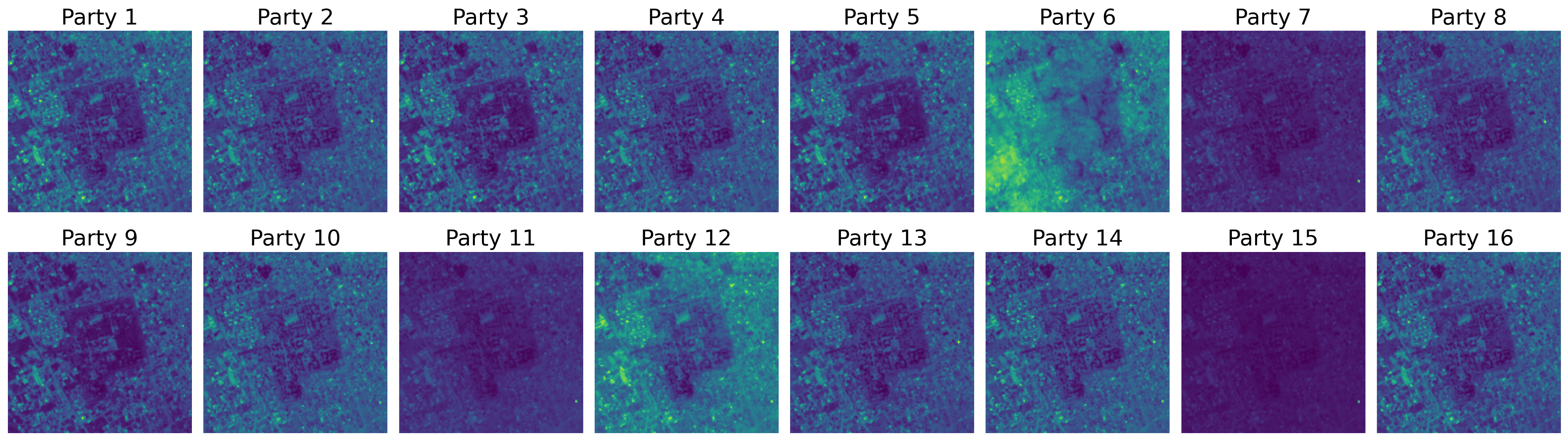}
    \caption{Preview of an area of \texttt{Satellite} dataset at channel-1 with land type Amnesty}\label{fig:satellite-preview}
\end{figure}
}

\begin{table}[ht]
  \centering
  \small
  \caption{Channel information of satellite dataset}
  \label{tab:satellite-channel}
\begin{tabular}{lll}
\toprule
\textbf{Channel} & \textbf{Band}   & \textbf{Description}      \\ \midrule
1       & Band 1 & Aerosol (443 nm) \\
2       & Band 2 & Blue (490 nm)    \\
3       & Band 3 & Green (560 nm)   \\
4       & Band 4 & Red (665 nm)     \\
5       & Band 5 & Vegetation Red Edge (705 nm) \\
6       & Band 6 & Vegetation Red Edge (740 nm) \\
7       & Band 7 & Vegetation Red Edge (783 nm) \\
8       & Band 8 & NIR (842 nm)     \\
9      & Band 8A & Narrow NIR (865 nm) \\
10       & Band 9 & Water Vapor (940 nm) \\
11      & Band 10 & SWIR - Cirrus (1375 nm) \\
12      & Band 11 & SWIR (1610 nm)  \\
13      & Band 12 & SWIR (2190 nm)  \\ \bottomrule
\end{tabular}
\end{table}

\paragraph{License.} Our use of the WorldStrat dataset was restricted to the labels and Sentinel-2 imagery, falling under the CC BY 4.0~\cite{license-ccby4.0} license, while excluding high-resolution imagery that falls under the CC BY-NC 4.0~\cite{license-ccbync4.0} license. Therefore, we have released the \texttt{Satellite} dataset under the CC BY 4.0 license.

\paragraph{File description and maintenance plan.} We aim to create a dedicated website for federated learning datasets to host \texttt{Satellite} and future VFL datasets. Although the website is currently under construction, we have made \texttt{Satellite} available via a public Google Drive link~\cite{vertibench-dataset} for review purposes. The provided ZIP file comprises 32 CSV files, corresponding to training and testing datasets split at a ratio of 8:2. Each training and testing file contains 3,142 and 785 flattened images from a party, respectively. The code of VertiBench~ \cite{vertibench-code} also contains a \texttt{Satellite} loader for demonstration.}

\section{Experimental Details}\label{sec:exp-details}
\paragraph{Datasets.} The datasets employed in our experiments exhibit a range of dimensions (from 16 to 20,958), instance numbers (from 15k to 581k), and tasks, which include binary classification, multi-class classification, and regression. Detailed information about these datasets and the corresponding licenses are presented in Table~\ref{tab:dataset_summary} \revd{and Table~\ref{tab:dataset_summary_real_vfl}}.

\begin{table}[htbp]
  \centering
  \setlength{\tabcolsep}{3pt}
  \small
  \caption{Detailed information of \revd{centralized} datasets (N/A means unspecified)}
  \label{tab:dataset_summary}
  \begin{threeparttable}[b]
  \begin{tabular}{lcrrrl}
    \toprule
    {\textbf{Dataset}} & {\textbf{Task}} & {\textbf{\#samples}} & {\textbf{\#features}} & {\textbf{\#classes}} & {\textbf{License}}\\
    \midrule
    covtype~\cite{dataset-covtype} & cls & 581,012 & 54 & 7 & BSD\textsuperscript{1}\\
    msd~\cite{dataset-msd} & reg & 463,715 & 90 & 1 & CC BY-NC-SA 2.0\textsuperscript{2}\\
    gisette~\cite{dataset-gisette} & cls & 6,000 & 5,000 & 2 & CC BY 4.0\textsuperscript{3}\\
    realsim~\cite{dataset-realsim} & cls & 72,309 & 20,958 & 2 & N/A\\
    epsilon~\cite{dataset-epsilon} & cls & 400,000 & 2,000 & 2 & N/A\\
    letter~\cite{dataset-letter} & cls & 15,000 & 16 & 26 & N/A\\
    radar~\cite{dataset-radar} & cls & 325,834 & 174 & 7 & N/A\\
    \revd{MNIST~\cite{dataset-mnist}} & cls & 70,000 & 784 & 10 & CC BY-SA 4.0 DEED\textsuperscript{4}\\
    \revd{CIFAR10~\cite{dataset-cifar10}} & cls & 60,000 & 1,024 & 10 & N/A\\
    \bottomrule
  \end{tabular}
  \begin{tablenotes}
      \item \textsuperscript{1} \url{https://opensource.org/license/bsd-3-clause/} 
      \item \textsuperscript{2} \url{https://creativecommons.org/licenses/by-nc-sa/2.0/} 
      \item  \textsuperscript{3} \url{https://creativecommons.org/licenses/by/4.0/legalcode} 
      \item  \textsuperscript{4} \url{https://creativecommons.org/licenses/by-sa/4.0/deed.en} 
  \end{tablenotes}
  \end{threeparttable}
\end{table}

\begin{table}[htbp]
  \centering
  \small
   \setlength{\tabcolsep}{2.5pt}
  \caption{\revd{Detailed information of real VFL datasets}}
  \label{tab:dataset_summary_real_vfl}
  \begin{threeparttable}[b]
  \begin{tabular}{lccrrrl}
    \toprule
    {\textbf{Dataset}} & {\textbf{Task}} & {\textbf{\#parties}} & {\textbf{\#samples}\textsuperscript{1}} & {\textbf{\#features}\textsuperscript{2}} & {\textbf{\#classes}}\\
    \midrule
    NUS-WIDE~\cite{nuswide} & cls & 5 & 107,859/161,789 & 64/144/73/128/225 & 2 \\
    Vehicle~\cite{duarte2004vehicle} & cls & 2 & 63,058/15,765 & 50/50 & 3 \\
    Satellite (proposed) & cls & 16 & 3,927$\times$16 & 16$\times$13$\times$158$\times$158 & 4 \\
    \bottomrule
  \end{tabular}
  \begin{tablenotes}
      
      \item \textsuperscript{1} \revd{The \texttt{\#samples} is written in the form of \texttt{train/test}.}
      \item \textsuperscript{2} \revd{The \texttt{\#features} for each parties are divided by \texttt{/}.}
      \item \revd{The licenses of \texttt{NUS-WIDE} and \texttt{Vehicle} are unspecified. The license of \texttt{Satellite} is CC-BY-4.0}
  \end{tablenotes}
  \end{threeparttable}
\end{table}

\paragraph{Hyperparameters.} For models based on split-GBDT, such as SecureBoost, FedTree, and Pivot, our experiments are conducted with the following hyperparameters: \texttt{learning\_rate=0.1}, \texttt{num\_trees=50}, \texttt{max\_bin=32}, and \texttt{max\_depth=6}. Due to the constraints of dataset sizes in their codes, Pivot is evaluated exclusively on two datasets: the \texttt{letter} dataset under the default setting of \texttt{MAX\_GLOBAL\_SPLIT\_NUM=6,000} and on the \texttt{gisette} dataset with \texttt{MAX\_GLOBAL\_SPLIT\_NUM=500,000}. The latter alteration was necessitated by a segmentation fault encountered under the default setting.

With regard to split-NN-based models, specifically SplitNN and C-VFL, each local model is trained by a two-layer multi-layer perceptron (MLP) with each hidden layer containing 100 units. The corresponding aggregated model is a single-layer MLP with 200 hidden units. The learning rate, chosen from the set $\{10^{-4},10^{-3},3\times10^{-3}\}$, is contingent on the specific algorithm and dataset. The number of iterations is fixed at 50 for SplitNN and 200 for C-VFL, with the latter setting aimed at ensuring model convergence. We also test C-VFL using four quantization buckets, a single vector quantization dimension, and a top-k compressor as recommended in the default setting. The number of local rounds $Q$ in C-VFL is set to 10. \reva{In the evaluation of communication cost, the split parameter $\alpha$ is set to 0.1 since feature split hardly affects the communication size.}              

Finally, for the ensemble-based model, GAL, we utilize a \texttt{learning\_rate=0.01}, \texttt{local\_epoch=20}, \texttt{global\_epoch=20}, and \texttt{batch\_size=512}, with the assist mode set to \texttt{stack}. In the GAL framework, each party employs an MLP model consisting of two hidden layers, each containing 100 hidden units.

\paragraph{Environments.}
The hardware configuration used for C-VFL, GAL, SplitNN, and FedTree consists of 2x AMD EPYC 7543 32-Core Processors, 4x A100 GPUs, and 503.4 GB of RAM, running on Python 3.10.11 with PyTorch 2.0.0, Linux 5.15.0-71-generic, Ubuntu 22.04.2 LTS. For FATE framework, we are using \texttt{federatedai/standalone\_fate} Docker image, running with Python 3.8.13 on Docker 23.0.2. Pivot is compiled from source using CMake 3.19.7, g++ 9.5.0, libboost 1.71.0, libscapi with git commit hash \texttt{1f70a88}, and runs on a slurm cluster with AMD EPYC 7V13 64-Core Processor with the same number of cores as 2x AMD EPYC 7543 used for other algorithms. \reva{We conducted real distributed experiments on four distinct machines to evaluate the communication cost, the machines are equipped with AMD EPYC 7V13 64-Core Processors, 503Gi RAM, 1GbE NICs (Intel I350), and varying GPUs (AMD Instinct MI210 for $P_1$ and $P_2$, AMD Instinct MI100 for $P_3$ and $P_4$). The distributed experimental environment consisted of Python 3.9.12 and PyTorch 2.0.1+rocm5.4.2. We monitored network traffic from the network adapter before and after the experiments using \texttt{ifconfig} to measure the actual data size transferred between parties. We also recorded the network traffic size during idle server periods as background noise. Measurements were taken for 1 minute with a 1-minute gap between each measurement, totaling 5 times. The mean received data was 56.76 KB (std \textpm{1.08 KB}), and the mean sent data was 46.50 KB (std \textpm{1.35 KB})}

\paragraph{Distributed Implementations.} \reva{FedTree inherently supports distributed deployment through gRPC, a feature we utilize. In contrast, SplitNN, C-VFL, and GAL lack native distributed support. To address this, we implemented distributed support for these algorithms using \texttt{torch.distributed}. For the sparse matrix representation in C-VFL, we employ \texttt{torch.sparse\_coo\_tensor}.}

\paragraph{License.} The licenses pertinent to the datasets and algorithms utilized in VertiBench are documented in Table \ref{tab:dataset_summary} and Table \ref{tab:licence_algorithms}, respectively. We ensure adherence to these licenses as VertiBench neither redistributes the codes and data nor utilizes them for any commercial purpose.

\begin{table}[htpb]
  \centering
  \small
  \caption{License usage for each VFL algorithm.}
  \label{tab:licence_algorithms}
  \begin{threeparttable}
\begin{tabular}{@{}ccccc@{}}
\toprule
\textbf{Algorithm} & \textbf{License}        & \textbf{Share}                 & \textbf{Commercial use}        & \textbf{Redistribute}  \\ \midrule
GAL~\cite{diao2022gal}           & MIT\textsuperscript{1}             & \cmark & \cmark & \cmark           \\
C-VFL~\cite{castiglia2022cvfl}         & MIT\textsuperscript{1}            & \cmark & \cmark & \cmark           \\
SecureBoost~\cite{cheng2021secureboost}   & Apache 2.0\textsuperscript{2}     & \cmark & \cmark & \cmark           \\
Pivot~\cite{wu2020pivot}         & CC BY-NC-ND 4.0\textsuperscript{3} & \cmark & \xmark & \xmark         \\
FedTree~\cite{li2022fedtree}       & Apache 2.0\textsuperscript{2}      & \cmark & \cmark & \cmark           \\

\bottomrule
\end{tabular}
\begin{tablenotes}
\item Note: AL, SplitNN, BlindFL and VF2Boost do not specify their licensing terms.
\item \textsuperscript{1} \url{https://opensource.org/license/mit/}
\item \textsuperscript{2} \url{https://www.apache.org/licenses/LICENSE-2.0}
\item \textsuperscript{3} \url{https://creativecommons.org/licenses/by-nc-nd/4.0/}
\end{tablenotes}
 \end{threeparttable}
\end{table}

\section{Additional Experiments}\label{sec:addexp}

\subsection{Communication Cost Details}\label{sec:comm-details}


\revd{In this subsection, we evaluate the total and maximum incoming/outgoing communication costs of various VFL algorithms over 50 epochs. This assessment is performed on four physically distributed machines, with results displayed in Figures~\ref{fig:comm-in-out}, and \ref{fig:comm-all-2-party}. We observe real network adapter communication for both four and two-party settings with \texttt{ifconfig}. Additionally, theoretically estimated communication sizes are compared in Figure~\ref{fig:comm-total-est}, and real-time performance in relation to increasing communication costs for four VFL algorithms is presented in Figure~\ref{fig:comm-realtime}. The results yield following four observations.}


\reva{\textbf{Gradient-boosting algorithms, such as GAL and FedTree, tend to have small communication sizes when contrasted with NN-based methods like SplitNN and C-VFL. Yet, FedTree experiences marked communication overhead with high-dimensional datasets.} C-VFL, due to its different pipeline from SplitNN, fails to reduce real communication cost, especially as it broadcasts compressed representations from all parties too all other parties. The large communication expenses in NN-based algorithms from the recurrent transmission of gradients and representations, contributing to SplitNN's superior accuracy. The high communication costs of FedTree on high-dimensional datasets might be linked to the transmission of feature histograms and node information across parties.}



\textbf{Generally, GAL is more efficient than FedTree, though FedTree sometimes has less incoming communication cost on low-dimensional datasets.} The incoming communication cost includes residuals for GAL and histograms for FedTree. Residual size is dependent on the number of instances, while histogram size relates to the feature count. Hence, FedTree shows small incoming communication cost on low-dimensional datasets but more on high-dimensional datasets. However, FedTree's outgoing communication cost is \reva{much larger than} GAL, as it transmits both gradients and node information, \reva{resulting in its large overall communication overhead.}

\reva{\textbf{C-VFL struggles to reduce real communication costs via representation compression sent to the server, and it neither enhance the backward server-client communication.} Initially, C-VFL's incoming communication costs align closely with those of SplitNN. This is attributable to the method of estimating the compression ratio of representations by counting the ratio of non-zero elements. Though these representations appear sparse, they cannot be effectively transmitted without added overheads. Specifically, the coordinate format (COO) sparse matrix, as utilized in our experiments, demands even greater communication due to the need for additional indices. As a result, such compressions offer limited advantage in practical scenarios. Moreover, the consistent outgoing communication costs indicate that transmitting compressed aggregated models in C-VFL or broadcasting compressed representations does not display efficiency over transmitting uncompressed gradients of the cut layer. This revelation hints at avenues for potential refinement to minimize SplitNN's backward communication expenses.}

\reva{\textbf{The real-time performance per communication round aligns with observations made for a fixed epoch.} This is largely attributed to the dominance of communication per iteration over the number of iterations. GAL achieve similar accuracy levels with minimal communication overhead, a scenario where SplitNN and C-VFL might struggle to complete even a single iteration. However, when more communication is allowed, SplitNN typically delivers the most superior performance. }

\reva{\textbf{The estimated communication costs generally align with actual observations, with notable exceptions in C-VFL and FedTree when handling high-dimensional datasets.} In the case of C-VFL, the estimation overlooks the overhead associated with transmitting sparse matrices. For FedTree on high-dimensional data, the reported communication cost likely arises from unaccounted transmission information related to nodes and histograms. Such discrepancies underscore the need for comprehensive testing on real distributed systems for VFL algorithms.}



\begin{figure}[htpb]
    \centering
    \includegraphics[width=1.0\textwidth]{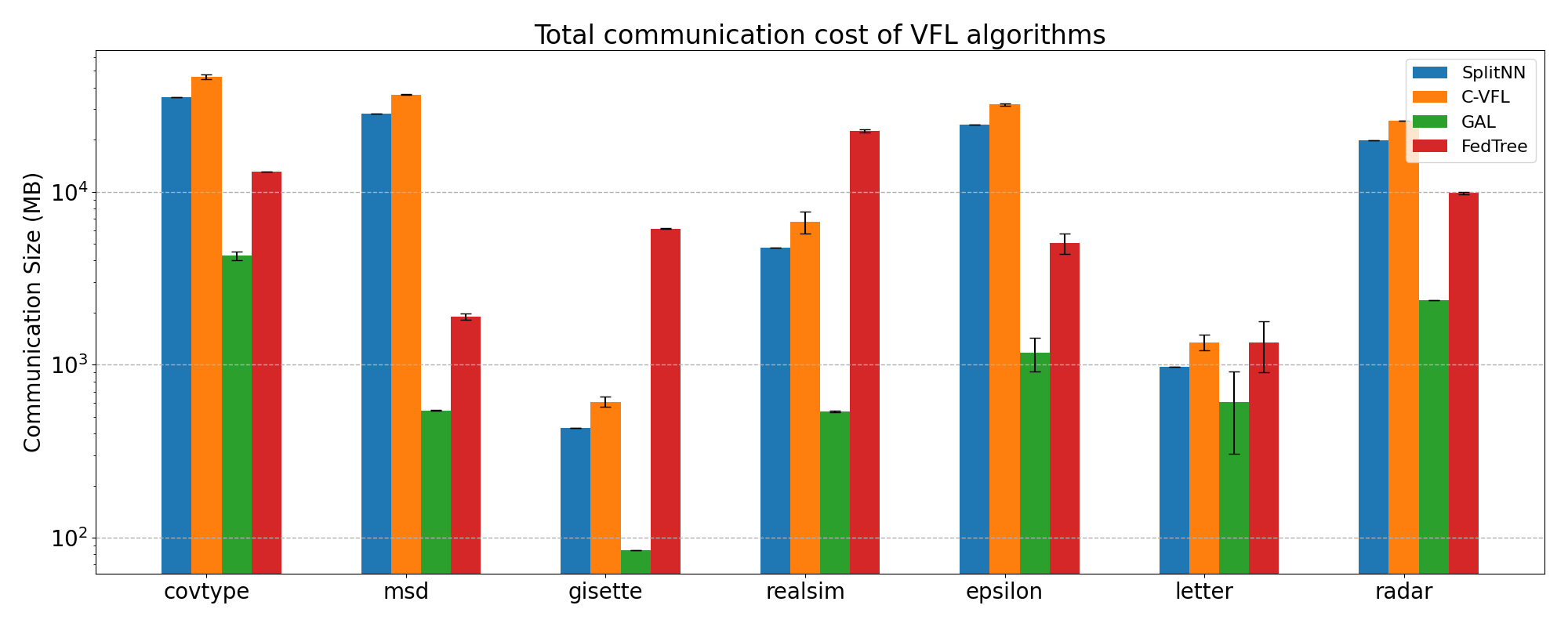}
    \includegraphics[width=1.0\textwidth]{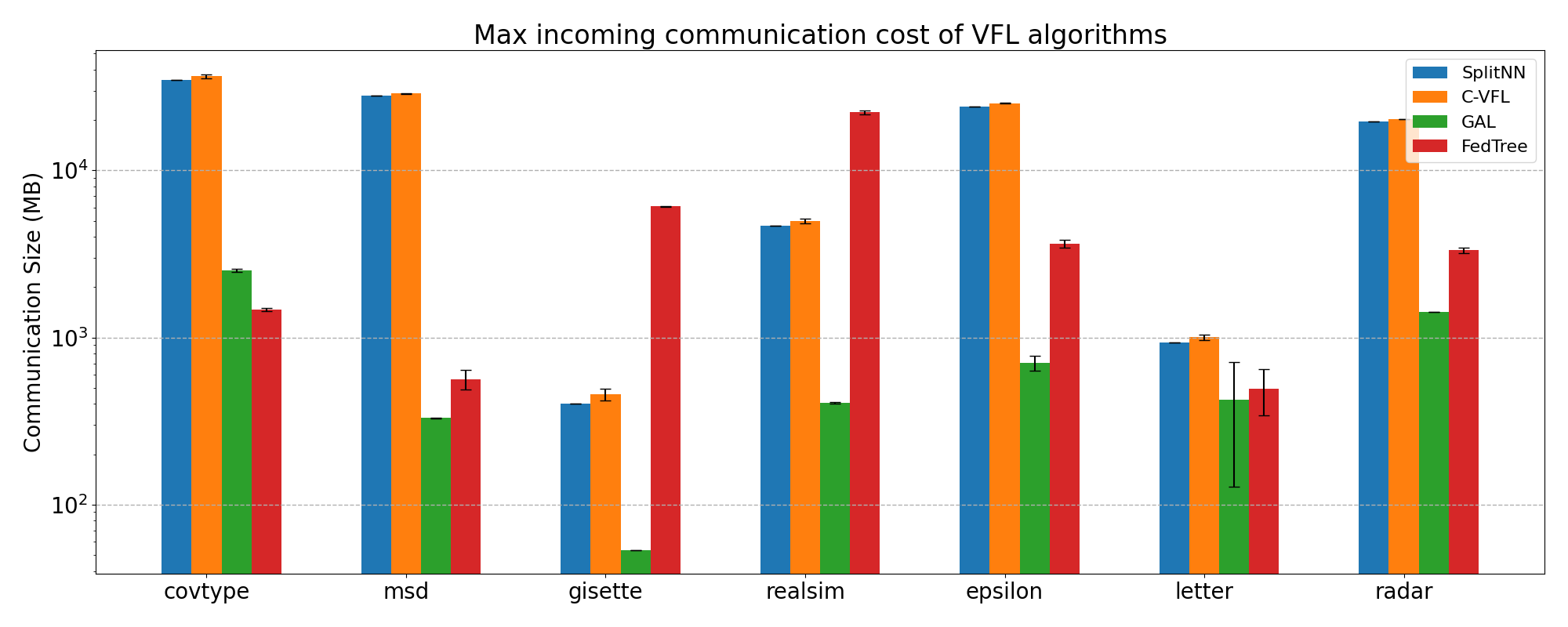}
    \includegraphics[width=1.0\textwidth]{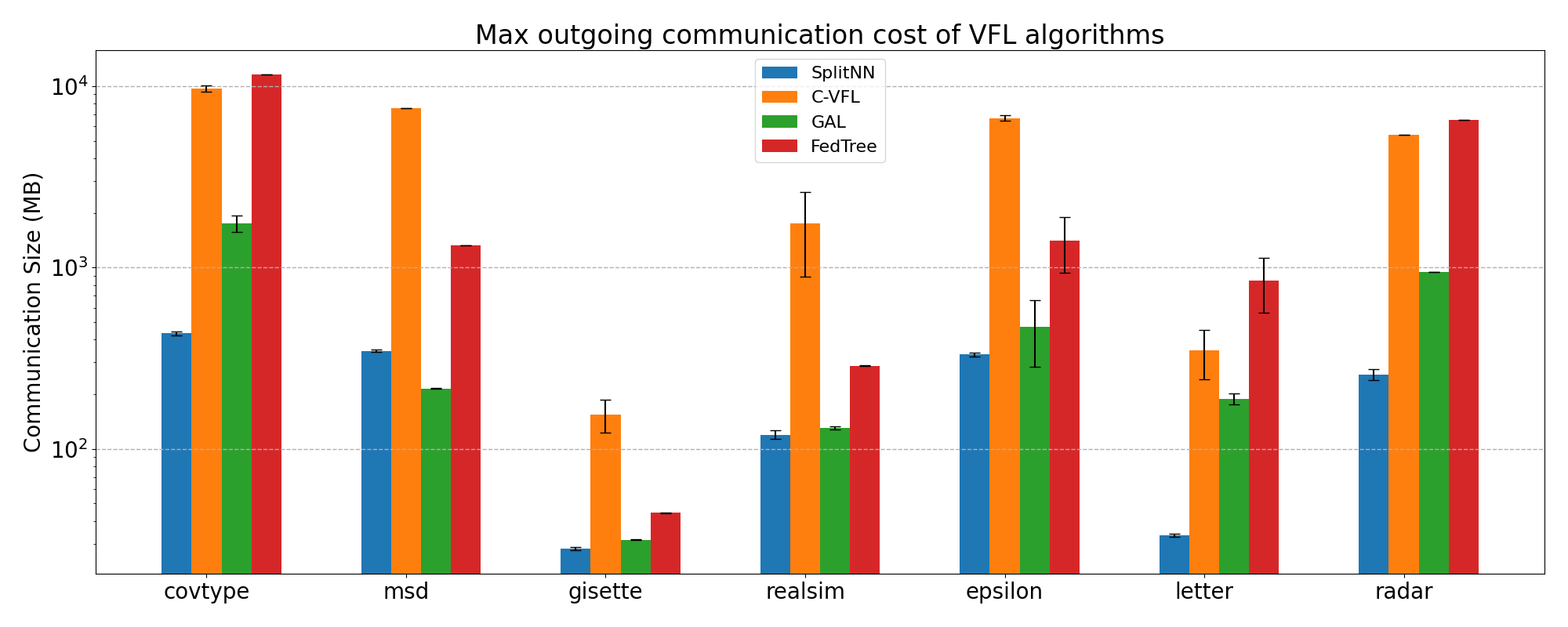}
    \caption{Real total, max incoming and outgoing communication size across parties of VFL algorithms (50 global iterations)}
    \label{fig:comm-in-out}
\end{figure}

\begin{figure}[htpb]
    \centering
    \includegraphics[width=1.0\textwidth]{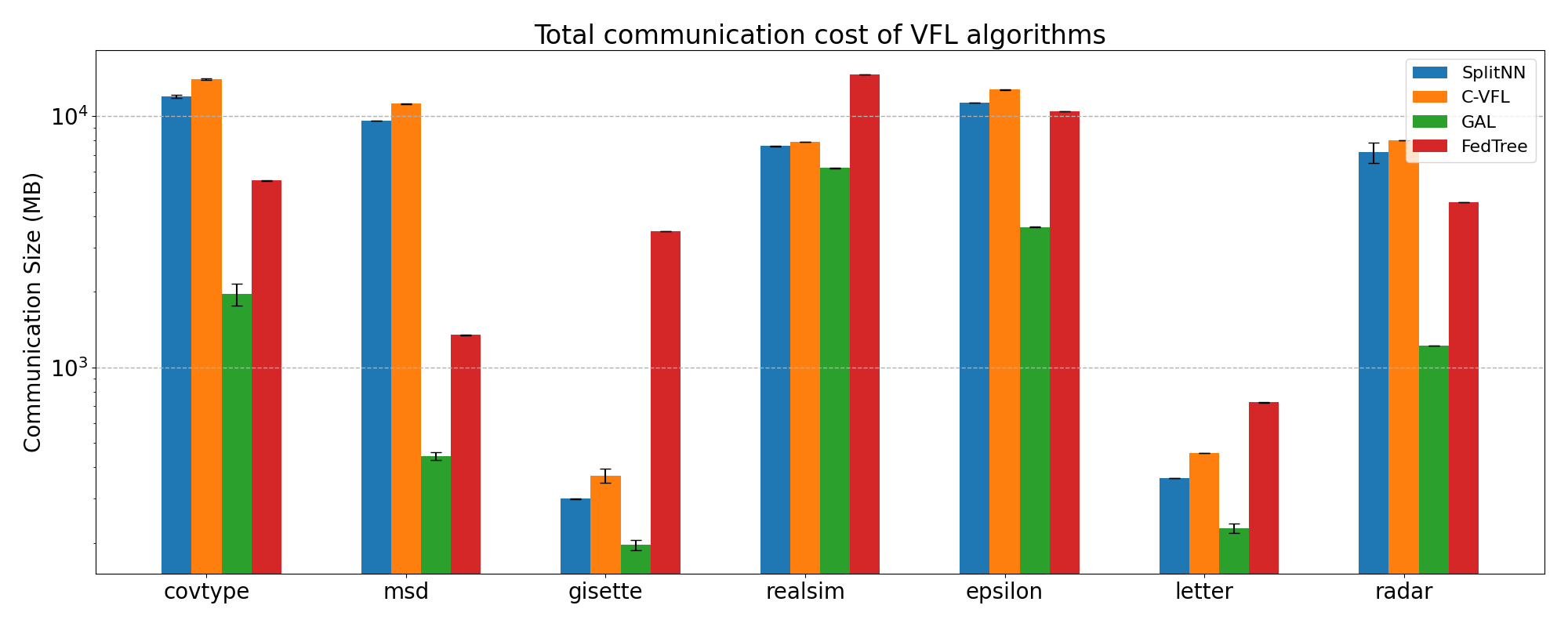}
    \includegraphics[width=1.0\textwidth]{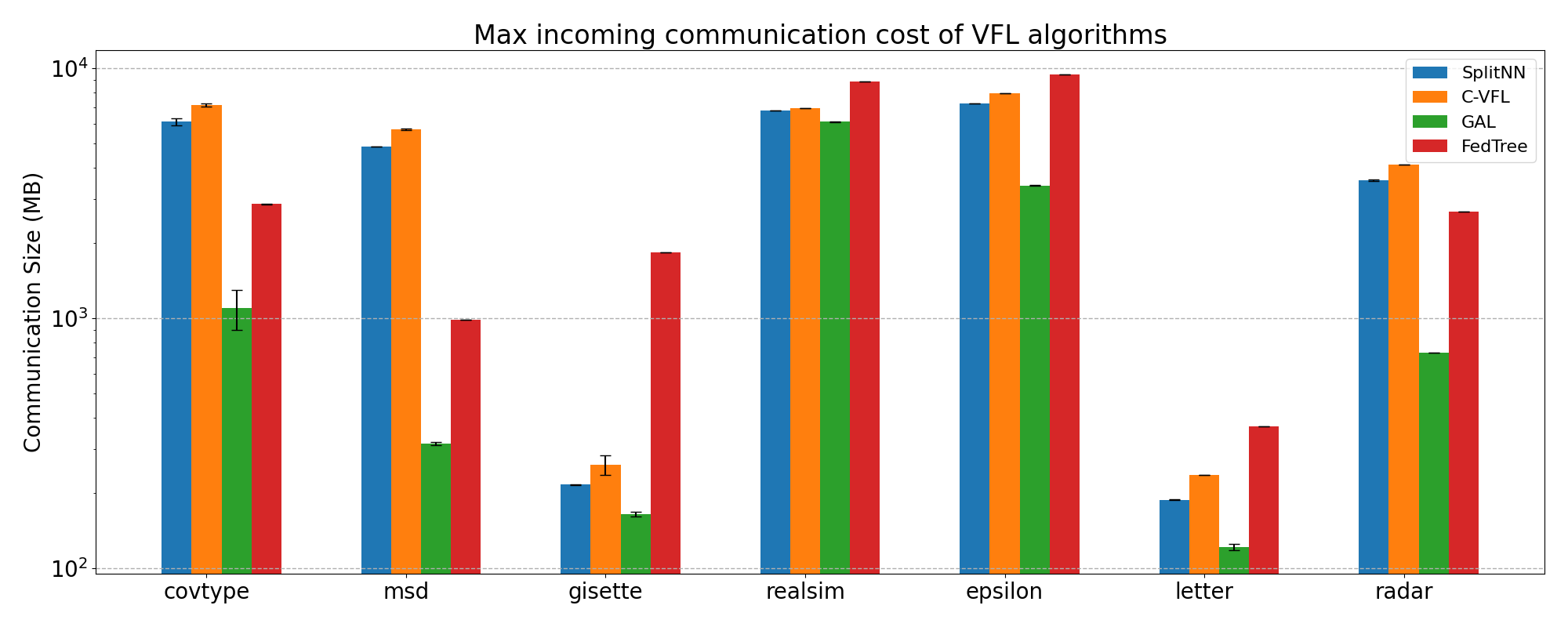}
    \includegraphics[width=1.0\textwidth]{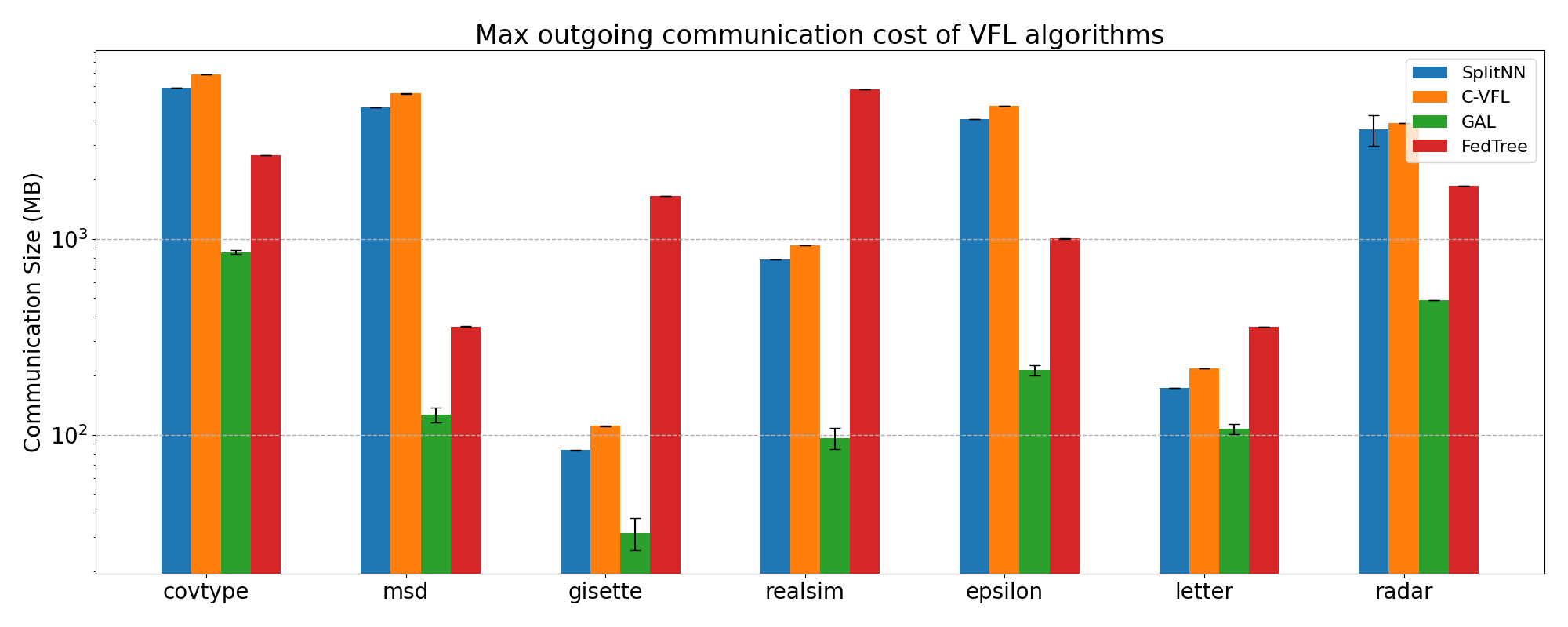}
    \caption{\revb{Real total, max incoming and outgoing communication size across parties of VFL algorithms (two parties, 50 global iterations)}}
    \label{fig:comm-all-2-party}
\end{figure}

\begin{figure}
    \centering
    \includegraphics[width=1.0\textwidth]{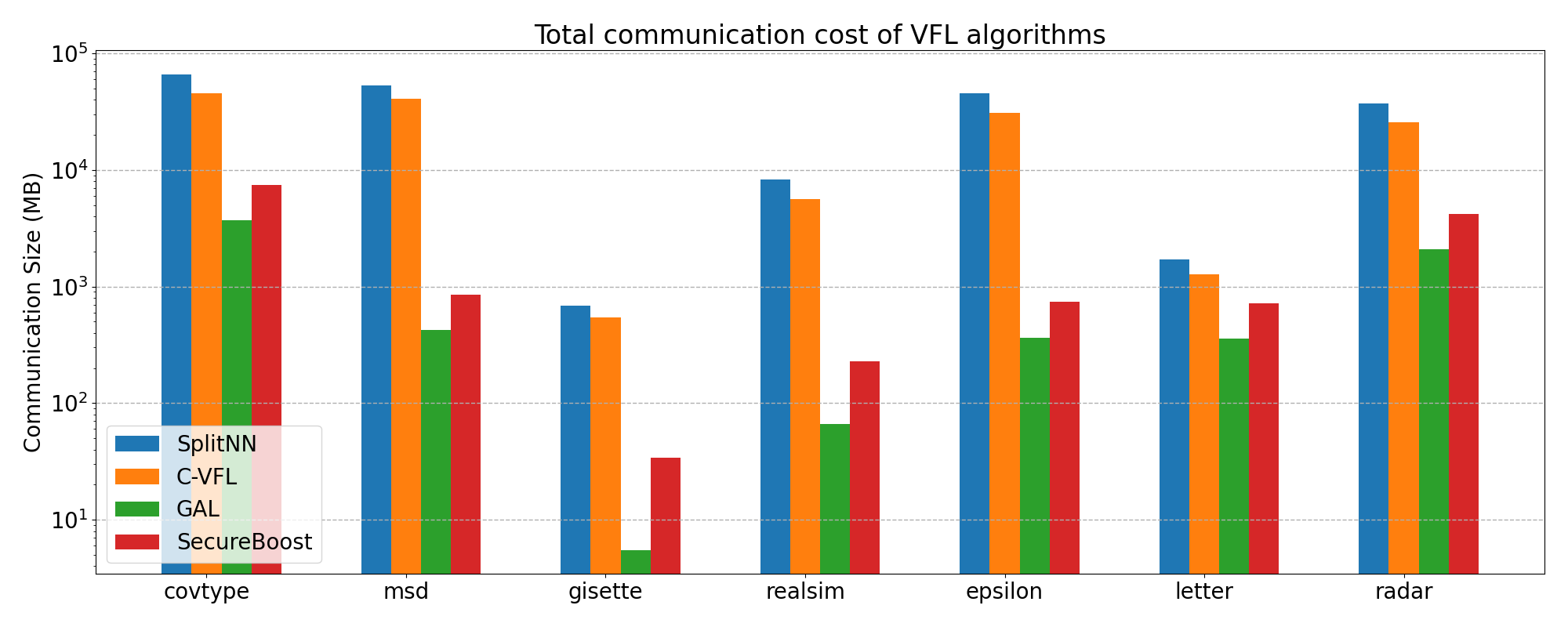}
    \caption{\reva{Estimated total communication size of VFL algorithms (50 global iterations)}}
    \label{fig:comm-total-est}
\end{figure}

\begin{figure}[htpb]
    \centering
    \includegraphics[width=\linewidth]{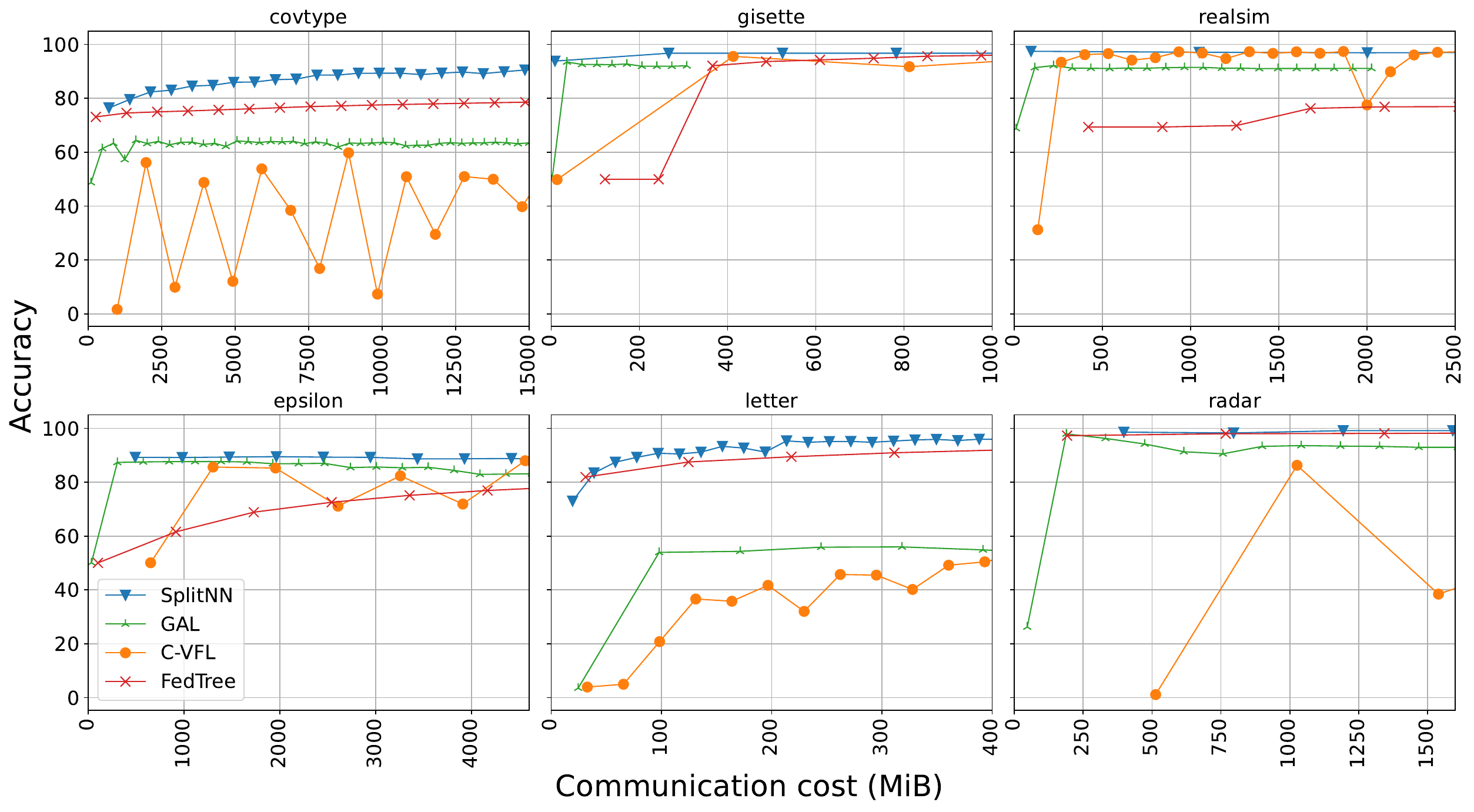}
    \includegraphics[width=0.5\textwidth]{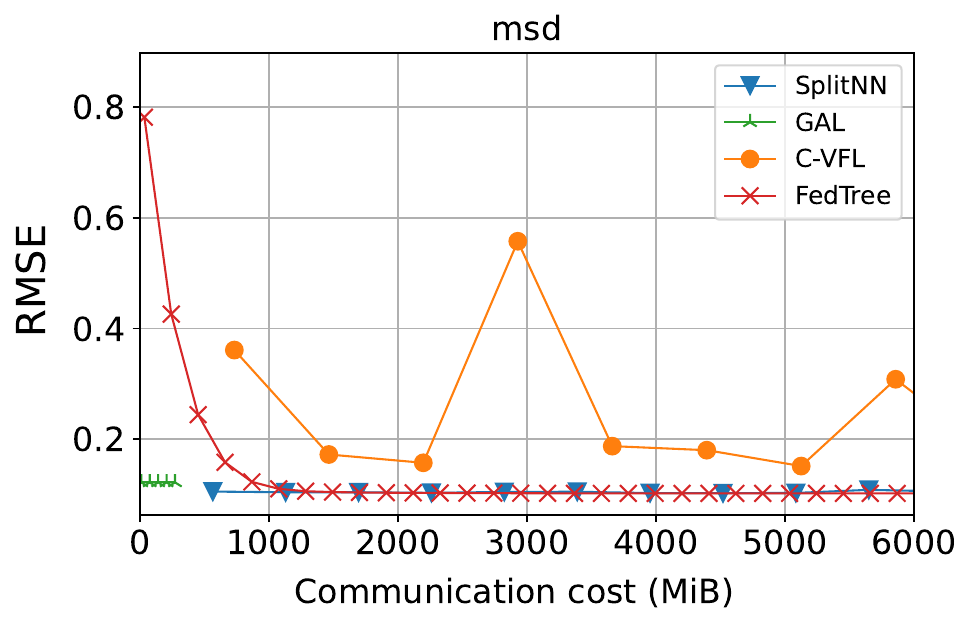}
    \caption{\reva{Test performance as the increasing of communication cost (\revd{MiB})}}
    \label{fig:comm-realtime}
\end{figure}

\subsection{Scalability}\label{sec:scalability}

In this section, we examine the scalability of various VFL algorithms on two high-dimensional datasets, depicted in Figure~\ref{fig:scalability_tests}. The datasets, split by importance with $\alpha=1$, consist of a varying number of parties, ranging from 2 to 2048. Our results demonstrate that \textbf{SplitNN and FedTree are scalable to thousands of parties without any significant drop in accuracy.} This is attributable to FedTree's lossless design and SplitNN's robust structure. However, both GAL and C-VFL show substantial performance declines with an increase in party numbers.

An intriguing observation is that \textbf{C-VFL's accuracy nearly matches SplitNN's when the number of parties reaches 2048 on the \texttt{gisette} dataset}. This is likely because the average number of features per party reduces to 2 in this scenario, causing the compression mechanism to potentially fail, and thus, C-VFL reverts to SplitNN's performance.

\begin{figure}[htpb]
    \centering
\includegraphics[width=.48\linewidth]{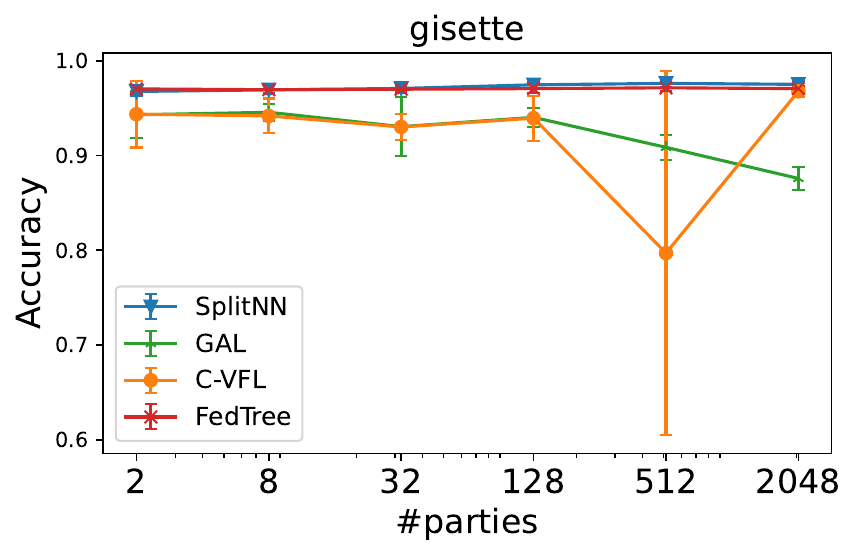}
\includegraphics[width=.48\linewidth]{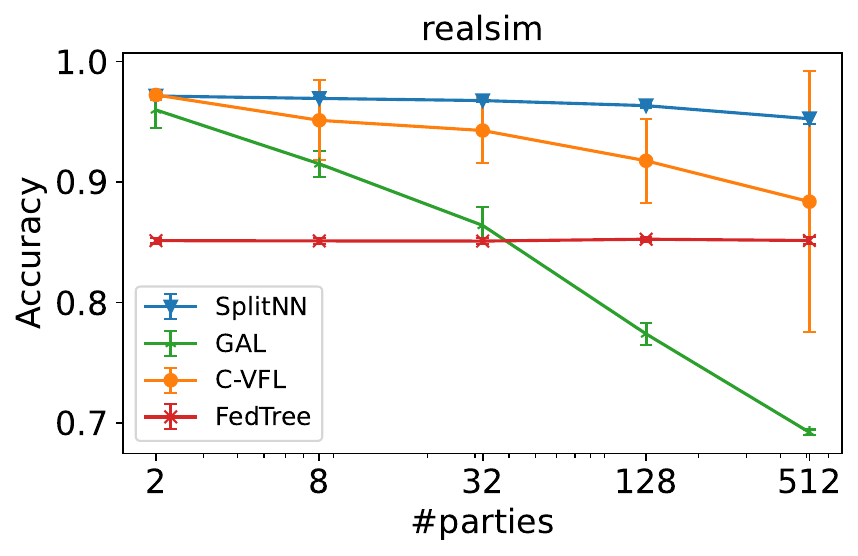}
    
    \caption{Scalability tests for VFL algorithms on \texttt{gisette} and \texttt{realsim} dataset}
    \label{fig:scalability_tests}
\end{figure}

\subsection{Training Time}\label{sec:train-time}

The training duration for VFL algorithms is consolidated in Table~\ref{tab:train-time}. It should be noted that FedTree and SecureBoost are executed without the use of encryption or noise. Conversely, we retain the default privacy setting for Pivot as it does not offer a non-encryption alternative. Three observations can be gleaned from the table.

Firstly, \textbf{we observe a considerable overhead associated with the encryption processes of Pivot}. Pivot, which employs both homomorphic encryption and secure multi-party computation to ensure stringent privacy, endures a training time that is up to $10^5$ times longer than FedTree. This limitation renders such strict privacy measures impractical for real-world applications that employ large datasets. This observation underscores the necessity for further exploration into the efficiency-privacy trade-off in VFL.

Secondly, when comparing non-encryption methods, we find that \textbf{split-based algorithms (SplitNN, FedTree) generally outperform ensemble-based algorithms (GAL) in terms of efficiency}. This is primarily because split-based algorithms require each party to train a partial model, whereas ensemble-based algorithms mandate that each party train an entire model for ensemble purposes. This design characteristic also contributes to the lower communication costs associated with ensemble-based algorithms, as demonstrated in Figure~\ref{fig:comm-in-out}.

Lastly, we note that \textbf{SplitNN demonstrates higher efficiency than FedTree on high-dimensional small datasets, yet demands more training time on low-dimensional large datasets}. This discrepancy arises because FedTree computes a fixed-size histogram for each feature, which alleviates the impact of a large number of instances but is sensitive to the number of features. Conversely, SplitNN trains data in batches, rendering it sensitive to the number of instances. This observation emphasizes the importance of carefully selecting a VFL algorithm based on the properties of the dataset in the application.

\begin{table}[htpb]
    \centering
    \small
    \caption{Estimated training time of VFL algorithms (4-party, in hours)}\label{tab:train-time}
    \begin{tabular}{ccccccc}
    \toprule
         \multirow{2}{*}{\textbf{Dataset}} & \multicolumn{6}{c}{\textbf{VFL Algorithm}} \\
         \cmidrule{2-7}
          & SplitNN & GAL & FedTree & SecureBoost & C-VFL & Pivot \textsuperscript{1}\\
          \midrule
            covtype & 0.19 & 2.18 & 0.076 & 14.84 & 1.45 & -       \\
            msd     & 0.17 & 2.01 & 0.037 & 0.28  & 1.29 & -       \\
            gisette & 0.01 & 0.03 & 0.052 & 0.56  & 0.02 & 25805.52\\
            realsim & 0.06 & 1.40 & 0.32  & 1.16  & 0.48 & -       \\
            epsilon & 0.15 & 1.79 & 0.33  & 7.99  & 1.11 & -       \\
            letter  & 0.01 & 0.06 & 0.47  & 2.69  & 0.04 & 6750.76 \\
            radar   & 0.12 & 1.31 & 0.21  & 10.06 & 0.83 & -       \\
          \bottomrule
    \end{tabular}
    \begin{tablenotes}
    \item \textsuperscript{1} A segmentation fault arises in Pivot when datasets encompass more than 60,000 samples. Consequently, only the \texttt{gisette} and \texttt{letter} datasets have their training times estimated.
    \end{tablenotes}
\end{table}

\revd{\subsection{Performance on Satellite dataset}\label{sec:real-acc}
In Table~\ref{tab:acc-satellite}, we present the single-party and VFL performance results on the Satellite dataset. For an equitable comparison, each single party trains a concatenated MLP, formed by linking a SplitNN's local model with its aggregated model, under the same hyperparameters. Our results indicate that VFL can yield approximately a 10\% accuracy improvement over local training, thus affirming the practical utility of the Satellite dataset for vertical federated learning applications.
}

\revd{\begin{table}[htbp]
\centering
\small
\caption{\revd{Accuracy of single party training (Solo) and VFL with SplitNN on Satellite dataset}}\label{tab:acc-satellite}
\begin{tabular}{cc}
\toprule
\textbf{Method} & \textbf{Accuracy (mean $\pm$ standard deviation)} \\
\midrule
Party $P_0$ (Solo) & 0.7368 $\pm$ 0.0086 \\
Party $P_1$ (Solo) & 0.7310 $\pm$ 0.0054 \\
Party $P_2$ (Solo) & 0.7320 $\pm$ 0.0055 \\
Party $P_3$ (Solo) & 0.7289 $\pm$ 0.0115 \\
Party $P_4$ (Solo) & 0.7251 $\pm$ 0.0080 \\
Party $P_5$ (Solo) & 0.7167 $\pm$ 0.0097 \\
Party $P_6$ (Solo) & 0.7353 $\pm$ 0.0045 \\
Party $P_7$ (Solo) & 0.7256 $\pm$ 0.0139 \\
Party $P_8$ (Solo) & 0.7335 $\pm$ 0.0126 \\
Party $P_9$ (Solo) & 0.7154 $\pm$ 0.0049 \\
Party $P_{10}$ (Solo) & 0.7264 $\pm$ 0.0172 \\
Party $P_{11}$ (Solo) & 0.7169 $\pm$ 0.0106 \\
Party $P_{12}$ (Solo) & 0.7182 $\pm$ 0.0075 \\
Party $P_{13}$ (Solo) & 0.7118 $\pm$ 0.0084 \\
Party $P_{14}$ (Solo) & 0.7060 $\pm$ 0.0119 \\
Party $P_{15}$ (Solo) & 0.7292 $\pm$ 0.0069 \\
\midrule
Party $P_1\sim P_{15}$ (SplitNN) & \textbf{0.8117 $\pm$ 0.0035} \\
\bottomrule
\end{tabular}
\end{table}}

\subsection{Details of VFL Performance}

\reva{In this subsection, we present the detailed information (including msd) in Table~\ref{tab:accuracy}. Besides the previous observations, we can make two additional observations on the \texttt{msd} regression dataset. First, compared among algorithms, split-based algorithms still have leading performance. Second, compared among split parameters, we observe that both \(\alpha\) and \(\beta\) have slight effect on the performance on \texttt{msd}.}

\begin{table}[htpb]
    \small 
    \caption{\revd{Accuracy/RMSE of VFL algorithms on different datasets varying imbalance and correlation}}\label{tab:accuracy}
    \setlength{\tabcolsep}{2pt}
    \resizebox{\textwidth}{!}{%
    \begin{tabular}{cccccccccc}
        
        
    \toprule
    \multirow{2}{*}{\textbf{Dataset}} & \multicolumn{1}{c}{\multirow{2}{*}{\textbf{Method}}} & \multicolumn{4}{c}{\textbf{Performance of importance-based split}}            & \multicolumn{4}{c}{\textbf{Performance of correlation-based split}}  \\ \cmidrule(l){3-10} 
                             & \multicolumn{1}{c}{}                       & $\alpha$ = 0.1 & $\alpha$ = 1 & $\alpha$ = 10 & $\alpha$ = 100 & $\beta$ = 0 & $\beta$ = 0.3 & $\beta$ = 0.6 & $\beta$ = 1 \\ \midrule

\multirow{5}{*}{covtype} & SplitNN  & \textbf{91.2±0.4}\%
 & \textbf{92.1±0.2\%}
 & \textbf{92.1±0.3\%}
 & \textbf{92.1±0.1\%}
 & \textbf{92.0±0.2\%}
 & \textbf{92.1±0.2\%}
 & \textbf{92.3±0.2\%}
 & \textbf{92.1±0.1\%}
 \\
 & GAL  & 66.1±2.9\%
 & 60.2±1.1\%
 & 62.1±4.1\%
 & 61.5±3.9\%
 & 63.1±2.3\%
 & 62.9±2.4\%
 & 63.0±2.4\%
 & 64.0±1.6\%
 \\
 & FedTree  & 77.9±0.1\%
 & 77.8±0.2\%
 & 77.8±0.1\%
 & 77.8±0.1\%
 & 77.8±0.2\%
 & 77.8±0.2\%
 & 77.8±0.2\%
 & 77.9±0.2\%
 \\
 & C-VFL  & 46.9±4.1\%
 & 14.8±19.0\%
 & 26.9±25.3\%
 & 38.3±18.6\%
 & 26.6±18.2\%
 & 36.3±17.4\%
 & 39.2±15.4\%
 & 46.6±6.7\%
 \\
 & FedOnce  & 68.9±5.9\%
 & 70.2±2.6\%
 & 74.4±3.1\%
 & 75.0±1.5\%
 & 71.6±3.5\%
 & 73.4±1.7\%
 & 75.9±0.8\%
 & 73.8±2.5\%
 \\
\midrule
\multirow{5}{*}{msd} & SplitNN  & \textbf{0.1010±0.0}
 & \textbf{0.1020±0.0}
 & \textbf{0.1010±0.0}
 & \textbf{0.1010±0.0}
 & \textbf{0.1015±0.0}
 & \textbf{0.1015±0.0}
 & \textbf{0.1013±0.0}
 & \textbf{0.1010±0.0}
 \\
 & GAL  & 0.1220±0.0
 & 0.1220±0.0
 & 0.1220±0.0
 & 0.1220±0.0
 & 0.1222±0.0
 & 0.1222±0.0
 & 0.1222±0.0
 & 0.1222±0.0
 \\
 & FedTree  & 0.1040±0.0
 & 0.1040±0.0
 & 0.1040±0.0
 & 0.1040±0.0
 & 0.1035±0.0
 & 0.1035±0.0
 & 0.1035±0.0
 & 0.1035±0.0
 \\
 & C-VFL  & 0.1270±0.0
 & 0.1270±0.0
 & 0.1270±0.0
 & 0.1270±0.0
 & 0.1910±0.0
 & 0.1708±0.0
 & 0.1878±0.0
 & 0.1830±0.0
 \\
 & FedOnce  & 0.1110±0.0
 & 0.1103±0.0
 & 0.1080±0.0
 & 0.1079±0.0
 & 0.1100±0.0
 & 0.1097±0.0
 & 0.1089±0.0
 & 0.1092±0.0
 \\
\midrule
\multirow{5}{*}{gisette} & SplitNN  & 96.8±0.4\%
 & 96.8±0.3\%
 & 96.9±0.3\%
 & 96.9±0.2\%
 & \textbf{97.0±0.4\%}
 & 96.9±0.6\%
 & 96.9±0.3\%
 & 96.7±0.3\%
 \\
 & GAL  & 94.4±1.7\%
 & 95.7±0.6\%
 & 96.2±0.6\%
 & 96.1±0.6\%
 & 95.4±0.5\%
 & 95.5±0.5\%
 & 95.6±0.5\%
 & 95.4±0.6\%
 \\
 & FedTree  & \textbf{97.0±0.3\%}
 & 97.1±0.5\%
 & \textbf{97.1±0.5\%}
 & \textbf{97.1±0.3\%}
 & \textbf{97.0±0.4\%}
 & \textbf{96.9±0.5\%}
 & \textbf{97.0±0.3\%}
 & \textbf{97.1±0.4\%}
 \\
 & C-VFL  & 91.0±11.5\%
 & \textbf{97.8±3.4\%}
 & 90.1±11.8\%
 & 94.0±4.0\%
 & 95.0±1.9\%
 & 95.7±1.2\%
 & 95.0±1.2\%
 & 94.6±2.0\%
 \\
 & FedOnce  & 91.0±9.8\%
 & 96.4±0.6\%
 & 96.3±0.5\%
 & 96.2±0.5\%
 & 95.6±0.4\%
 & 95.6±0.5\%
 & 96.0±0.7\%
 & 95.7±0.5\%
 \\
\midrule
\multirow{5}{*}{realsim} & SplitNN  & \textbf{97.1±0.0\%}
 & \textbf{97.0±0.0\%}
 & \textbf{97.0±0.1\%}
 & \textbf{97.1±0.1\%}
 & \textbf{96.9±0.2\%}
 & \textbf{97.0±0.2\%}
 & \textbf{97.0±0.1\%}
 & \textbf{96.9±0.2\%}
 \\
 & GAL  & 92.4±0.7\%
 & 93.5±2.2\%
 & 96.1±0.4\%
 & 96.4±0.1\%
 & 96.5±0.2\%
 & 96.5±0.2\%
 & 96.5±0.2\%
 & 96.5±0.2\%
 \\
 & FedTree  & 85.1±0.2\%
 & 85.2±0.2\%
 & 85.1±0.1\%
 & 85.1±0.2\%
 & 85.2±0.2\%
 & 85.2±0.3\%
 & 85.2±0.2\%
 & 85.1±0.2\%
 \\
 & C-VFL  & 89.6±17.4\%
 & 88.6±16.7\%
 & 89.5±17.4\%
 & 87.8±16.1\%
 & 95.9±2.1\%
 & 96.5±0.8\%
 & 94.9±3.6\%
 & 93.8±3.7\%
 \\
 & FedOnce  & 94.1±3.0\%
 & 95.1±0.4\%
 & 95.6±0.3\%
 & 95.2±0.2\%
 & 94.9±0.3\%
 & 94.9±0.3\%
 & 95.1±0.3\%
 & 94.7±0.2\%
 \\
\midrule
\multirow{5}{*}{epsilon} & SplitNN  & 86.3±0.1\%
 & \textbf{86.2±0.1\%}
 & \textbf{86.3±0.0\%}
 & \textbf{86.2±0.0\%}
 & \textbf{85.9±0.1\%}
 & \textbf{86.0±0.1\%}
 & \textbf{86.1±0.1\%}
 & \textbf{86.2±0.1\%}
 \\
 & GAL  & \textbf{88.2±0.6\%}
 & 85.9±1.5\%
 & 85.7±0.4\%
 & \textbf{86.3±0.2\%}
 & 85.8±0.4\%
 & 85.8±0.4\%
 & 85.8±0.4\%
 & 85.8±0.4\%
 \\
 & FedTree  & 77.2±0.1\%
 & 77.3±0.1\%
 & 77.2±0.0\%
 & 77.2±0.1\%
 & 77.2±0.1\%
 & 77.2±0.1\%
 & 77.3±0.0\%
 & 77.2±0.1\%
 \\
 & C-VFL  & 50.1±0.0\%
 & 53.5±4.9\%
 & 67.7±20.0\%
 & 61.6±16.4\%
 & 73.8±7.5\%
 & 65.0±11.1\%
 & 71.3±13.5\%
 & 69.3±14.8\%
 \\
 & FedOnce  & 76.6±12.8\%
 & 79.5±3.0\%
 & 81.7±3.0\%
 & 80.7±1.7\%
 & 79.7±1.2\%
 & 80.1±1.6\%
 & 78.3±2.0\%
 & 77.7±1.0\%
 \\
\midrule
\multirow{5}{*}{letter} & SplitNN  & \textbf{95.5±0.3\%}
 & \textbf{96.0±0.3\%}
 & \textbf{96.1±0.2\%}
 & \textbf{96.0±0.3\%}
 & \textbf{95.0±0.3\%}
 & \textbf{94.7±0.4\%}
 & \textbf{94.7±0.4\%}
 & \textbf{95.0±0.3\%}
 \\
 & GAL  & 57.9±4.1\%
 & 51.4±3.1\%
 & 52.0±4.3\%
 & 49.1±2.6\%
 & 48.9±1.9\%
 & 48.9±1.9\%
 & 48.9±1.8\%
 & 49.0±1.9\%
 \\
 & FedTree  & 91.7±0.3\%
 & 91.9±0.4\%
 & 92.0±0.3\%
 & 92.0±0.3\%
 & 91.8±0.3\%
 & 91.9±0.4\%
 & 91.9±0.3\%
 & 92.0±0.3\%
 \\
 & C-VFL  & 5.0±1.0\%
 & 12.6±8.3\%
 & 37.1±47.1\%
 & 43.8±4.5\%
 & 69.4±4.2\%
 & 70.4±2.4\%
 & 70.6±3.7\%
 & 68.1±5.7\%
 \\
 & FedOnce  & 50.3±9.6\%
 & 55.6±3.9\%
 & 60.1±1.9\%
 & 60.0±1.1\%
 & 60.2±2.1\%
 & 59.8±3.6\%
 & 59.1±2.1\%
 & 60.4±2.2\%
 \\
\midrule
\multirow{5}{*}{radar} & SplitNN  & \textbf{99.7±0.0\%}
 & \textbf{99.8±0.0\%}
 & \textbf{99.8±0.0\%}
 & \textbf{99.8±0.0\%}
 & \textbf{99.8±0.0\%}
 & \textbf{99.8±0.0\%}
 & \textbf{99.8±0.0\%}
 & \textbf{99.8±0.0\%}
 \\
 & GAL  & 90.0±3.6\%
 & 95.9±2.5\%
 & 97.5±0.4\%
 & 97.9±0.1\%
 & 96.5±0.1\%
 & 96.5±0.1\%
 & 96.5±0.1\%
 & 96.5±0.1\%
 \\
 & FedTree  & 99.3±0.0\%
 & 99.3±0.0\%
 & 99.3±0.0\%
 & 99.3±0.0\%
 & 99.3±0.0\%
 & 99.3±0.0\%
 & 99.3±0.0\%
 & 99.2±0.0\%
 \\
 & C-VFL  & 59.9±47.8\%
 & 53.4±38.5\%
 & 43.1±23.9\%
 & 37.3±15.7\%
 & 35.5±18.6\%
 & 44.5±17.9\%
 & 53.2±19.8\%
 & 40.0±15.0\%
 \\
 & FedOnce  & 97.7±1.6\%
 & 98.4±0.2\%
 & 98.4±0.2\%
 & 98.4±0.1\%
 & 98.1±0.1\%
 & 98.1±0.1\%
 & 98.1±0.1\%
 & 98.3±0.1\%
 \\
\bottomrule
\end{tabular}}\end{table}




\subsection{Ablation Study of Truncated Threshold of SVD}\label{apdx:ablation_d_t}

\reva{In this subsection, we assess the Pcor of each dataset using VertiBench, varying the truncated threshold \(d_t\). We quantify both the speedup - defined as the execution time ratio between exact and truncated SVD - and the relative error, as outlined in Eq.~\ref{eq:truncate-svd}.}
\begin{equation}\label{eq:truncate-svd}
    \text{Relative Error} = \frac{\left|\text{Approximate Pcor} - \text{Exact Pcor}\right|}{\text{Exact Pcor}}
\end{equation}
\reva{From the presented data, it is evident that high-dimensional datasets such as \texttt{gisette}, \texttt{realsim}, \texttt{epsilon}, and \texttt{radar} benefit from notable speedup with minimal error. In contrast, for lower-dimensional datasets like \texttt{covtype}, \texttt{msd}, and \texttt{letter}, the speedup is marginal and accompanied by a significant relative error. Consequently, in our experiments, we employ exact SVD for datasets comprising fewer than 100 features. For those exceeding 100 features, we utilize approximate SVD with \(d_t=400\). }

\begin{figure}[htpb]
    \centering
    \includegraphics[width=0.32\linewidth]{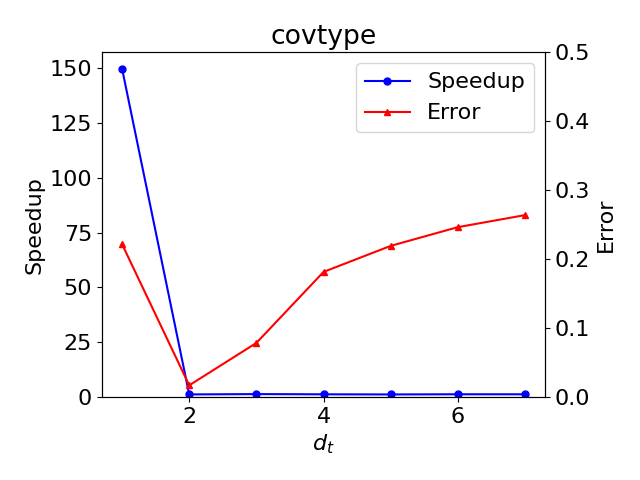}
    \includegraphics[width=0.32\linewidth]{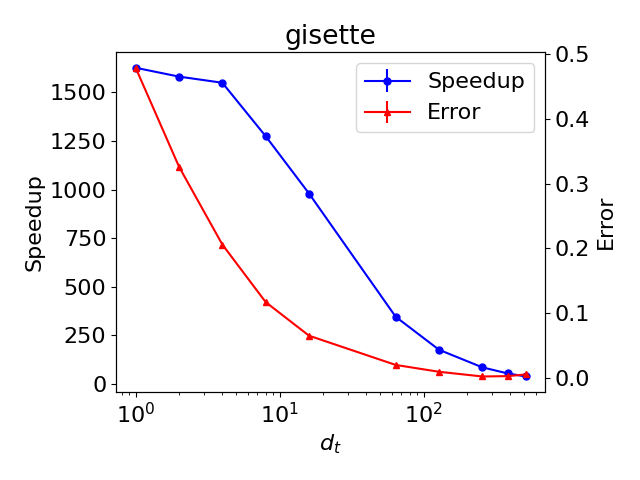}
    \includegraphics[width=0.32\linewidth]{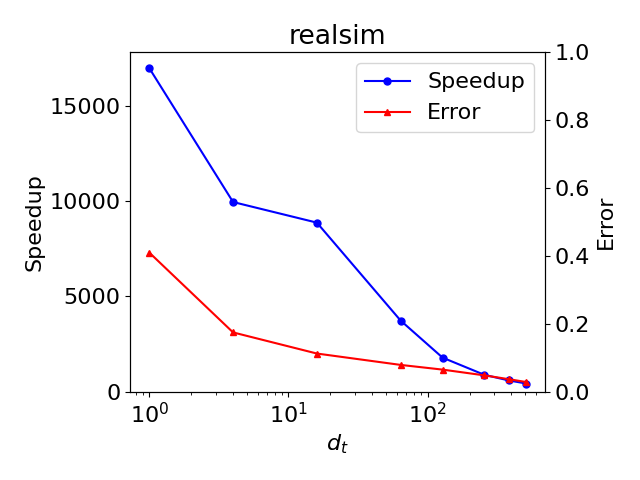}
    \includegraphics[width=0.32\linewidth]{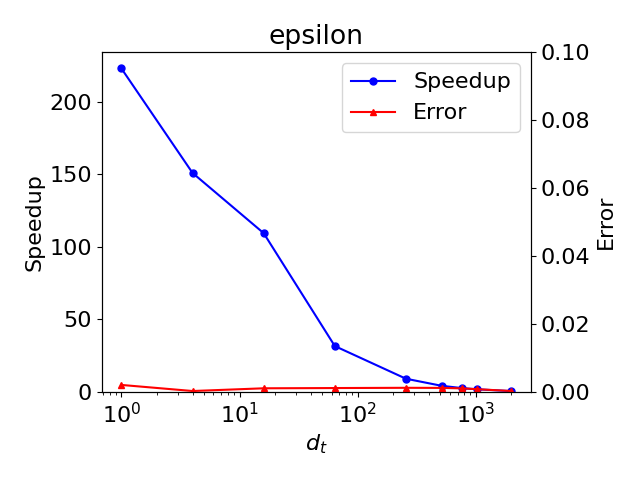}
    \includegraphics[width=0.32\linewidth]{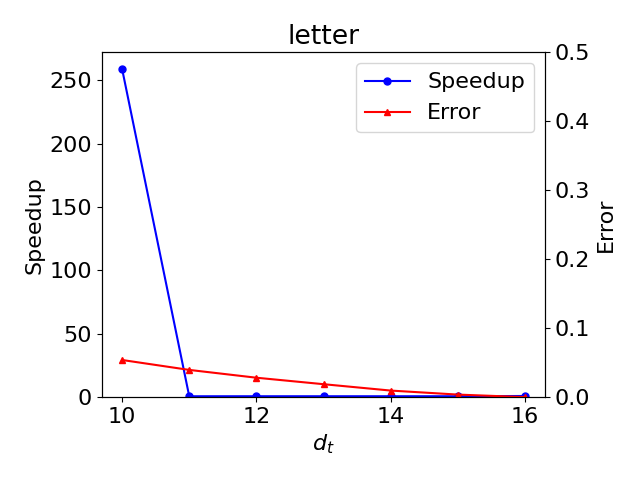}
    \includegraphics[width=0.32\linewidth]{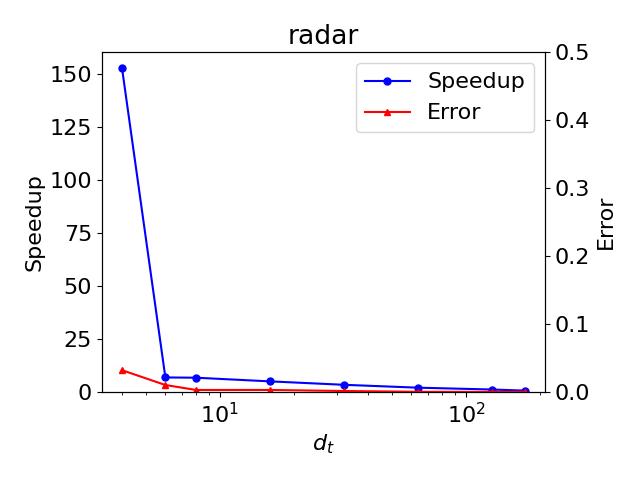}
    \includegraphics[width=0.32\linewidth]{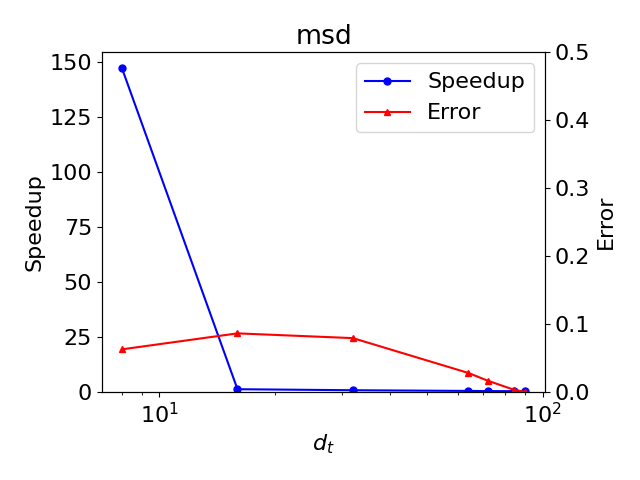}
    \caption{\reva{Effect of $d_t$ on relative error and speedup}}
    \label{fig:svd-dt}
\end{figure}

\subsection{Performance on MNIST and CIFAR10}\label{sec:exp-image}
\reva{In this subsection, we extend VertiBench to accommodate image datasets, with experiments conducted on \texttt{MNIST} and \texttt{CIFAR10} revealing novel insights.}

\paragraph{Image Dataset Splitting.} \reva{Unlike tabular datasets, image datasets hold crucial positional information, implying that feature order is pertinent. Adapting to this characteristic, we flatten the image and perform the split using VertiBench, akin to tabular datasets, all while preserving positional metadata. Post-splitting, the features are reconfigured to their original image positions, and any absent features are replaced with the background color - black for \texttt{MNIST} (Figure~\ref{fig:visual_mnist_imp},\ref{fig:visual_mnist_corr}) and white for \texttt{CIFAR10} (Figure~\ref{fig:visual_cifar10_imp},\ref{fig:visual_cifar10_corr}).}

\paragraph{Visualization Insights.} \reva{The processed images post-split elucidate the effects of $\alpha$ and $\beta$. Analyzing by party importance, $\alpha$, we observe that smaller $\alpha$ values result in certain parties retaining recognizable image details (e.g., party 2), while others contain little meaningful information. Conversely, a larger $\alpha$ distributes image data more equitably across parties. Addressing party correlation, $\beta$, especially in the \texttt{MNIST} context, it is observed that a smaller $\beta$ ensures each party is allocated distinct parts of an image. For instance, with $\beta=0.0$, party 3 predominantly possesses the left portion of the digit ``8'', while party 4 obtains its right portion. In contrast, a larger $\beta$ induces high inter-party correlation, with all parties vaguely representing the digit. This trend is less pronounced in \texttt{CIFAR10} due to its narrower Icor range.}

\begin{figure}[htpb]
\centering
    \includegraphics[width=0.48\linewidth]{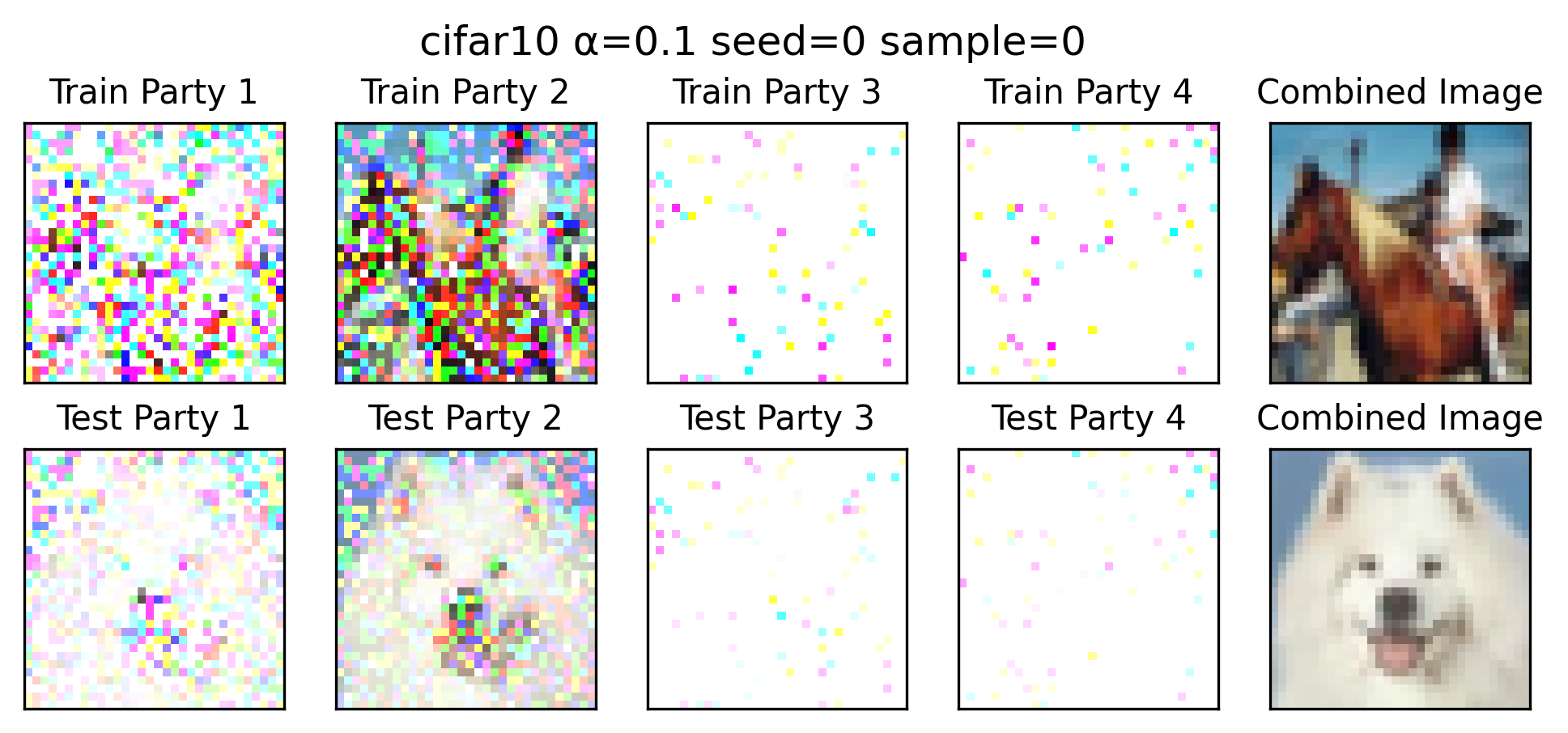}
    \includegraphics[width=0.48\linewidth]{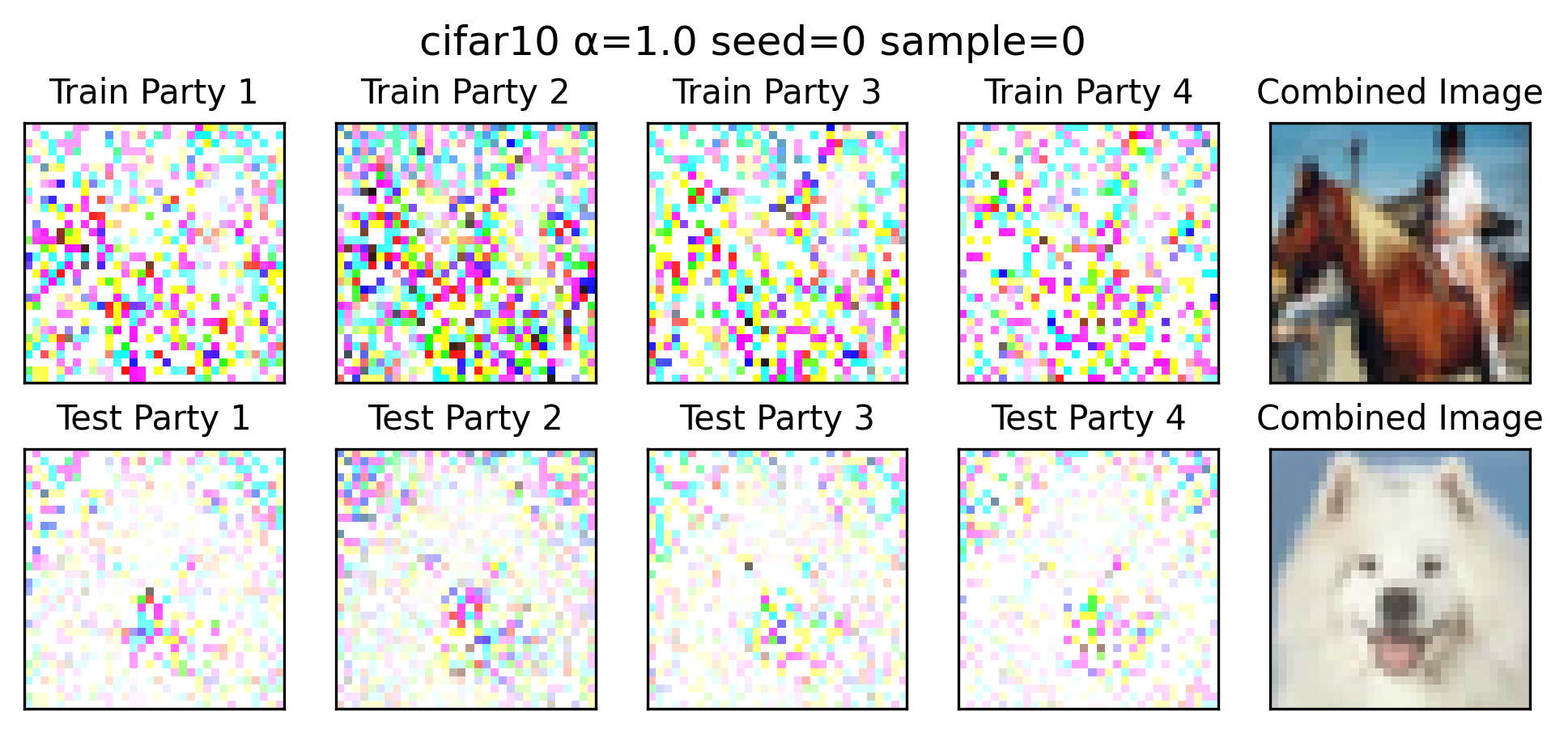}
    \includegraphics[width=0.48\linewidth]{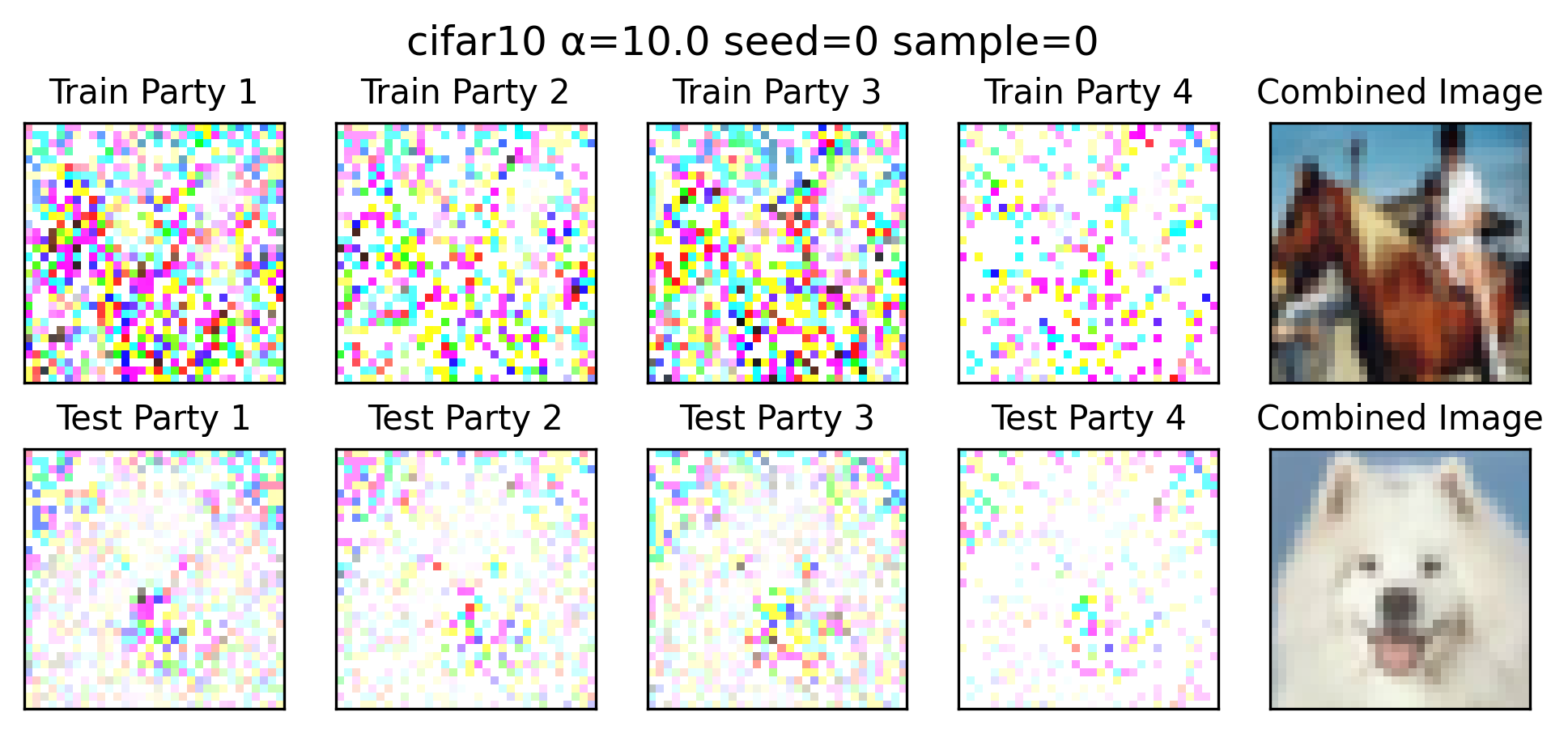}
    \includegraphics[width=0.48\linewidth]{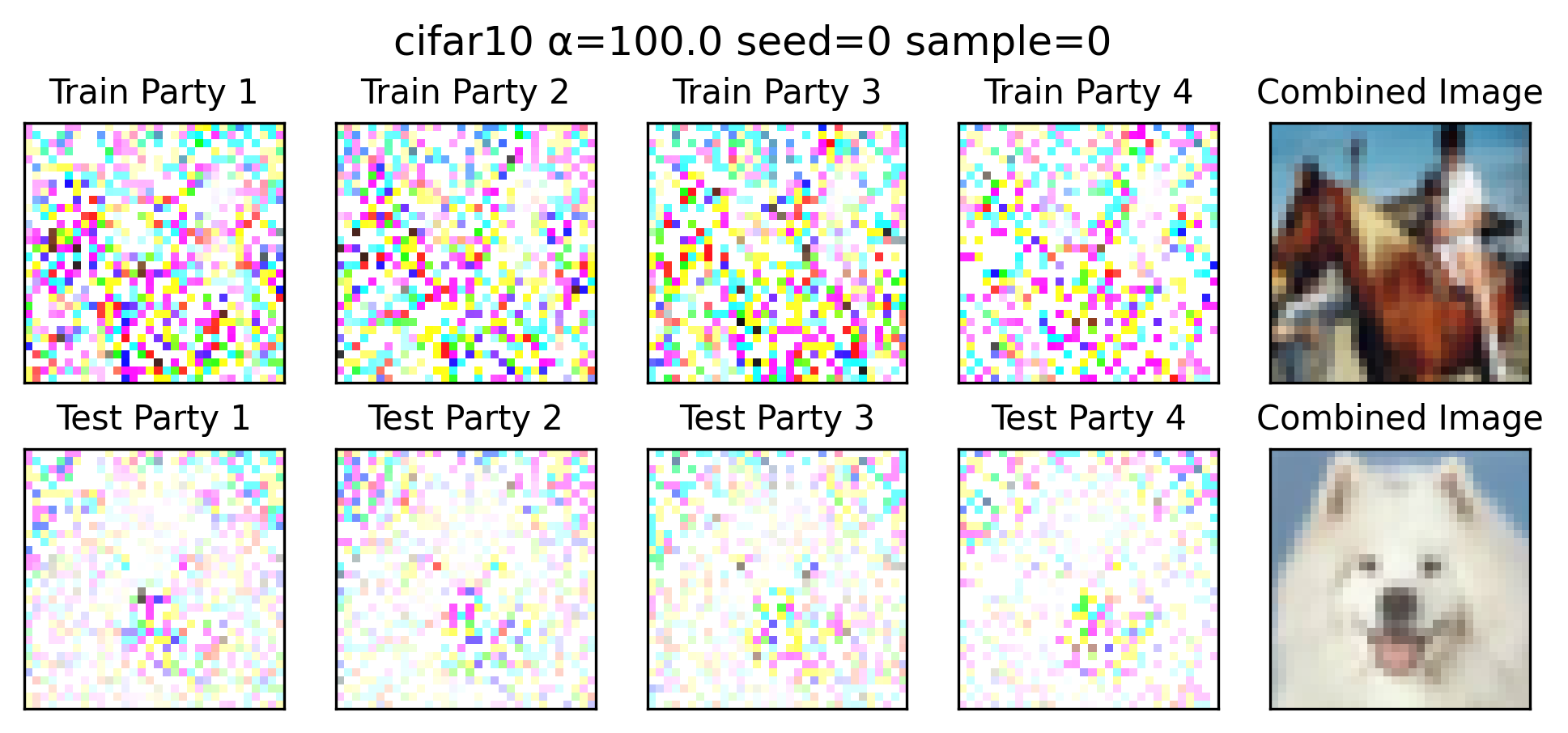}
    \caption{\reva{Visualization of \texttt{CIFAR10} split by different $\alpha$}}
    \label{fig:visual_cifar10_imp}
\end{figure}
\begin{figure}[htpb]
    \centering
    \includegraphics[width=0.48\linewidth]{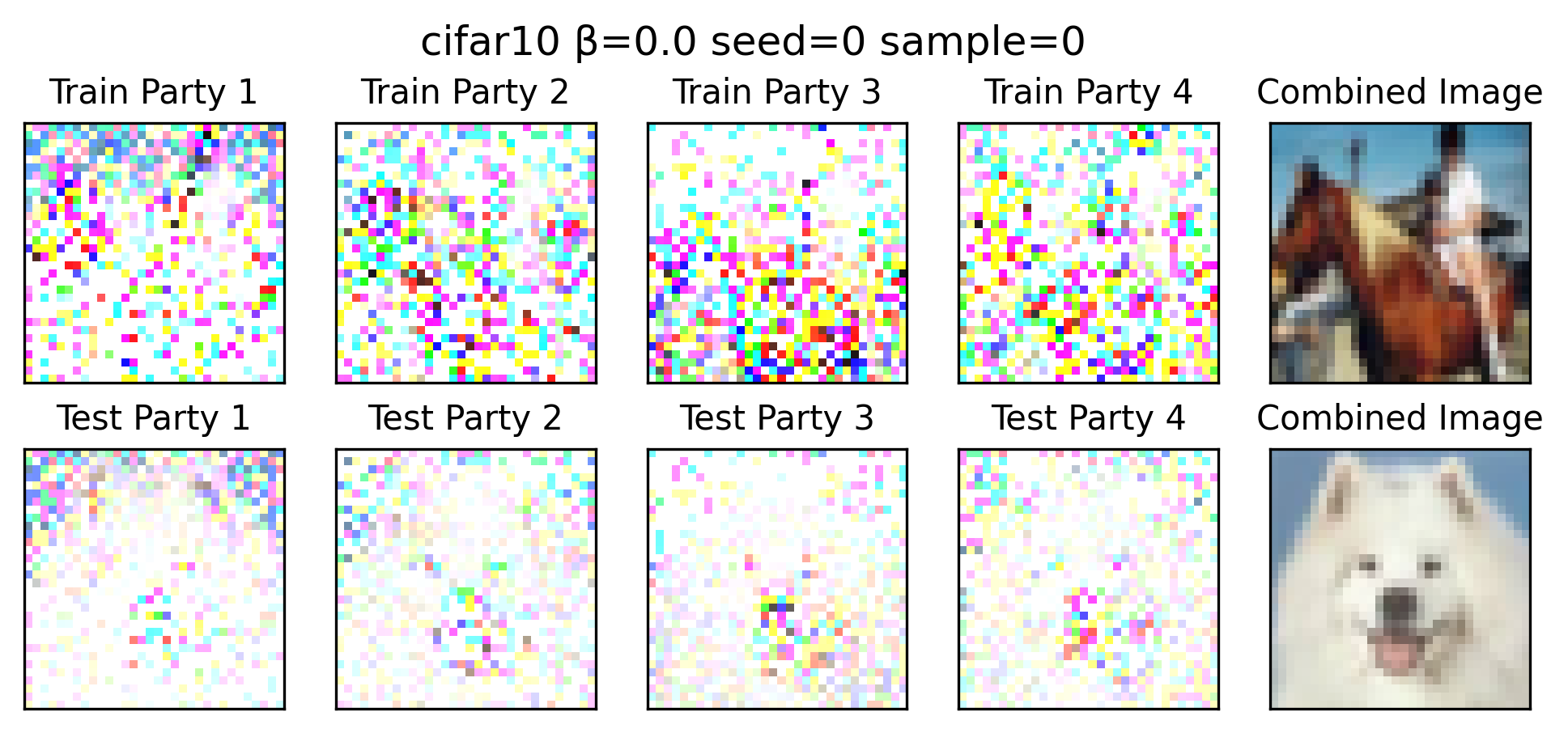}
    \includegraphics[width=0.48\linewidth]{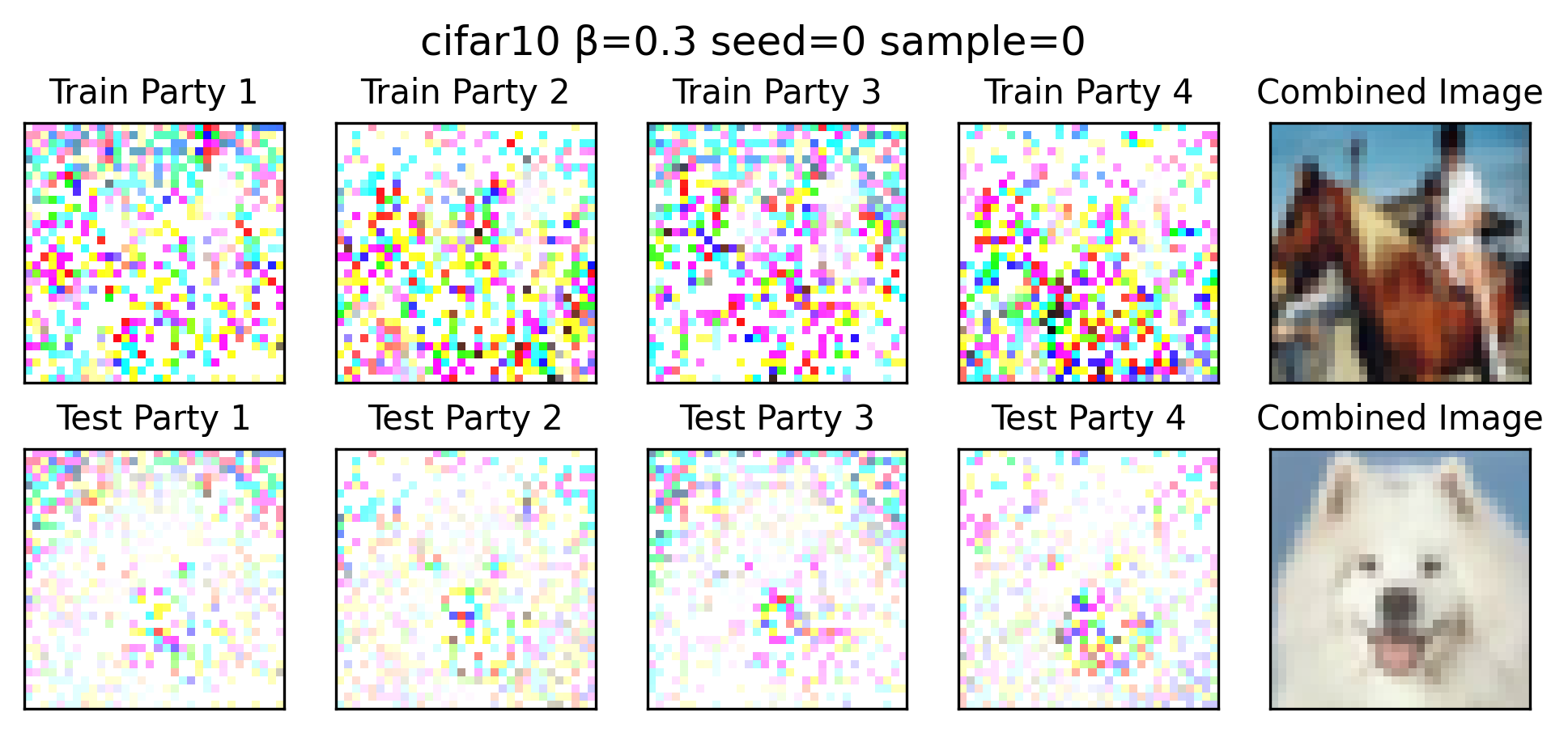}
    \includegraphics[width=0.48\linewidth]{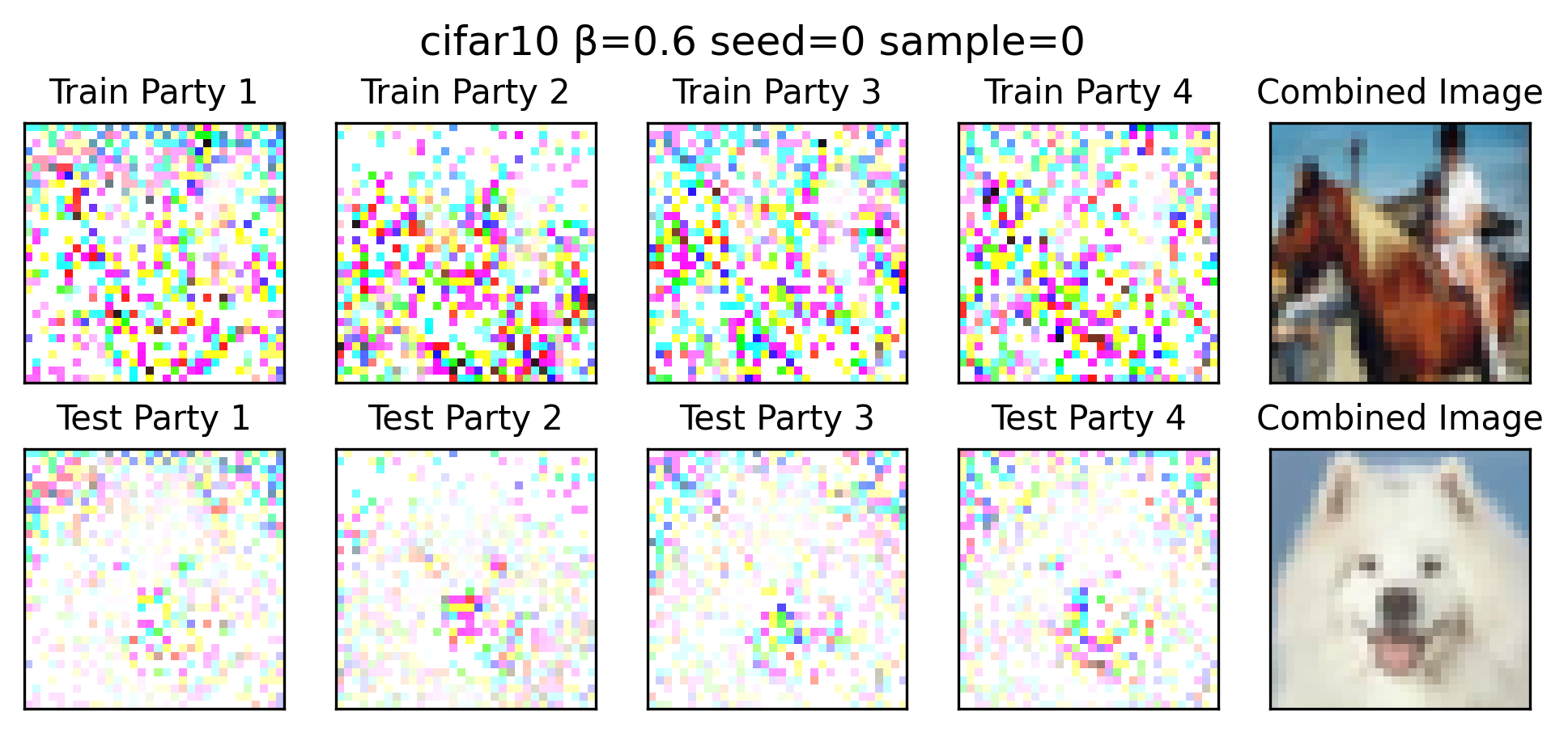}
    \includegraphics[width=0.48\linewidth]{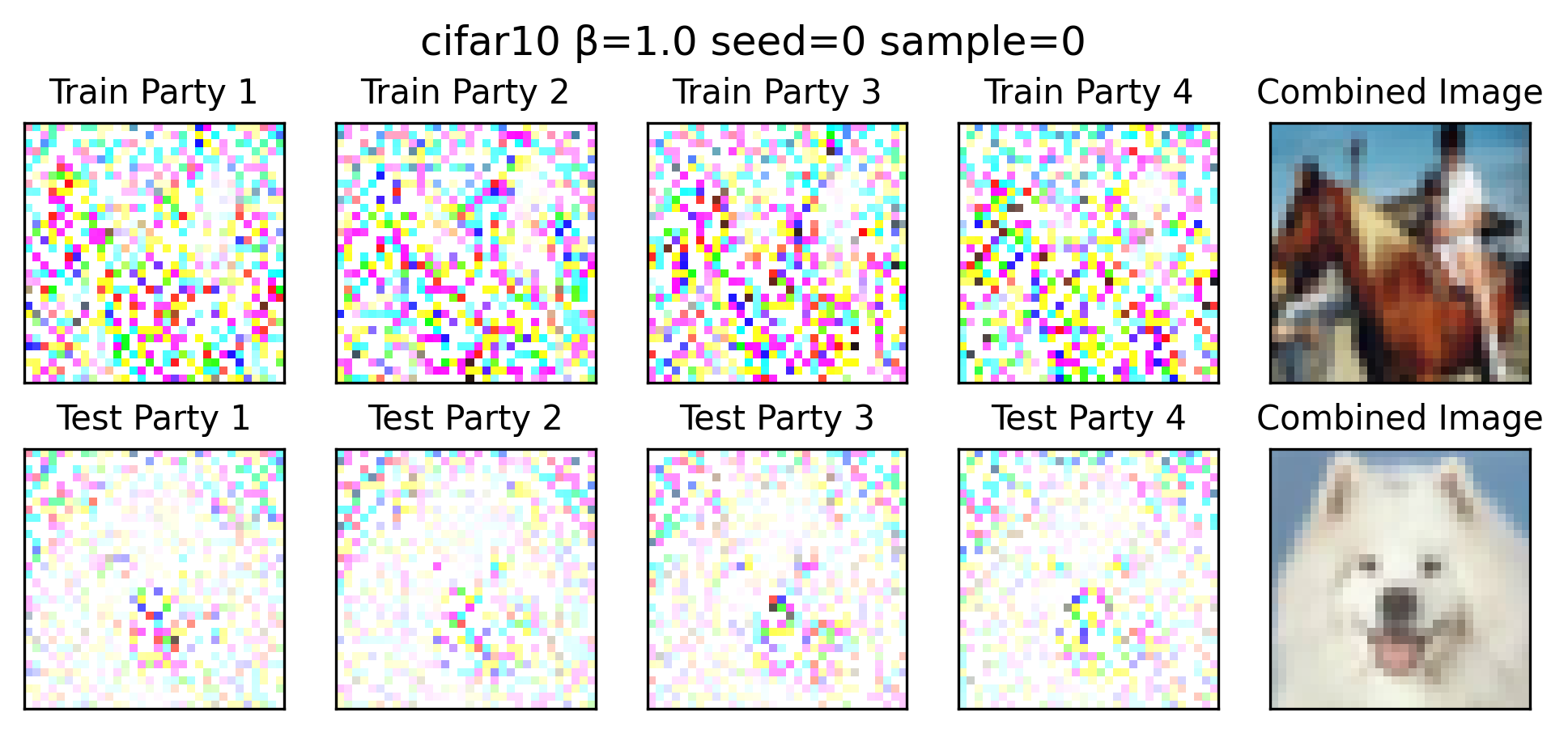}
    \caption{\revd{Visualization of \texttt{CIFAR10} split by different $\beta$}}
    \label{fig:visual_cifar10_corr}
\end{figure}

\begin{figure}[htpb]
\centering
    \includegraphics[width=0.48\linewidth]{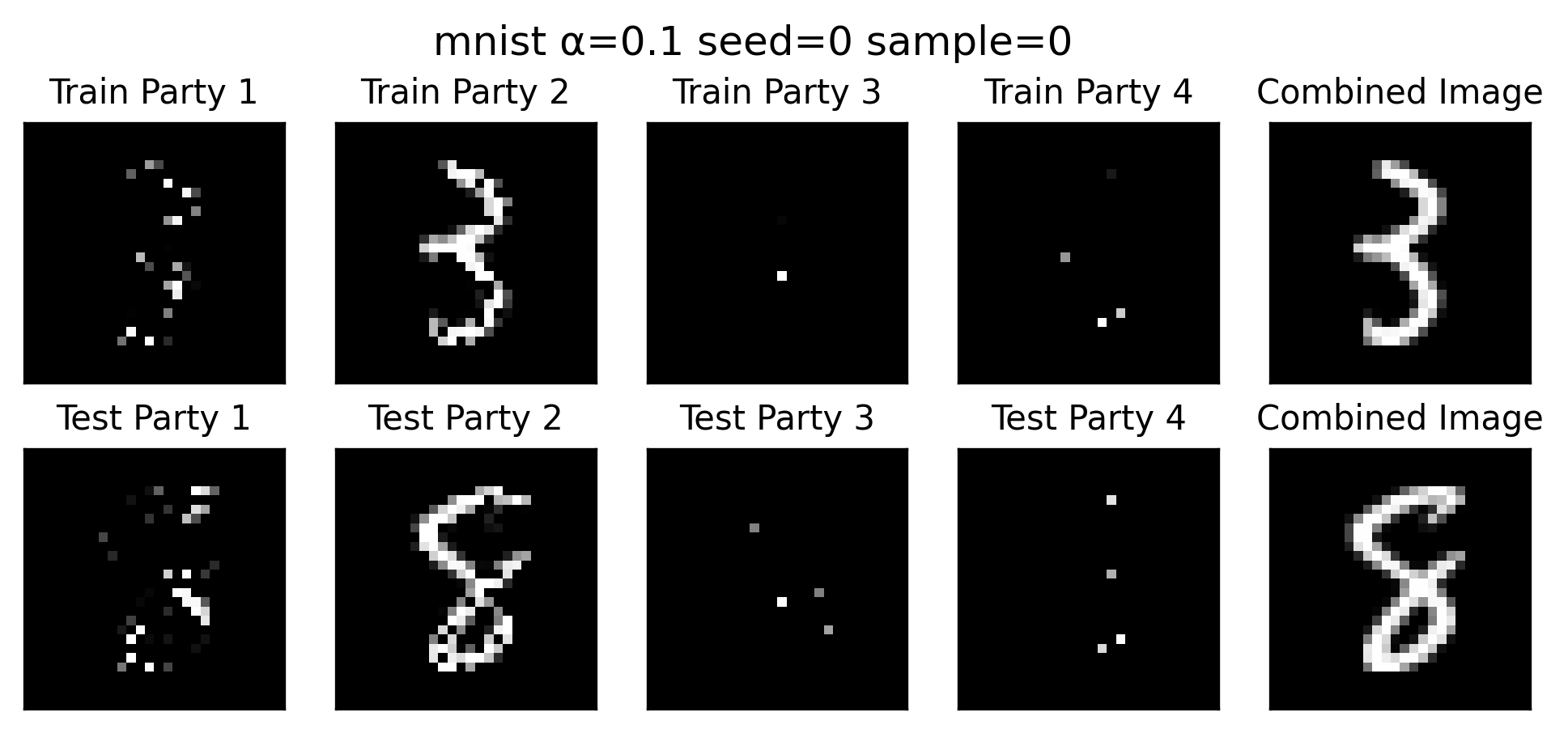}
    \includegraphics[width=0.48\linewidth]{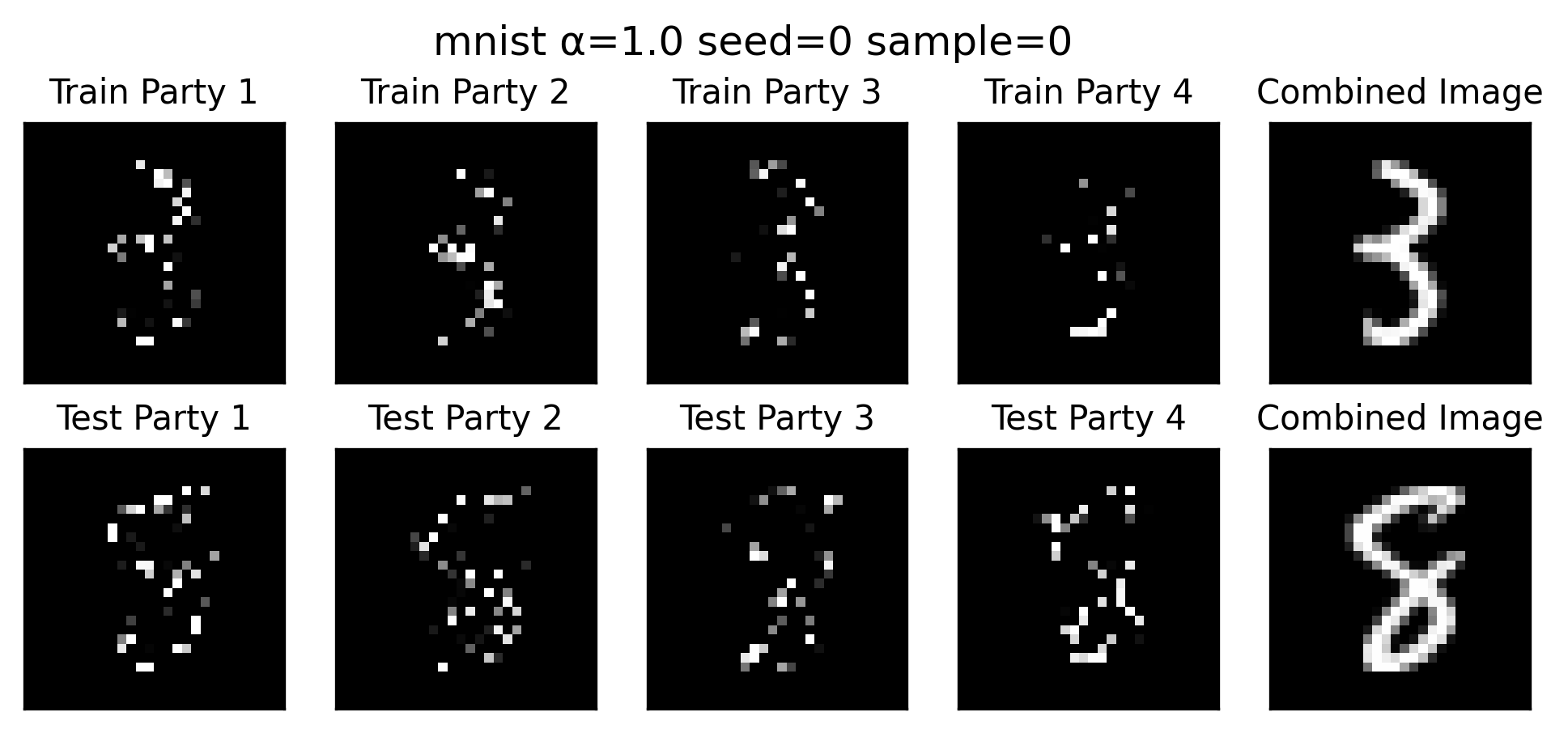}
    \includegraphics[width=0.48\linewidth]{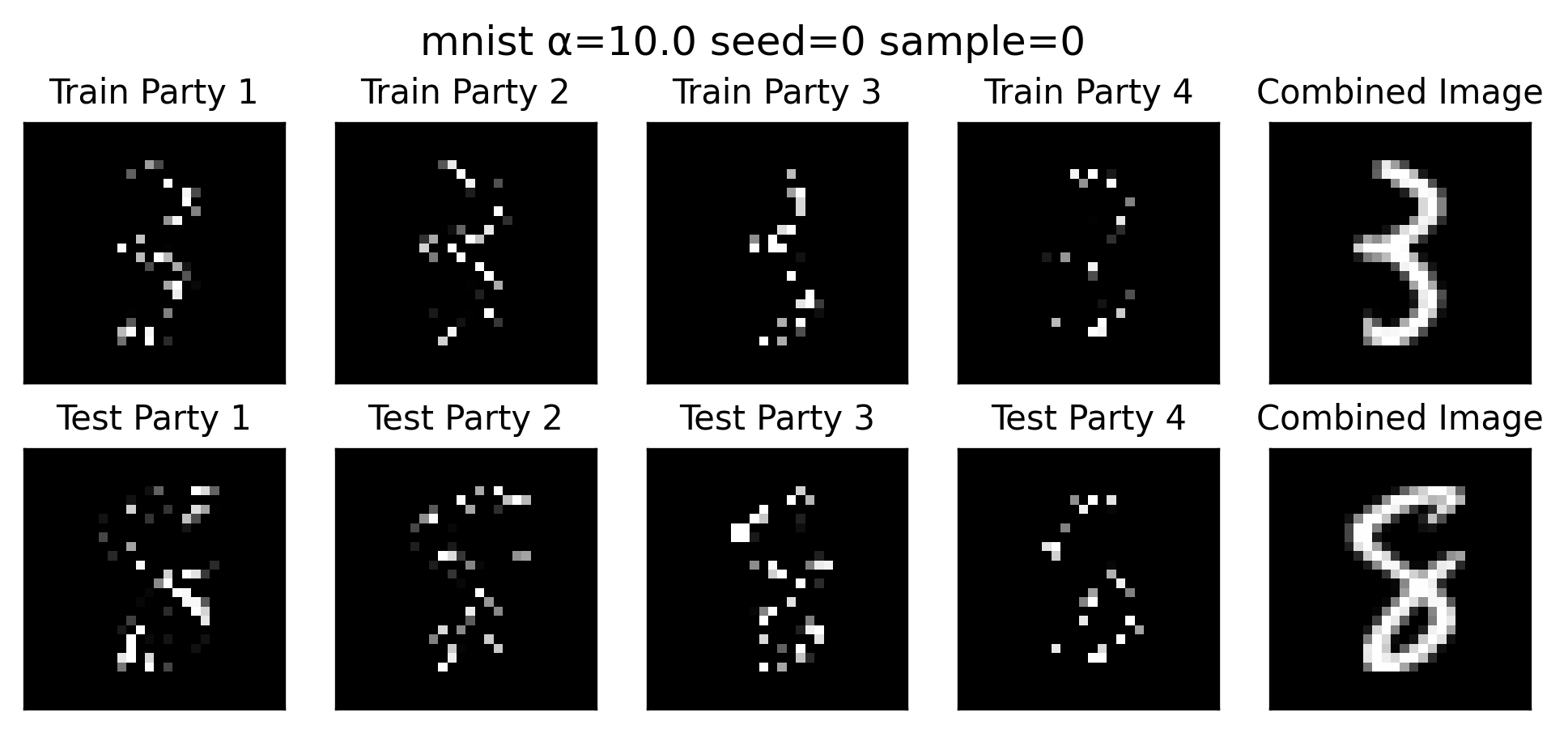}
    \includegraphics[width=0.48\linewidth]{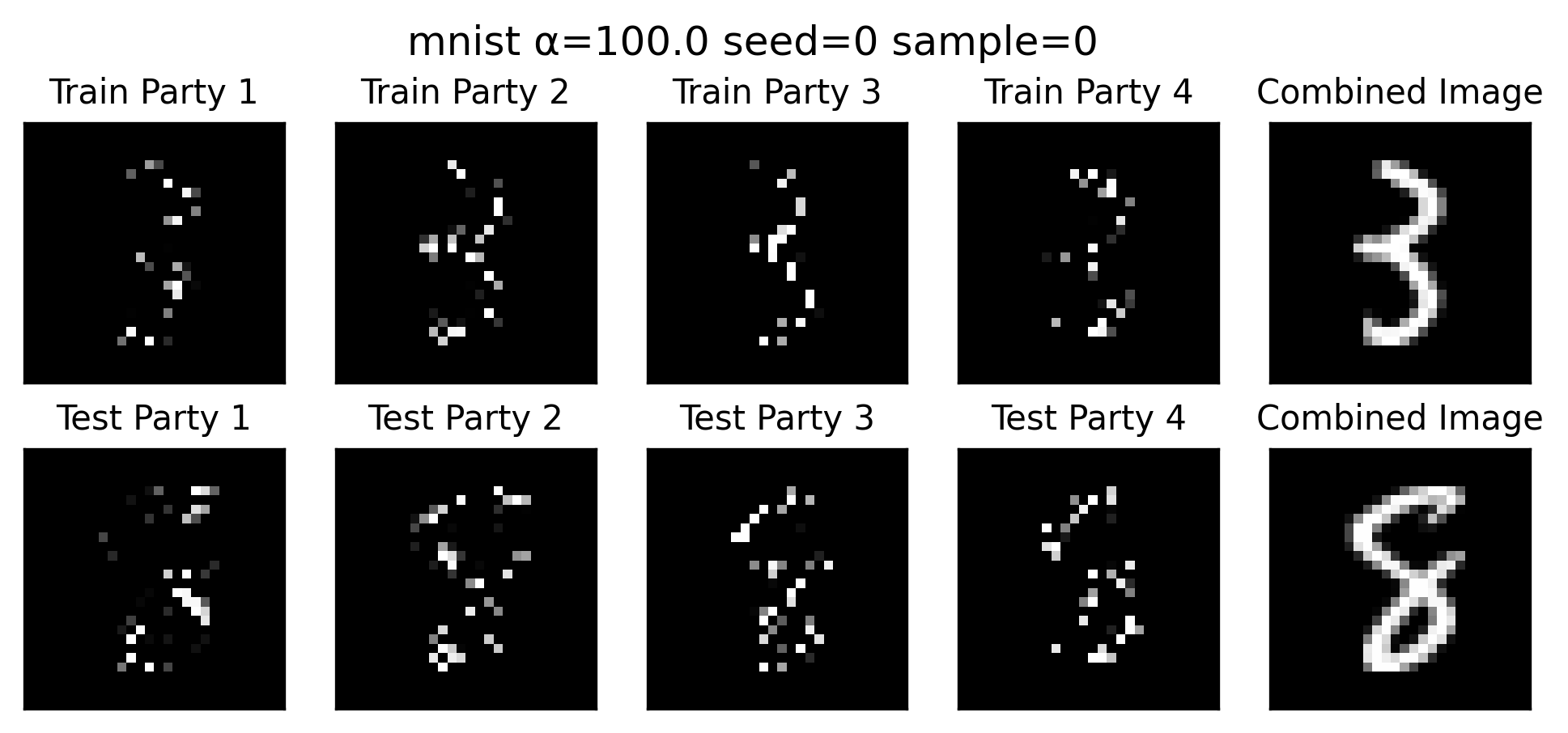}
    \caption{\reva{Visualization of \texttt{MNIST} split by different $\alpha$}}
    \label{fig:visual_mnist_imp}
\end{figure}
\begin{figure}[htpb]
    \centering
    \includegraphics[width=0.48\linewidth]{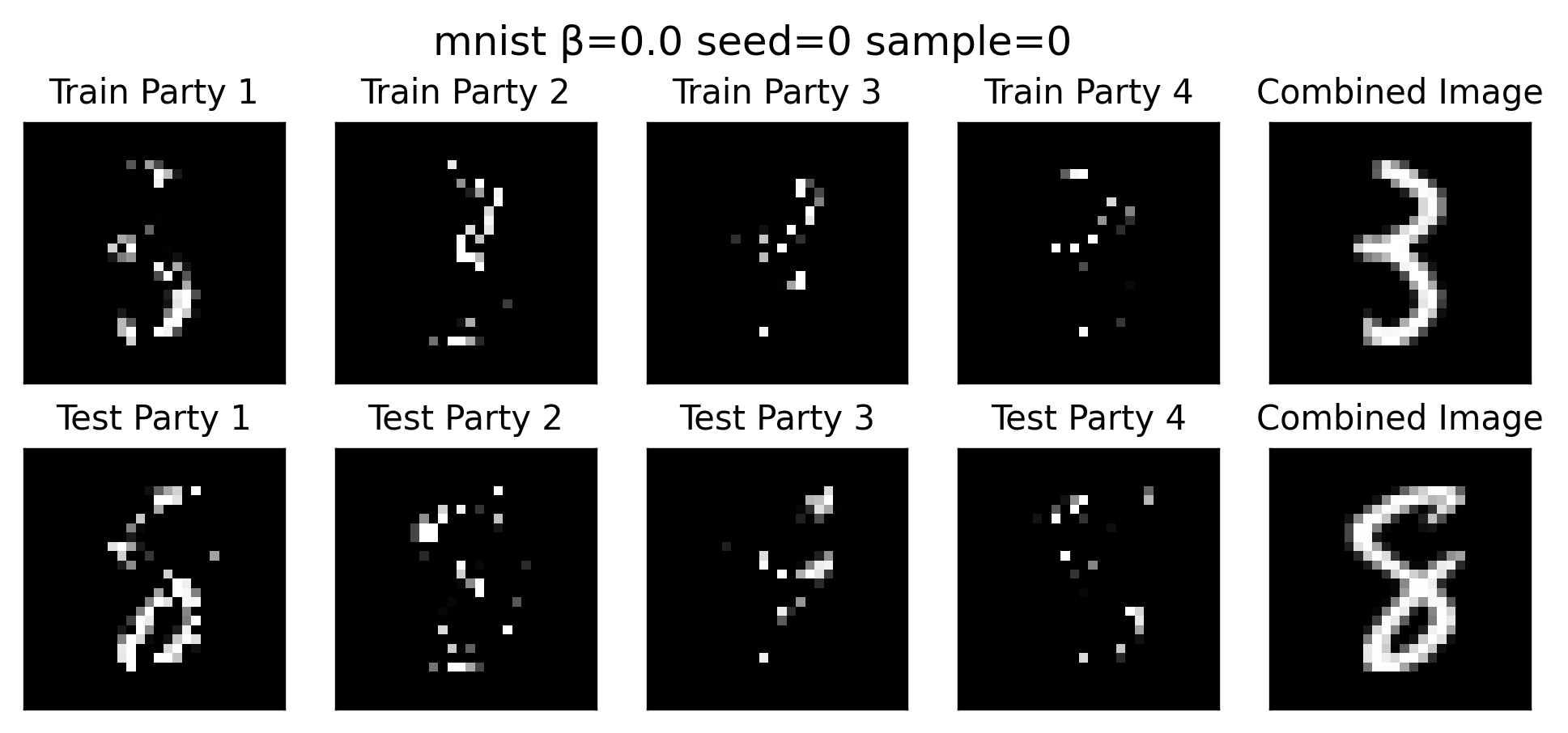}
    \includegraphics[width=0.48\linewidth]{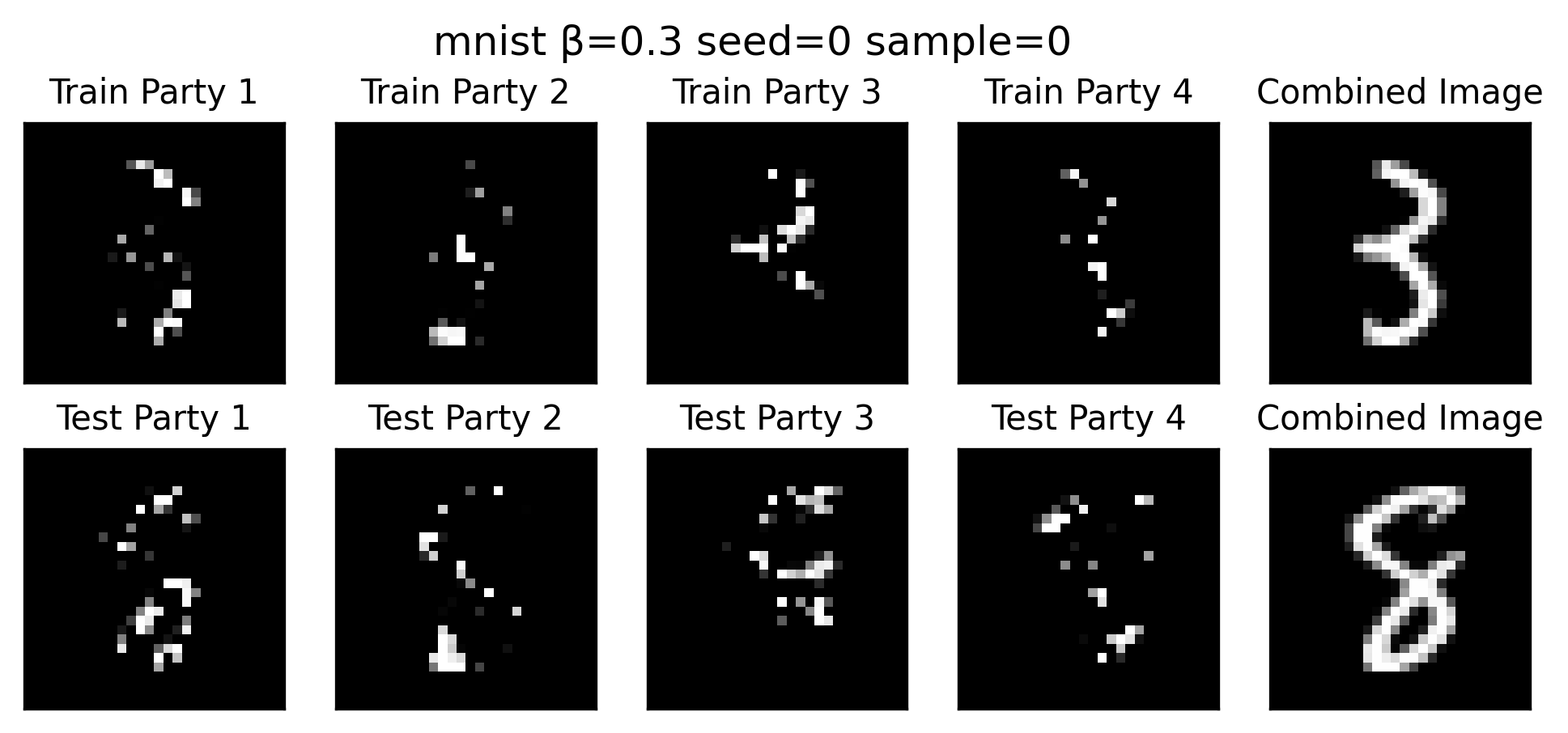}
    \includegraphics[width=0.48\linewidth]{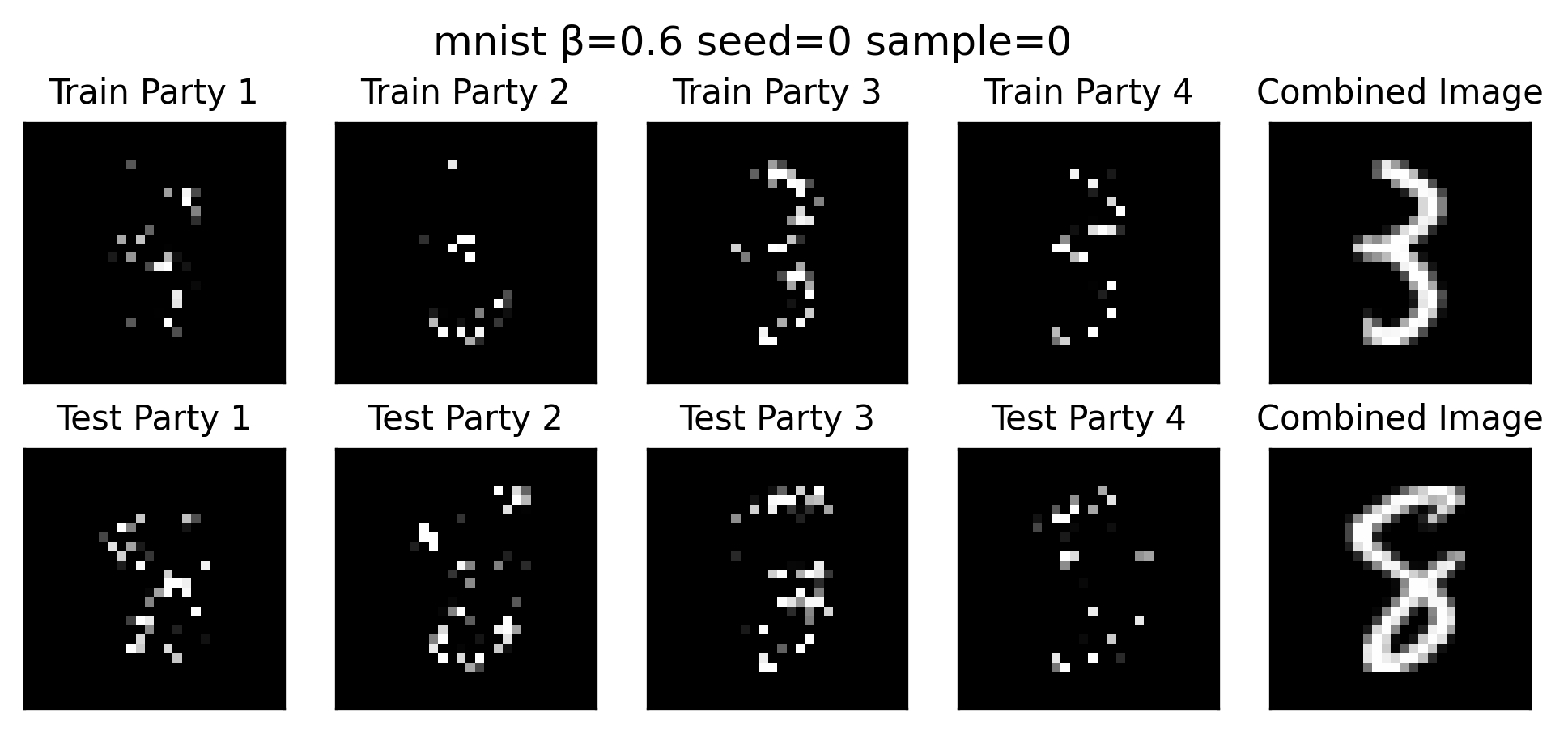}
    \includegraphics[width=0.48\linewidth]{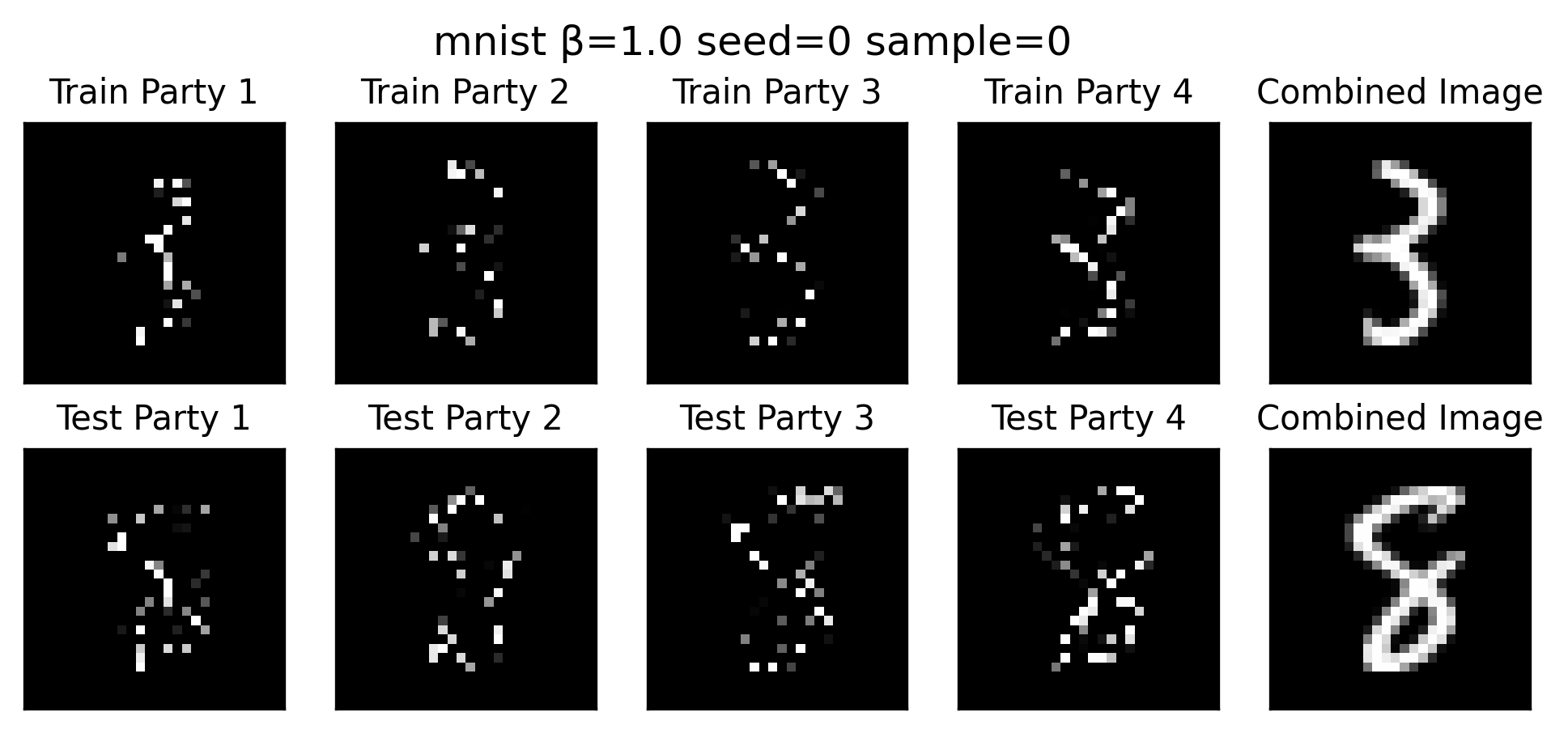}
    \caption{\revd{Visualization of \texttt{MNIST} split by different $\beta$}}
    \label{fig:visual_mnist_corr}
\end{figure}

\paragraph{Experiments.} 
\reva{
In our experimentation on image datasets, we utilize the ResNet18 architecture. Figure~\ref{fig:acc-image} presents the outcomes and suggests three new observations:}
\begin{itemize}
    \item \reva{\textbf{GAL consistently has leading performance, particularly on \texttt{CIFAR10}.} The performance of GAL can be attributed to the relatively large number of parameters of ResNet18. In contrast, the use of split-NN-based algorithms, which concatenate models, may introduce excessive parameters leading to overfitting.}
    \item \reva{\textbf{On imbalanced VFL datasets, the efficacy of GAL is reduced.} This decrease in performance might be due to less-informative parties introducing noise during the boosting process.}
    \item \reva{\textbf{C-VFL demonstrates a close performance to SplitNN in image classification tasks}, which is potentially rooted in the sparsity of image representations.}
\end{itemize}

\begin{figure}[htpb]
\centering
    \includegraphics[width=0.24\linewidth]{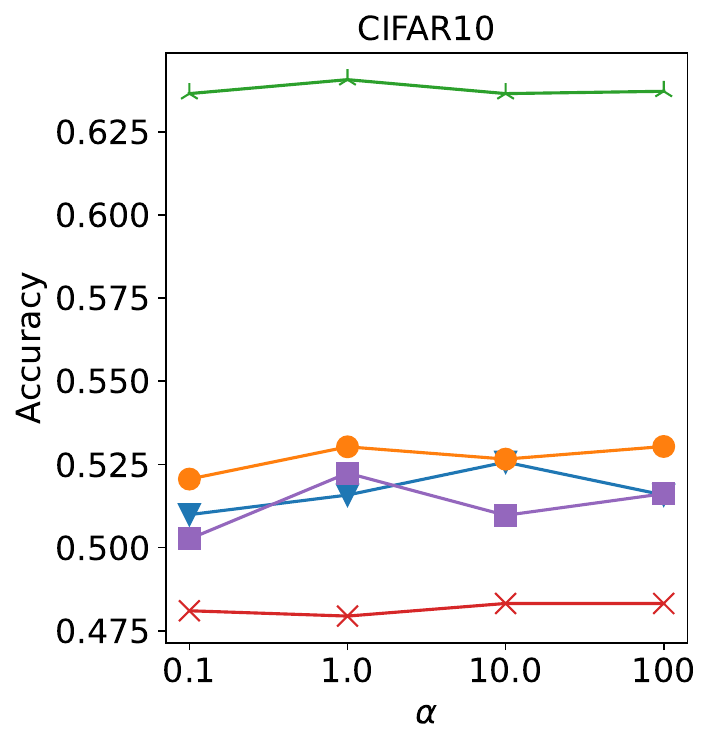}
    \includegraphics[width=0.24\linewidth]{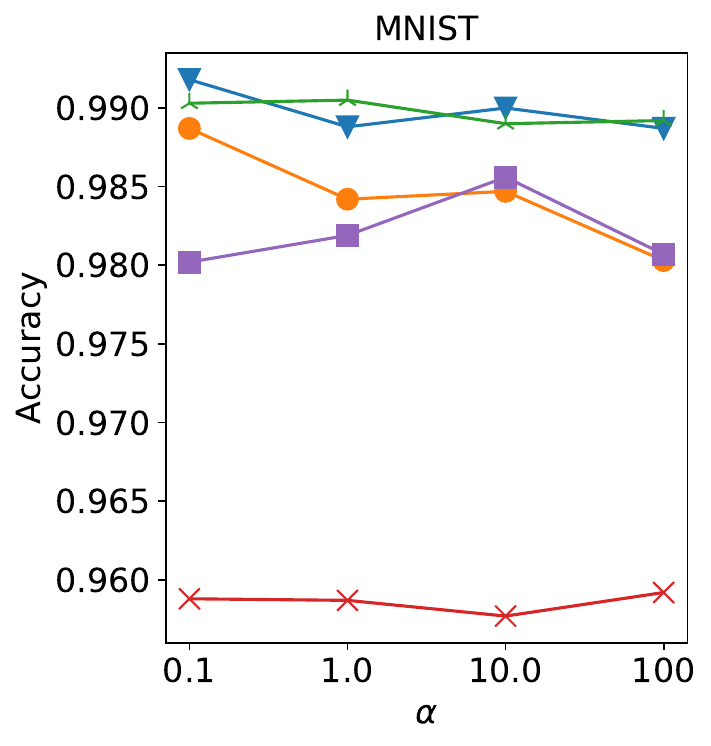}
    \includegraphics[width=0.24\linewidth]{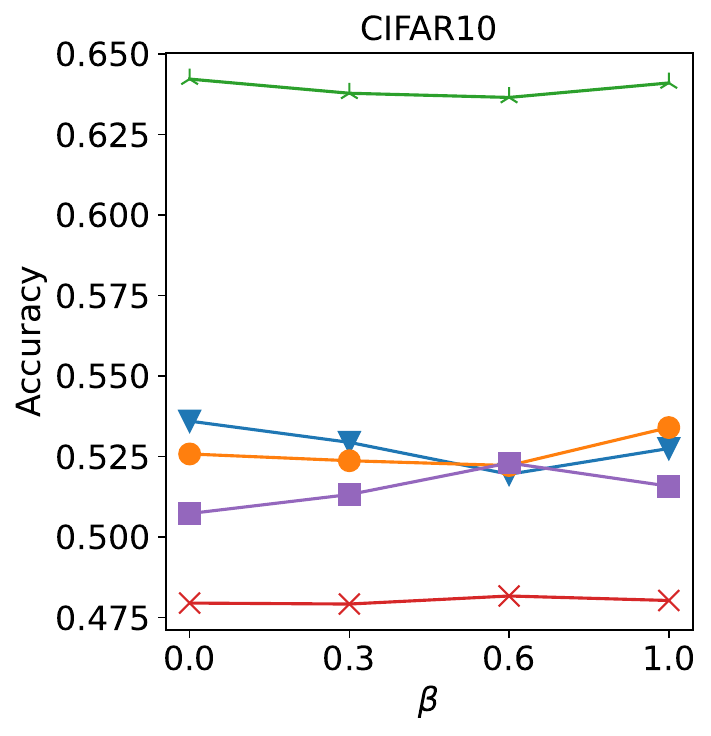}
     \includegraphics[width=0.24\linewidth]{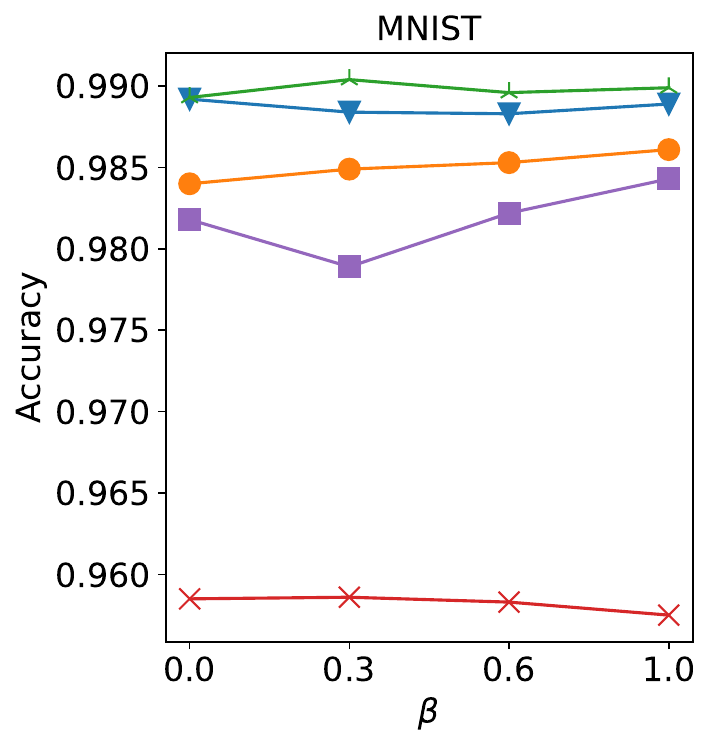}
    \includegraphics[width=0.60\linewidth]{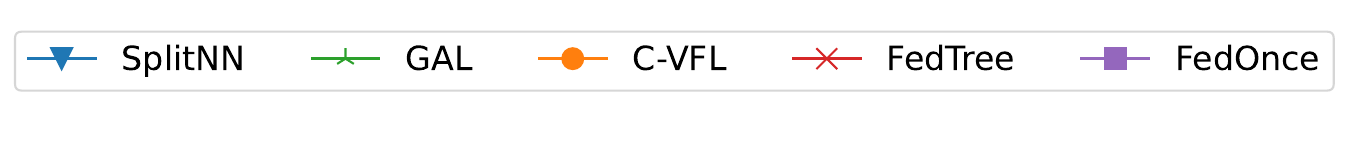}
    \caption{\reva{The accuracy of \texttt{MNIST} and \texttt{CIFAR10} using ResNet18} \revd{varying imbalance($\alpha$) and correlation($\beta$)}}
    \label{fig:acc-image}
\end{figure}

\subsection{Performance Correlation: Synthetic vs. Real VFL Datasets (Additional)}\label{sec:connect-real-syn-apdx}

In this subsection, we extend the experiments of Section~\ref{sec:connect-real-syn} and showcase the detailed performance on each dataset on real and synthetic datasets in Figure~\ref{fig:syn-real-cmp}. Our experiments yield two primary observations, \revc{suggesting the alignment of VeritBench scope and the real scope.} Firstly, the relative performance ranking of VFL algorithms remains consistent across real and synthetic datasets. Secondly, for algorithms sensitive to data splits like VFL, the trends observed on synthetic datasets generally align with real ones. For instance, C-VFL exhibits superior performance on the balanced two-party \texttt{Vehicle} dataset compared to the imbalanced five-party \texttt{NUS-WIDE}. Similarly, it fares better on balanced two-party synthetic data than on imbalanced five-party datasets, a pattern consistent across synthetic datasets derived from both \texttt{covtype} and \texttt{letter}.


\begin{figure}[htpb]
\centering
    \includegraphics[width=0.49\linewidth]{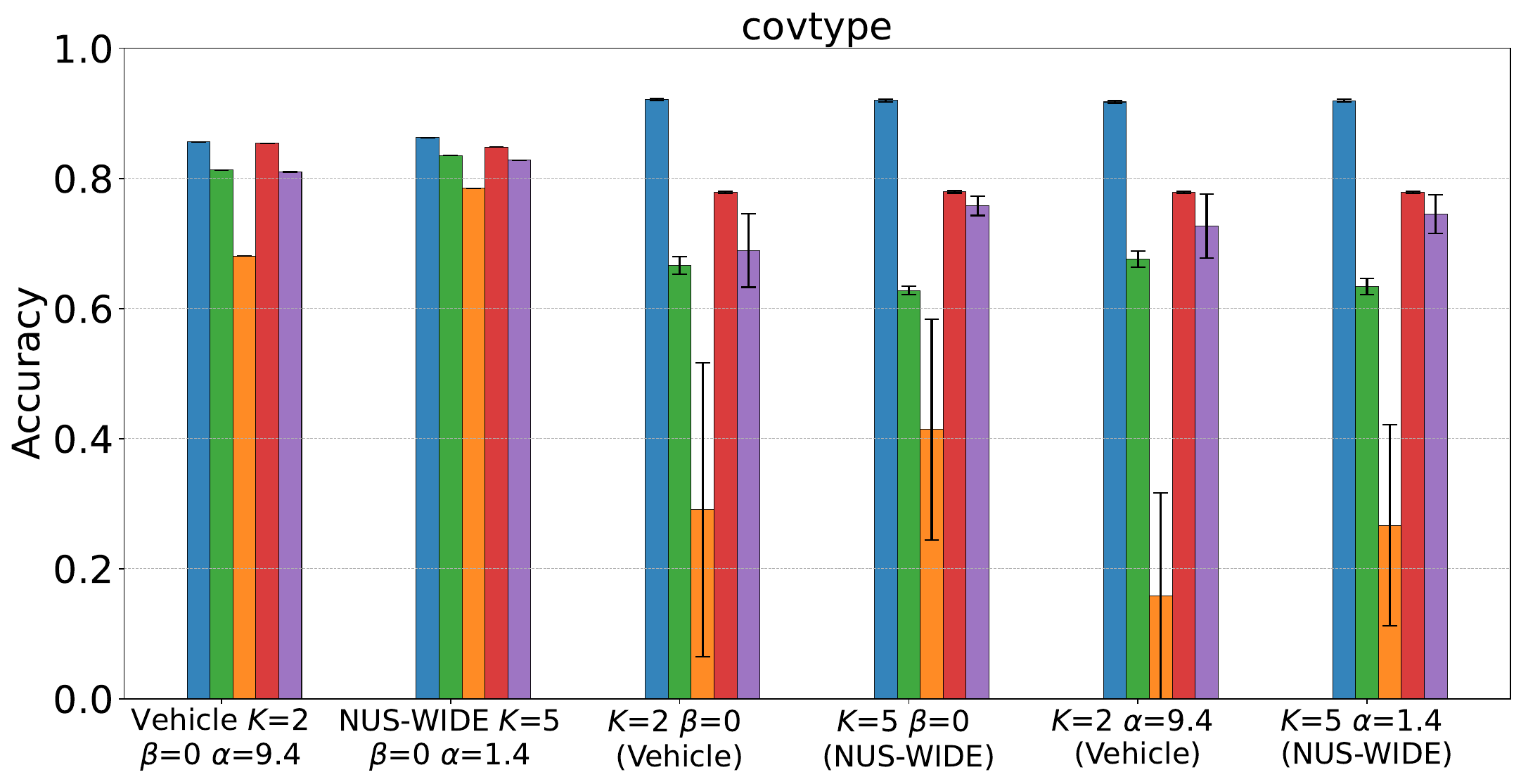}
    \includegraphics[width=0.49\linewidth]{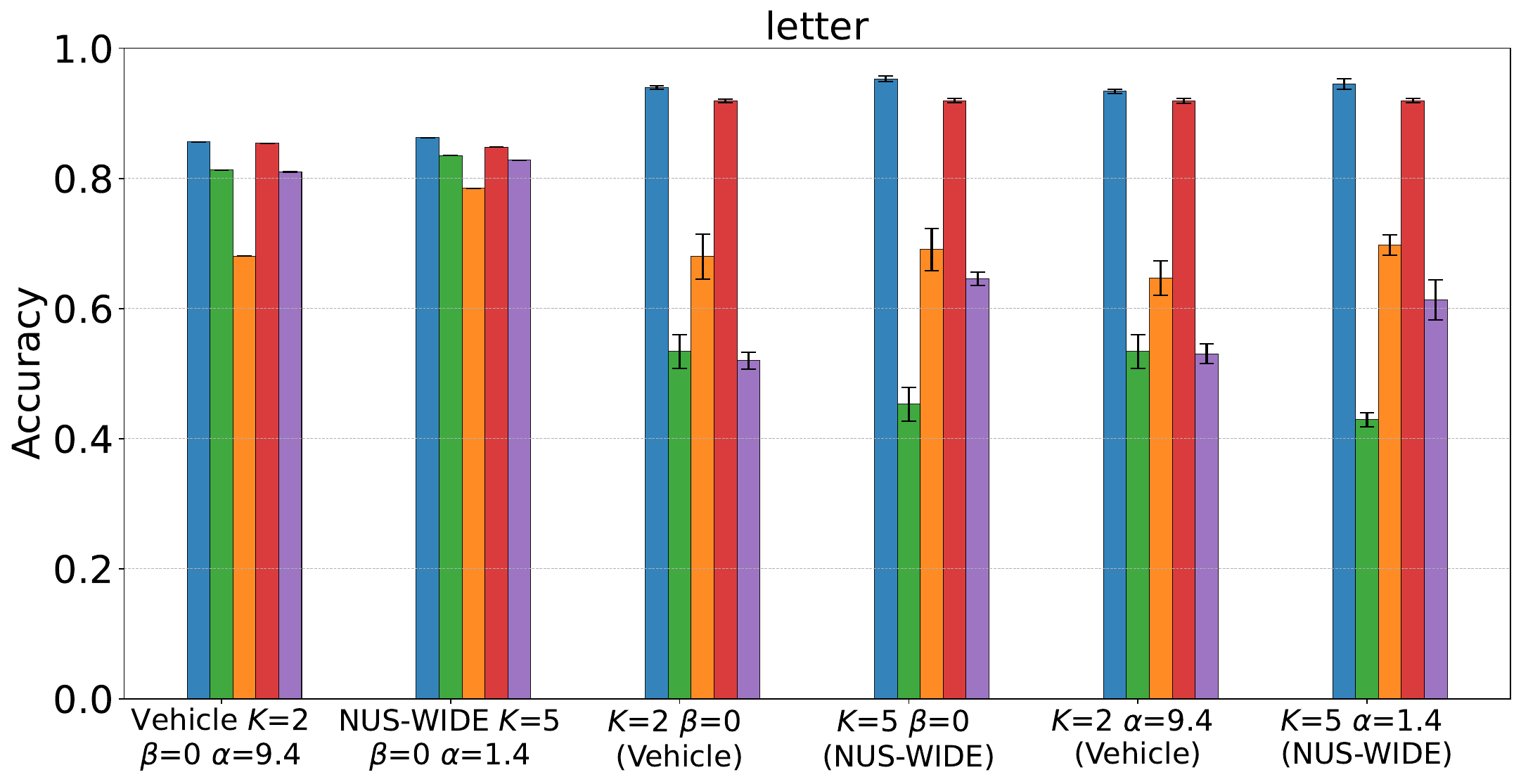}
    \includegraphics[width=0.49\linewidth]{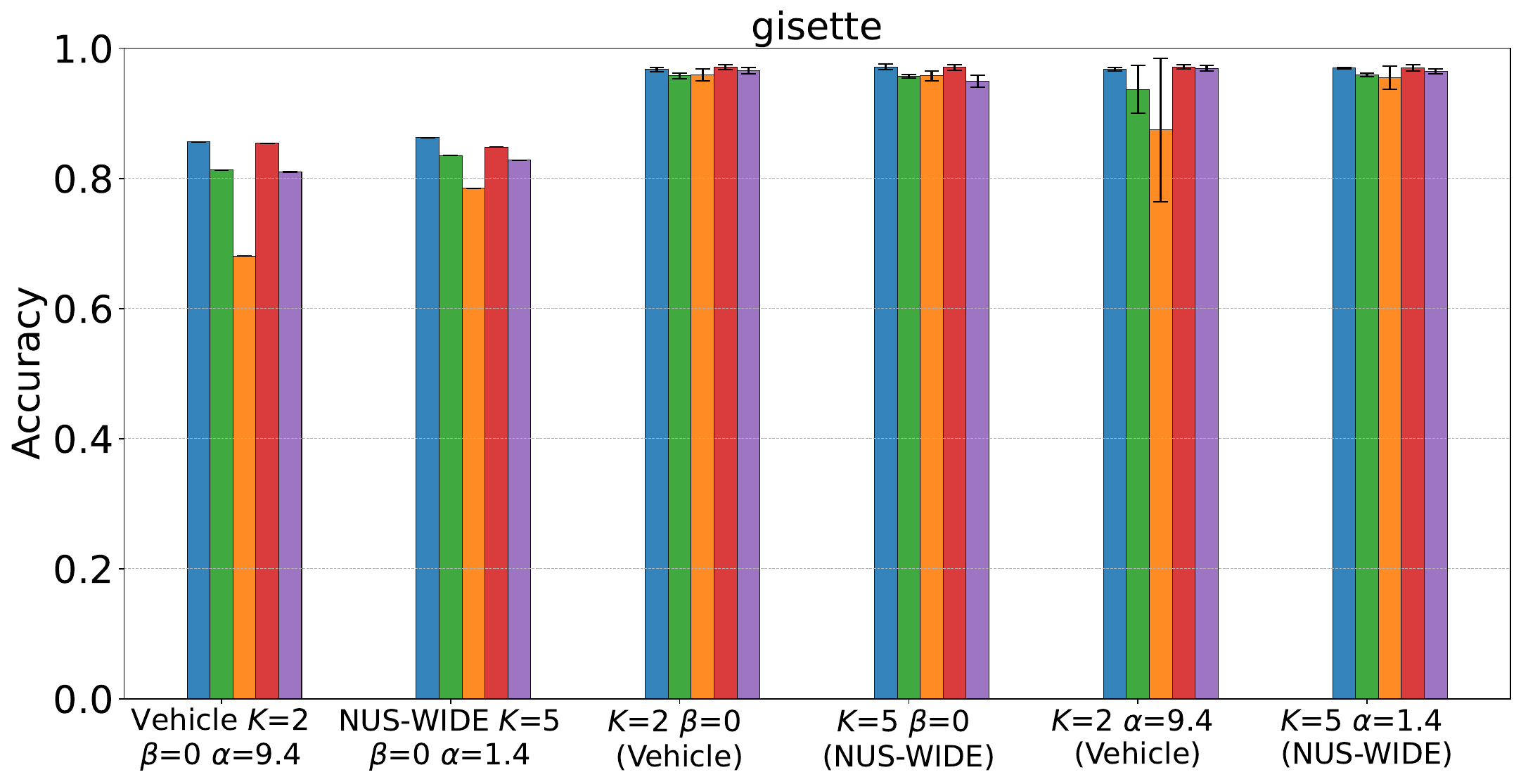}
    \includegraphics[width=0.49\linewidth]{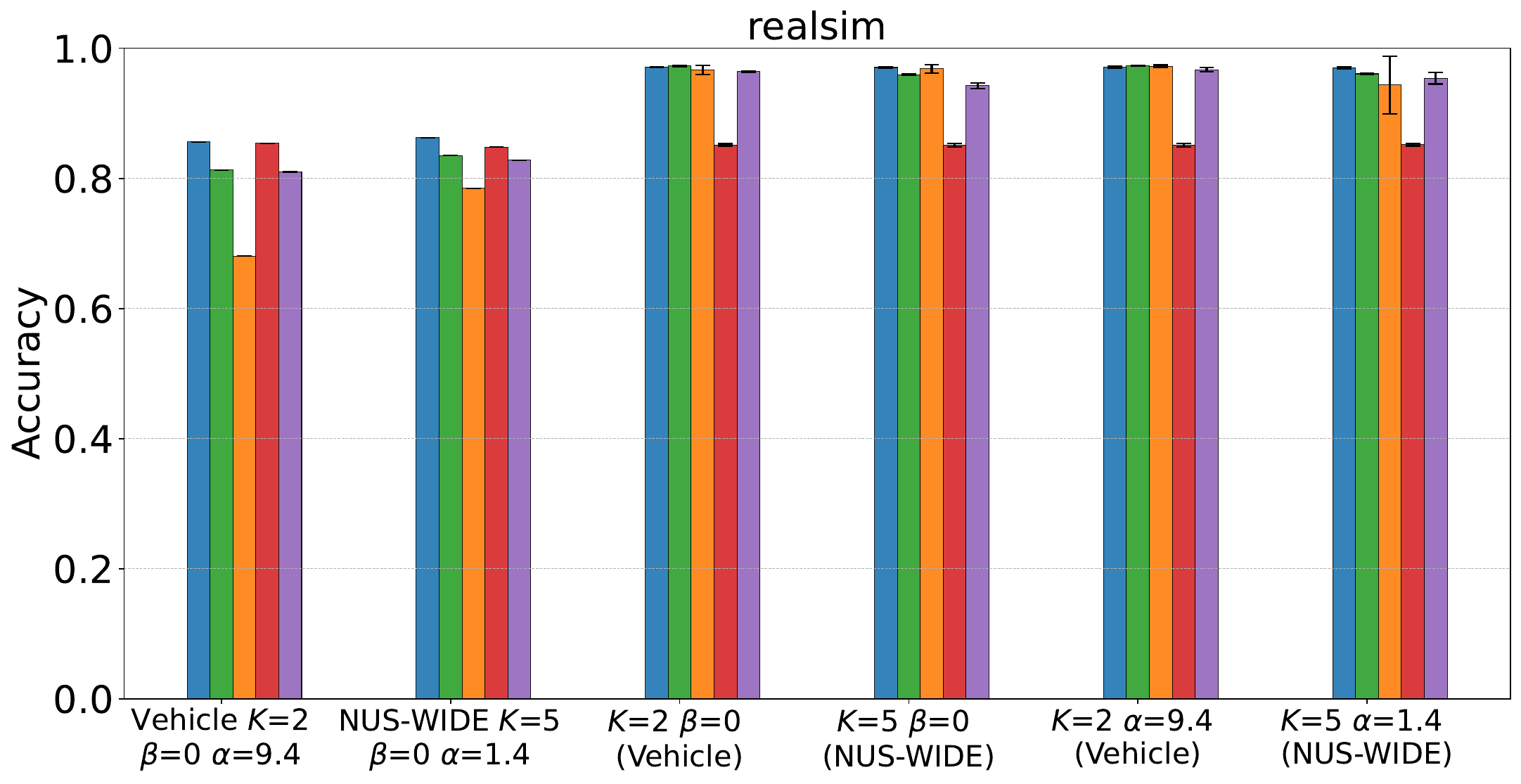}
    \includegraphics[width=0.49\linewidth]{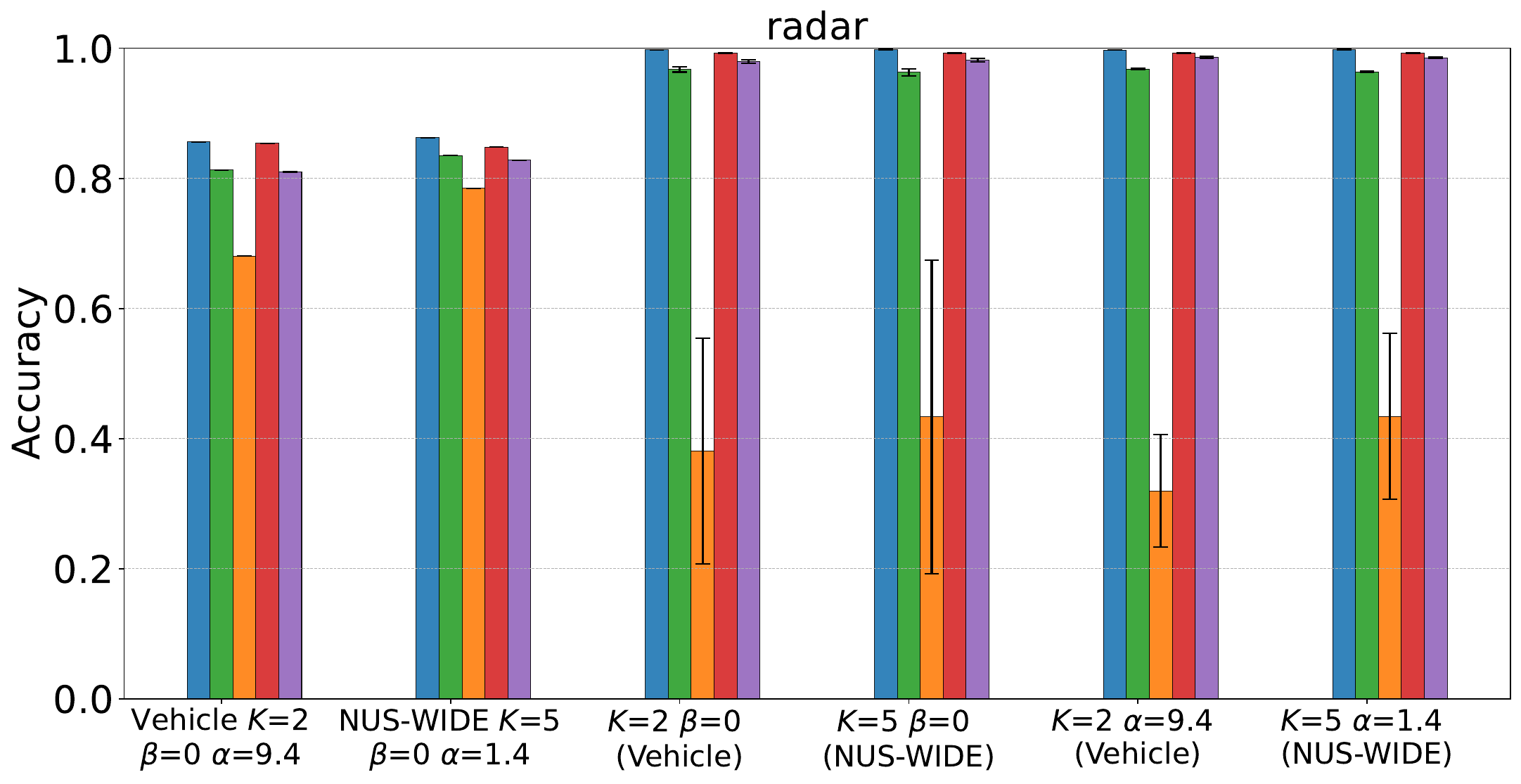}
    \includegraphics[width=0.49\linewidth]{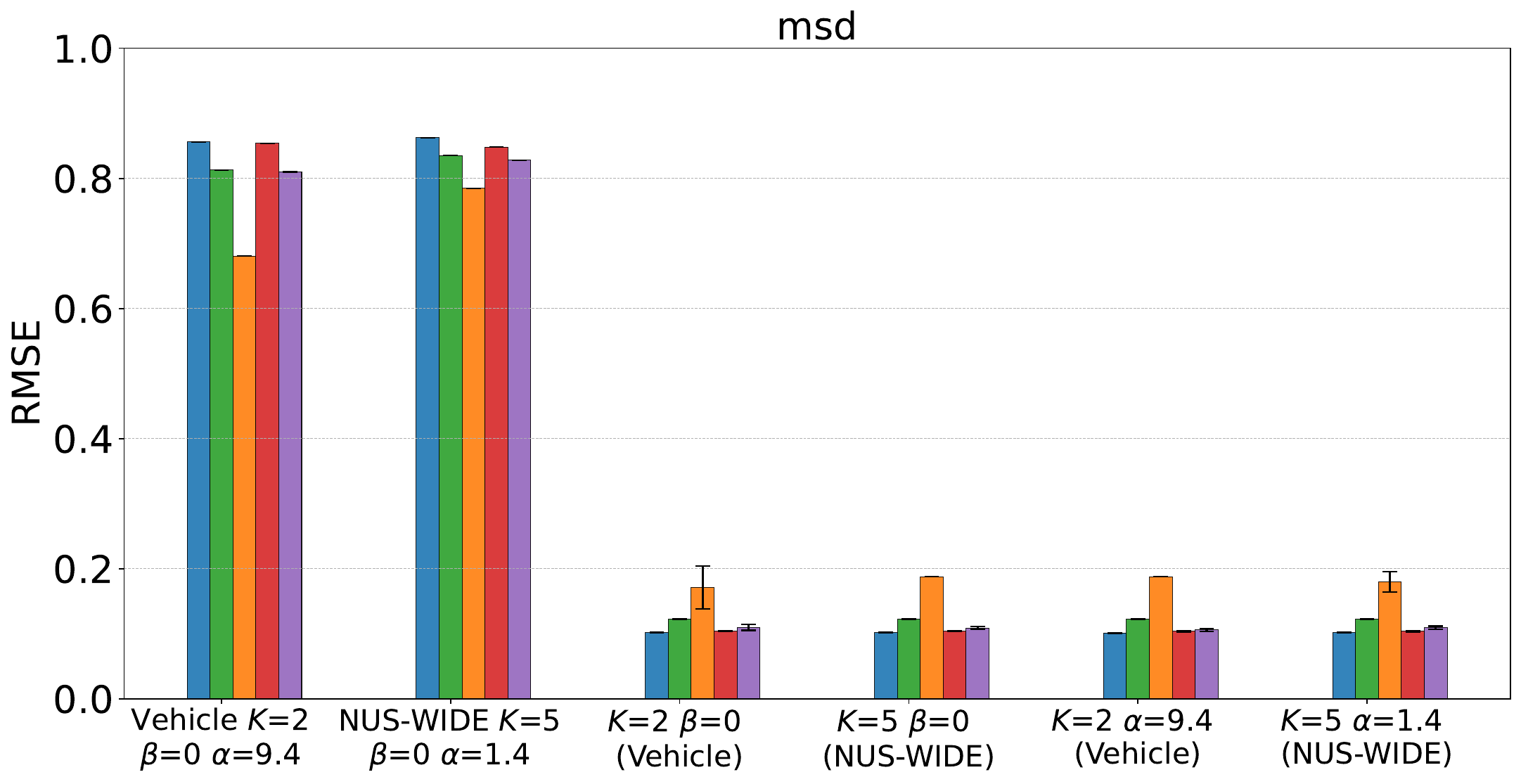}
    \includegraphics[width=0.49\linewidth]{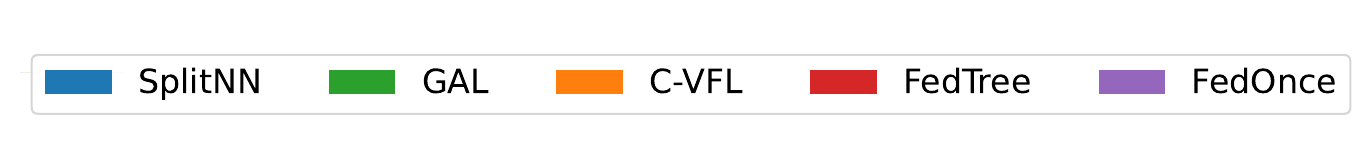}
    \caption{\revd{Performance of VFL algorithms on real vs. synthetic datasets with consistent $\alpha,\beta$}}
    \label{fig:syn-real-cmp}
\end{figure}

\subsection{Variance of Performance}

\revd{In our experiment, we examined the performance variance of VFL algorithms under a consistent $\alpha$ or $\beta$, revealing that VFL accuracy is affected by the randomness of feature partitioning. Figures~\ref{fig:std-exp} investigates the convergence of both mean and variance in algorithm performance as the number of test repetitions increased. Figure~\ref{fig:acc-dist} illustrate the performance distributions under the same $\alpha$ or $\beta$.}


\revd{Figure~\ref{fig:std-exp} reveals that both mean and variance stabilize after 5-10 iterations, showcasing distinct performance variance across different algorithms. Notably, C-VFL displays the highest variance, while SplitNN and FedTree demonstrate consistently low variance, underscoring the algorithm-dependent nature of performance variance. Additionally, Figure~\ref{fig:acc-dist} illustrates that most algorithms exhibit a concentrated distribution of results.}

\begin{figure}[htpb]
\centering
    \includegraphics[width=0.24\linewidth]{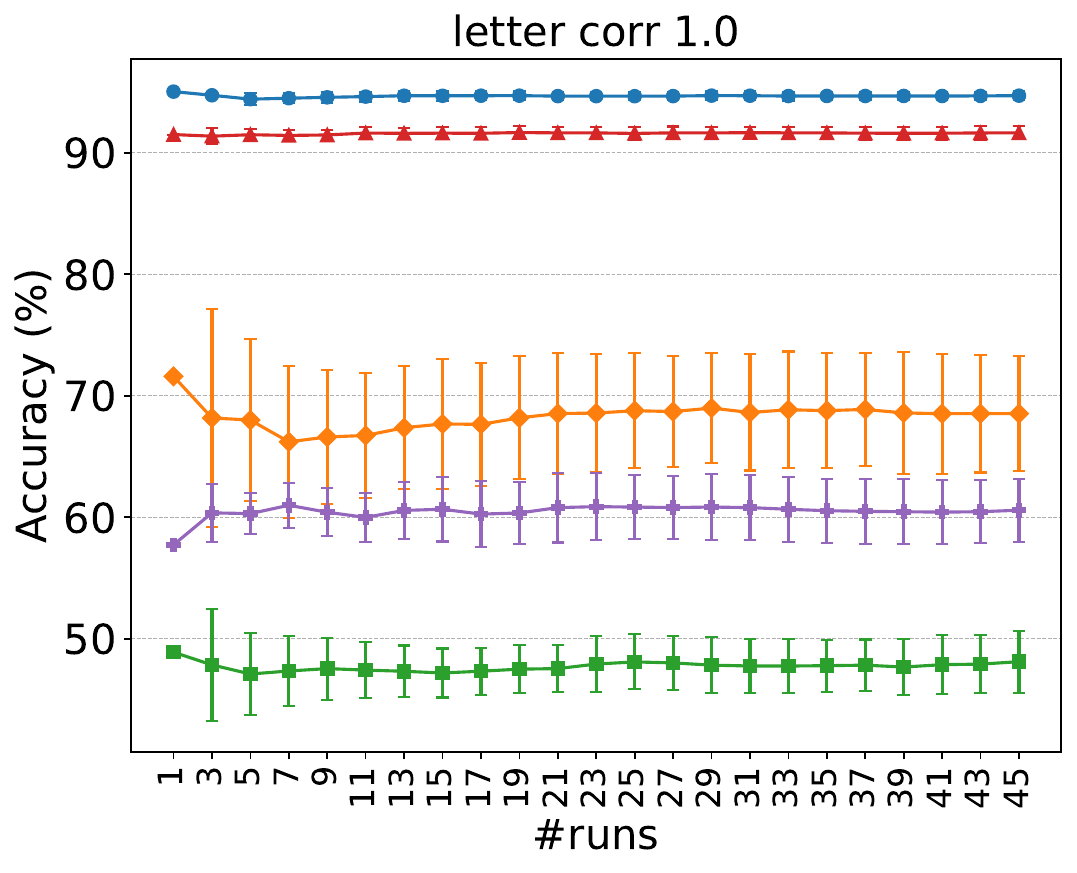}
     \includegraphics[width=0.24\linewidth]{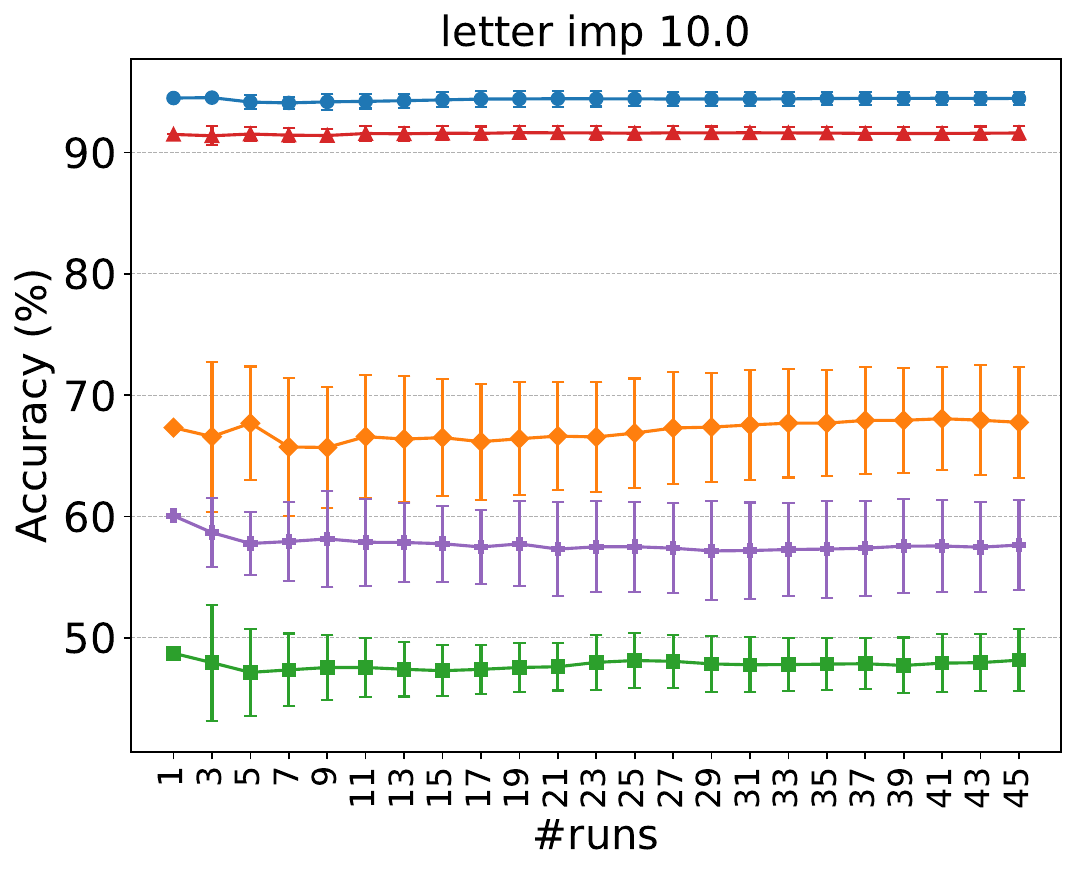}
     \includegraphics[width=0.24\linewidth]{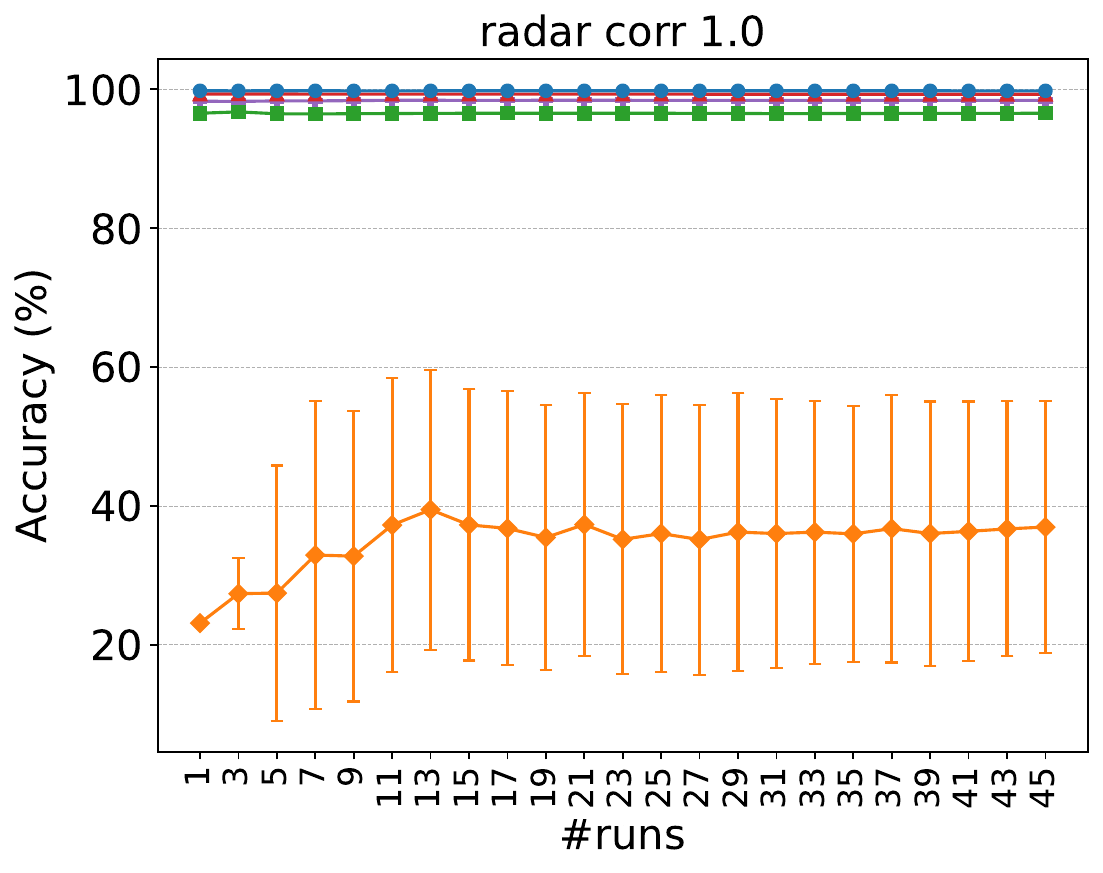}
     \includegraphics[width=0.24\linewidth]{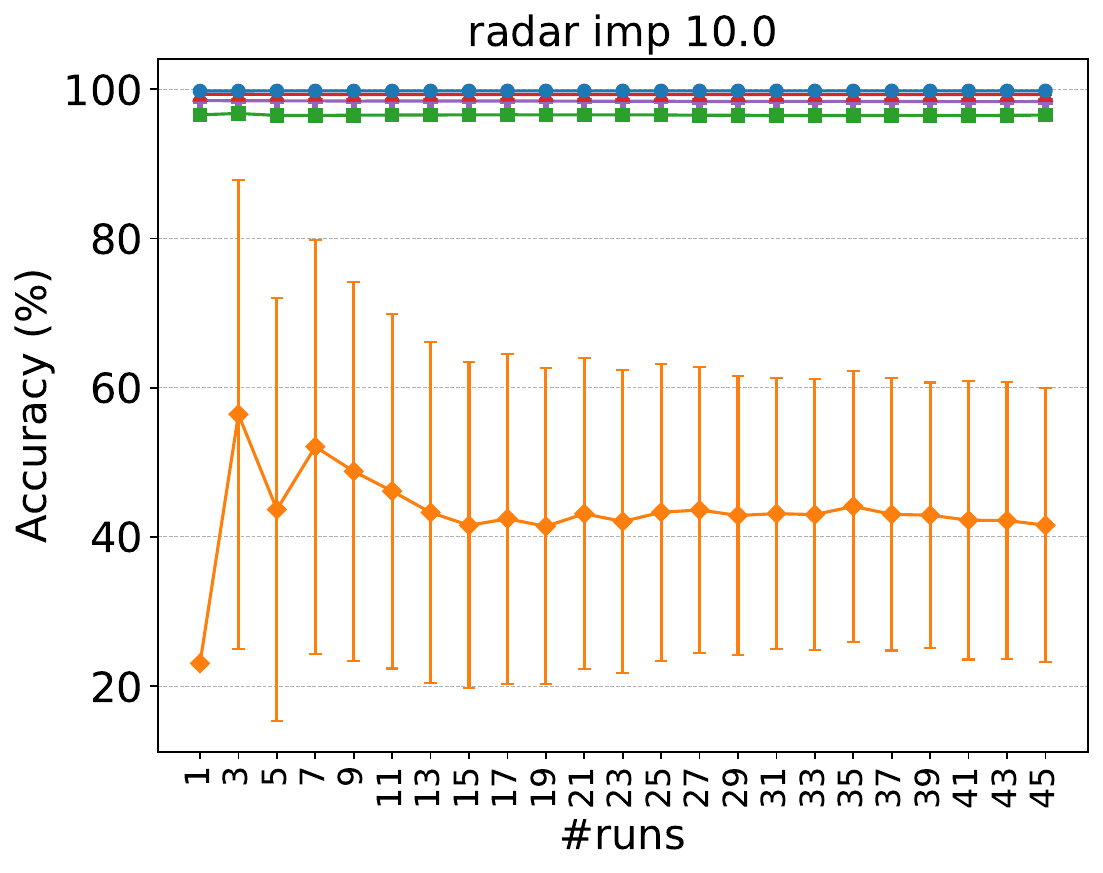}
     \includegraphics[width=0.48\linewidth]{fig/legend_5col_error_bar.pdf}
    \caption{\revd{Standard deviation of accuracy as number of seeds tested}}
    \label{fig:std-exp}
    \vspace{-8pt}
\end{figure}

\begin{figure}[htpb]
\centering
    \centering
    \begin{subfigure}{\linewidth}
        \includegraphics[width=\linewidth]{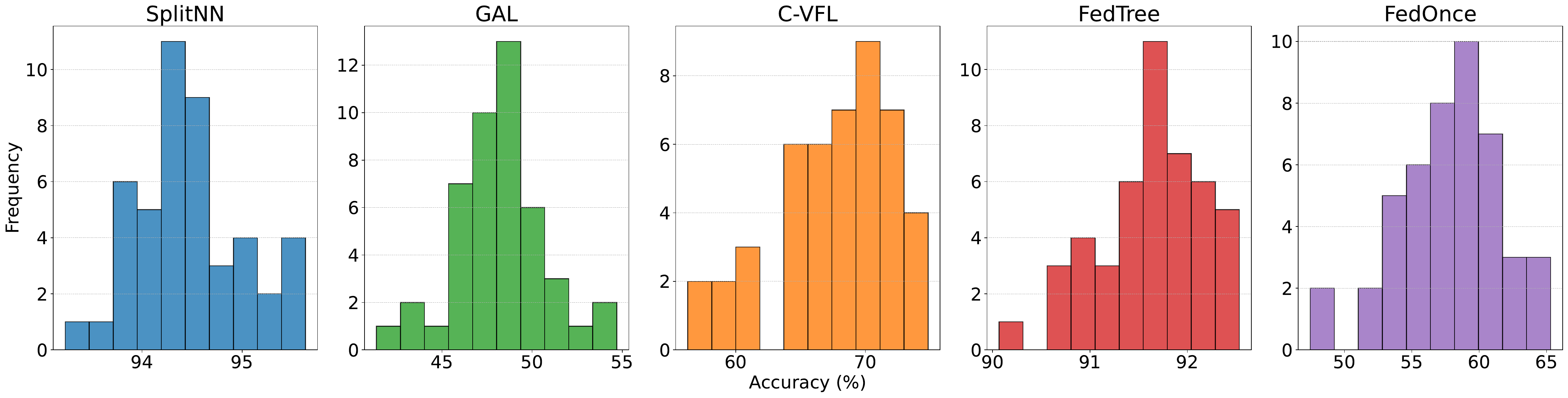}
        \caption{\revd{\texttt{letter} when $\alpha=10.0$}\vspace{5mm}}
    \end{subfigure}
    \begin{subfigure}{\linewidth}
        \includegraphics[width=\linewidth]{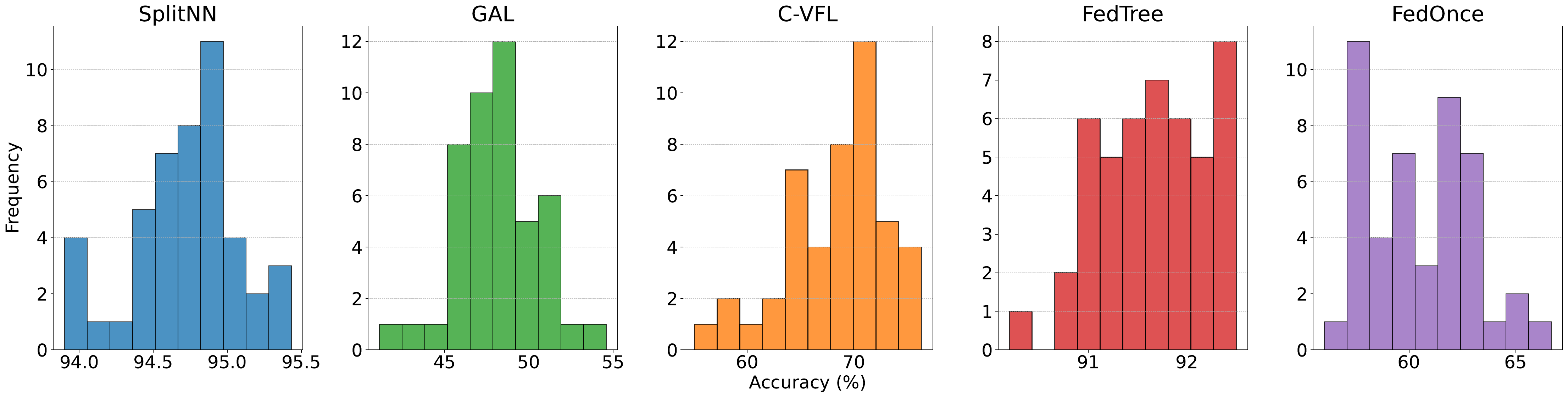}
        \caption{\revd{\texttt{letter} when $\beta=1.0$}\vspace{5mm}}
    \end{subfigure}
    \begin{subfigure}{\linewidth}
        \includegraphics[width=\linewidth]{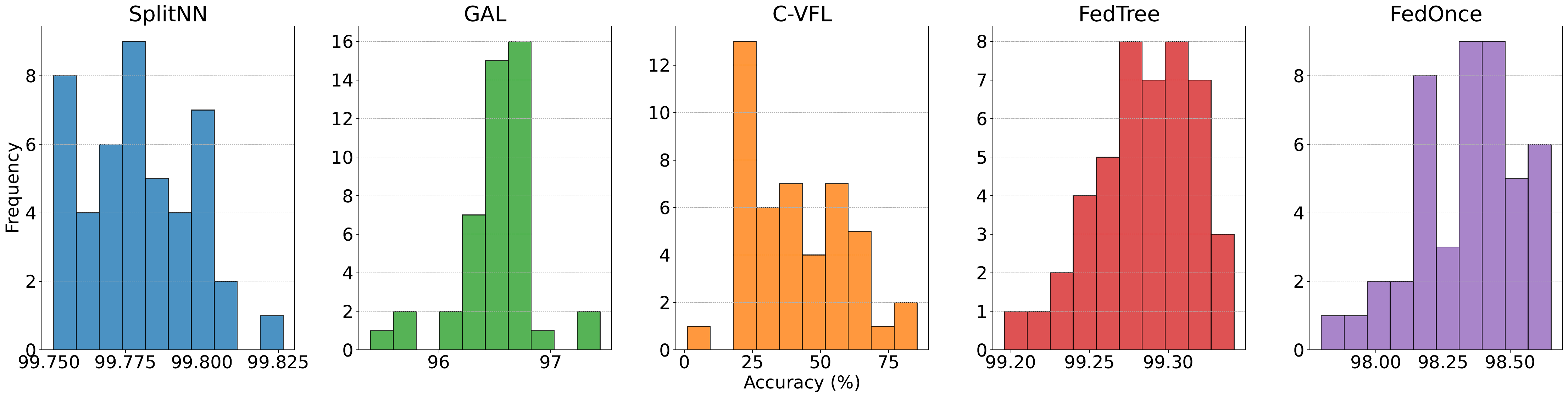}
        \caption{\revd{\texttt{radar} when $\alpha=10.0$}\vspace{5mm}}
    \end{subfigure}
    \begin{subfigure}{\linewidth}
        \includegraphics[width=\linewidth]{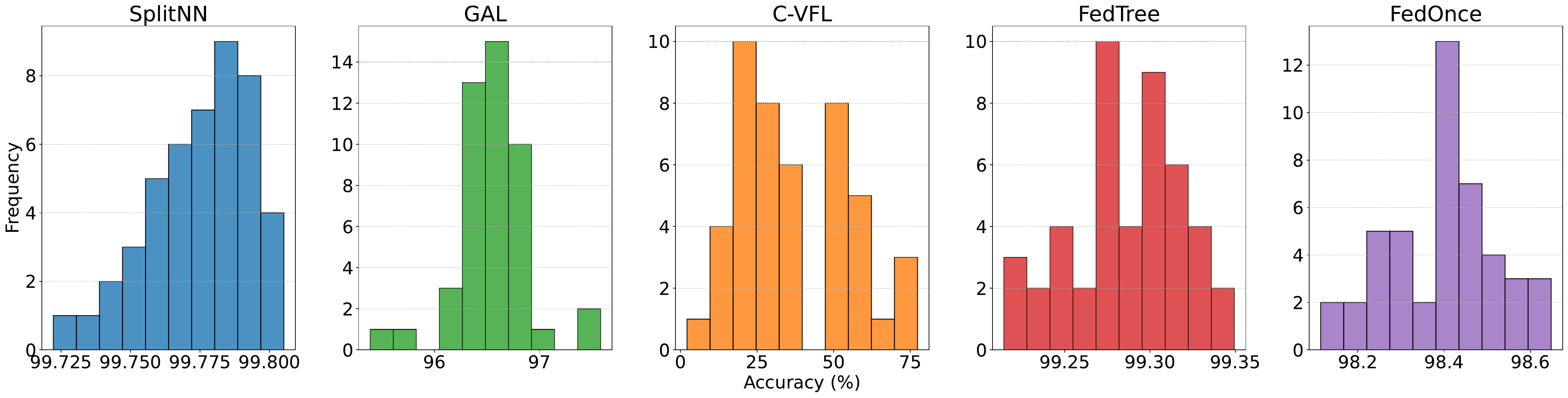}
        \caption{\revd{\texttt{radar} when $\beta=1.0$}}
    \end{subfigure}
    \caption{\revd{Accuracy distribution of VFL algorithms}}
    \label{fig:acc-dist}
    \vspace{-8pt}
\end{figure}

\subsection{Spearman Rank vs. Pearson}
\revd{In comparing Spearman rank and Pearson correlation coefficients, we find Spearman rank effective for non-linear correlations, while Pearson is suitable for linear correlations. Figure~\ref{fig:corr-spearman-vs-pearson} illustrates the performance differences on datasets split using these metrics. }

\revd{For linearly correlated datasets like \texttt{gisette}, Pearson's coefficient is preferable. It accurately captures the enhanced performance of C-VFL on highly correlated splits, a nuance Spearman rank-based splitting misses. Conversely, for non-linearly correlated datasets such as \texttt{radar}, Spearman rank-based correlation is more effective, highlighting variations in C-VFL performance that Pearson-based splits overlook. Therefore, the choice of correlation coefficient should align with the dataset's correlation characteristics.}

\begin{figure}[htpb]
    \centering
    \begin{subfigure}{0.5\linewidth}
        \includegraphics[width=\linewidth]{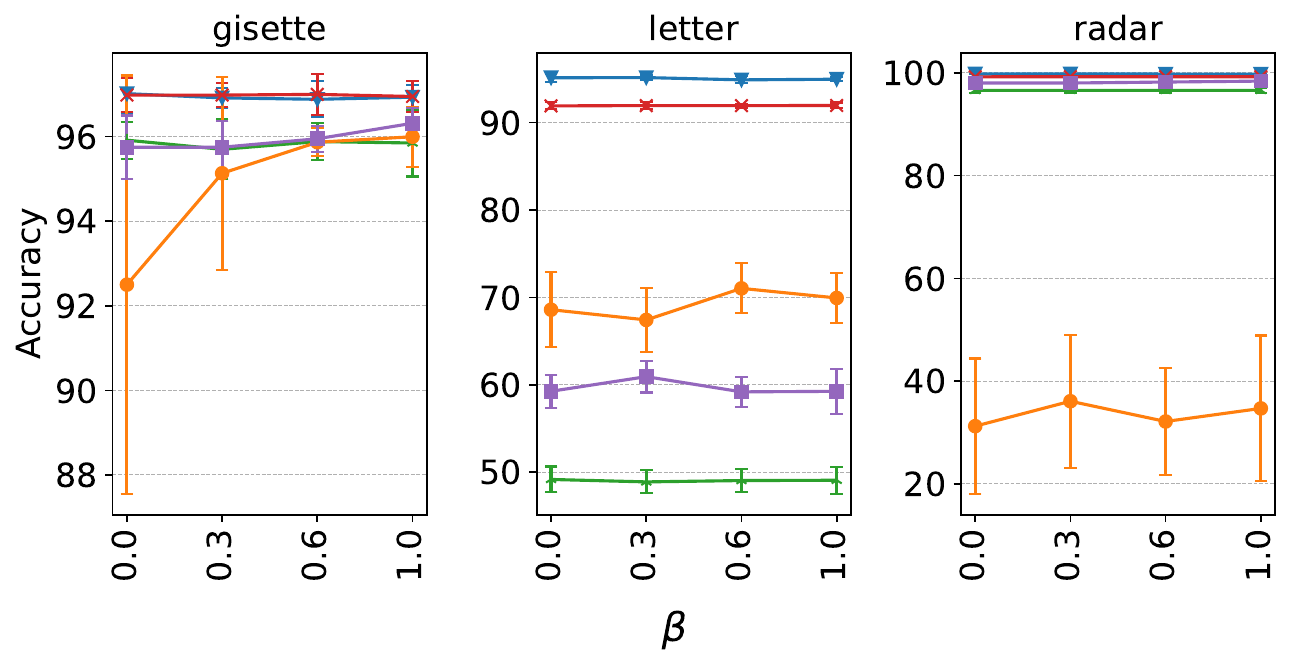}
        \includegraphics[width=\linewidth]{fig/legend_5col.pdf}
        \caption{\revd{Pearson}}\label{fig:corr-pearson}
    \end{subfigure}
    
    \begin{subfigure}{0.5\linewidth}
        \includegraphics[width=\linewidth]{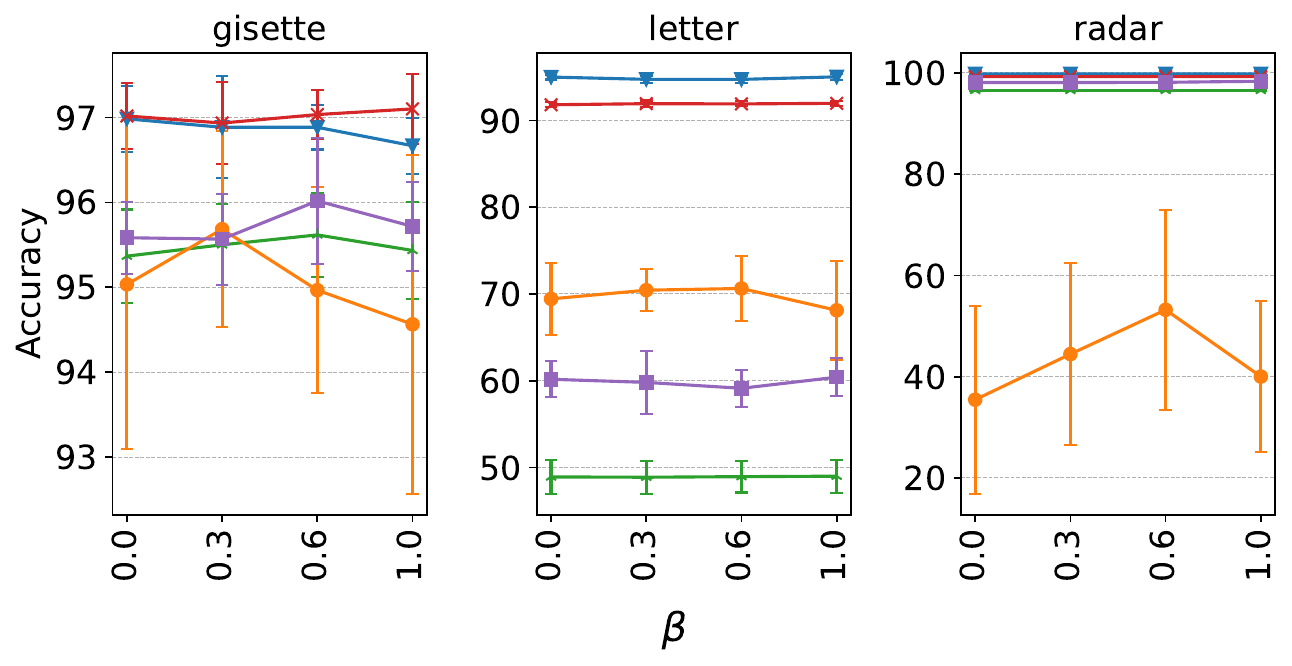}
        \includegraphics[width=\linewidth]{fig/legend_5col.pdf}
        \caption{\revd{Spearman rank}}\label{fig:corr-spearman}
    \end{subfigure}
    \caption{\revd{Accuracy of VFL algorithms on different datasets varying correlation metrics}}
    \label{fig:corr-spearman-vs-pearson}
\end{figure}

\section{Discussion}\label{sec:discussions}
This section discusses the limitations of VertiBench, shedding light on several areas where improvement is needed. Additionally, we engage in a discussion surrounding potential negative social impacts and personal privacy issues related to VertiBench.

\subsection{Limitations}\label{sec:limitation}
In this subsection, we outline the limitations of VertiBench, focusing on three primary aspects.

\paragraph{Scalability of correlation-based split.} The correlation-based split method that we propose may face efficacy and efficiency challenges when applied to a large number of parties. As the number of parties increases, the potential feature splits proliferate exponentially. This complexity presents a significant obstacle for optimization methods such as BRKGA~\cite{gonccalves2011biased}, making it challenging to locate the minimum and maximum Icor, as well as the optimal split that corresponds to the given $\beta$. This situation underscores the necessity for more advanced permutation-based optimization algorithms that can enable the correlation-based split method to scale out to a greater number of parties.

\paragraph{Relationship between importance and correlation.} Within VertiBench, we regard importance and correlation as two orthogonal factors impacting the feature split. However, this viewpoint might overlook the potential correlation that could exist between these two factors. For instance, in cases of highly imbalanced feature split, parties might demonstrate low inter-party correlation. As a result, a comprehensive benchmarking framework that simultaneously considers both importance and correlation is desired to provide a more rigorous evaluation of VFL algorithms.

\revd{Designing a benchmark that captures the intricate relationship between feature importance and correlation presents significant challenges, as certain levels of correlation and importance may be mutually exclusive. Designing a co-optimization algorithm for both feature importance and correlation while maintaining efficiency and explainability presents a significant challenge. Balancing the demands of these metrics could lead to compromises in both computational efficiency and the explainability of the resulting data splits. Developing an approach that effectively integrates these factors without sacrificing these key aspects remains a complex, yet crucial task in advancing VFL benchmarking methodologies.}

\paragraph{Evaluation of privacy.} Although VertiBench assesses performance, efficiency, and communication cost, it does not provide a quantitative evaluation of privacy. The high performance observed with SplitNN could potentially come at the cost of privacy, while the markedly high overhead of Pivot might be attributed to its robust privacy requirements. The task of quantitatively evaluating the privacy of different VFL algorithms and models remains an open problem, which we aim to tackle in future work.

\subsection{Social Impacts}\label{sec:social-impact}

\paragraph{Negative social impact.}
While VertiBench primarily focuses on analyzing and comparing existing methodologies, and hence is less likely to cause additional negative social impact, the potential for biased interpretation of our experimental results could inadvertently mislead future research or applications. Specifically, we emphasize that the superior performance of non-encrypted methods such as SplitNN and GAL does not necessarily indicate that they are fit for immediate deployment in real-world VFL applications. The privacy concerns arising from the transfer of residuals or representations require further investigations. A quantitative benchmark on privacy is a critical prerequisite to deploying VFL approaches in real-world applications, which we plan to explore in the future research.

\end{document}